\newcommand{\op}[1]{\operatorname{#1}}
\newcommand{\gv}[1]{\ensuremath{\mbox{\boldmath$ #1 $}}} 
\newcommand{\grad}[1]{\gv{\nabla} #1} % for gradient
\let\baraccent=\= % rename builtin command \= to \baraccent
\renewcommand{\=}[1]{\stackrel{#1}{=}} % for putting numbers above =
\newtheorem{thm}{Theorem}[section]
\theoremstyle{definition}
\newtheorem{dfn}{Definition}
\theoremstyle{remark}
\newtheorem{rmk}{Remark}
\newcommand{\mcA}{\mathcal{A}}
\newcommand{\mcC}{\mathcal{C}}
\newcommand{\mcM}{\mathcal{M}}
\newcommand{\mcO}{\mathcal{O}}
\newcommand{\mcS}{\mathcal{S}}
\newcommand{\mbbE}{\mathbb{E}}
\newcommand{\mbbN}{\mathbb{N}}
\newcommand{\mbbR}{\mathbb{R}}
\newcommand{\pre}{\text{pre}}
\newcommand{\env}{\text{env}}
\title{Coagent Networks Revisited}
\author {
    % Authors
    Modjtaba Shokrian Zini\equalcontrib,
    Mohammad Pedramfar\textsuperscript{\rm 1,}\equalcontrib,
  Matthew Riemer\textsuperscript{\rm 2},
  Ahmadreza Moradipari\textsuperscript{\rm 3},
    Miao Liu\textsuperscript{\rm 4}
}
\begin{document}

\maketitle

\begin{abstract}
Coagent networks formalize the concept of arbitrary networks of stochastic agents that collaborate to take actions in a reinforcement learning environment. Prominent examples of coagent networks in action include approaches to hierarchical reinforcement learning (HRL), such as those using options, which attempt to address the exploration exploitation trade-off by introducing abstract actions at different levels by sequencing multiple stochastic networks within the HRL agents. We first provide a unifying perspective on the many diverse examples that fall under coagent networks. We do so by formalizing the rules of execution in a coagent network, enabled by the novel and intuitive idea of execution paths in a coagent network. Motivated by parameter sharing in the hierarchical option-critic architecture, we revisit the coagent network theory and achieve a much shorter proof of the policy gradient theorem using our idea of execution paths, without any assumption on how parameters are shared among coagents. We then generalize our setting and proof to include the scenario where coagents act asynchronously. This new perspective and theorem also lead to more mathematically accurate and performant algorithms than those in the existing literature. 
% Furthermore, this new perspective leads us to introduce a new option framework, called Feedforward Option Networks. 
Lastly, by running nonstationary RL experiments, we survey the performance and properties of different generalizations of option-critic models.
% Lastly, we observe a stabilization effect in learning these models and provide a theoretical justification for not needing a target network in their training.
\end{abstract}

\section{Introduction}
%prior works (theory)
\textbf{Background and related works. }
Use of stochastic neural networks is commonplace in reinforcement learning (RL) to execute stochastic actions in the environment. However, hierarchical approaches to RL often conceptualize a notion of an abstract action that is not applied in the environment directly, but rather happens within the mind of an agent in order to decompose the problem. \textit{Coagent networks} \cite{thomas2011conjugate} formalize this learning problem where stochastic actions are taken within an agent's mind in the general case. Specifically, hierarchical RL (HRL) models such as Option-Critic (OC) \cite{bacon2017option} and Hierarchical OC (HOC) \cite{riemer2018learning} attempt to leverage abstract actions to learn complicated tasks. This requires networking multiple stochastic policies that learn and act cooperatively to maximize the return from the environment. 
% \cite{thomas2011conjugate} formalized this concept as \textit{Coagent Networks}, where coagent refers to a stochastic policy.
How to learn coagent networks was first studied in \cite{thomas2011policy}, wherein it was shown that the more biologically plausible approach to learning, i.e. REINFORCE \cite{williams1992simple}, is an unbiased estimator of the policy expected return when applied individually on each coagent. In other words, each coagent trained separately with REINFORCE on their own return is equivalent to REINFORCE on the entire network policy. \cite{kostas2020asynchronous} further generalized this to coagents acting (a)synchronously. However, all previous works assumed a disjoint set of parameters describing each coagent. This non-sharing assumption was lifted in the specific context of HOC \cite{riemer2020role}, with the the more advanced Actor-Critic algorithm implementation.

%prior works (experiment)
So far, we have discussed prior and related works on the theory of coagent networks. However, since the advent of coagent networks and its options variants, there have been continued efforts on experimenting with these models. The coagent policy gradient theorem (PGT) has been investigated by \cite{gupta2021structural} to train stochastic feedforward neural nets, where it was shown that while such networks struggle to achieve optimal results in supervised settings, they are more performant in non-iid settings compared to backprop. On the other hand, \cite{chung2021map} proposes an alternative to the REINFORCE called the MAP propagation algorithm on the same type of feedforward networks. This reduces the variance of updates, albeit decreasing computational efficiency and increasing bias in their experiments in OpenAI's Gym. The main advantage of coagent networks framework is the flexibility to design the network and the freedom of choosing the rules by which the network coagents operate. This is a powerful tool that, in theory, should allow the user to exploit the relational biases in the task by reflecting them in the network. One such famous design is that of options, which tries to exploit a decomposition of the optimal policy to multiple skills or phases, and it has by far garnered the most attention by the community. They have been evaluated under famous RL tasks, but also multi-tasking, nonstationary, and non-iid datasets. The popular OC has succeeded when applied to Q-learning on Atari \cite{bacon2017option}, or to continuous action spaces \cite{klissarov2017learnings} and asynchronous parallelization \cite{harb2018waiting}. \cite{riemer2018learning,riemer2020role} further improved OC's results under multi-tasking on Atari games by introducing the HOC architecture, and showed superior performance in nonstationary tasks. In addition, properties and ``qualities" of options has been an active area of research. One prominent issue is that of option length, where some become dominant while others become useless. \cite{harb2018waiting} formulates an answer to what a good option is through the notion of deliberation cost, therefore adding a cost to options that deliberate for too long. In addition \cite{khetarpal2020options,chunduru2020attention} develop models of interest and attention to keep the focus of options on certain (less overlapping) areas of state space, thereby encouraging skill decomposition by options.

%our work (theory)
\textbf{Contributions and novelties. } Now, we discuss a summary of main contributions and novelties of this work, on  theory, algorithms, and experiments. We first revisit the notion of a Coagent Network with the goal of addressing theoretical gaps in its online learning, and clarifying its theoretical foundation and scope of application. In the main theoretical \cref{sec:definition_CN}, we begin by presenting a 
\begin{itemize}
    \item novel definition of a coagent network that formalizes the rule $\Delta$ under which the network coagents cooperate.
\end{itemize}
This rule shows that actions within coagents establish a path of execution, which itself is a new intuitive representation of the network's overall action in the environment. This then facilitates the writing of the policy gradient as the sum of those of the coagents in \cref{CNPGT}. Using the concept of path of execution, we achieve
\begin{itemize}
    \item a much simpler proof than previous works for the general policy gradient theorem (PGT) of coagent networks, in addition to having no assumption on the sharing of parameters.
\end{itemize} 
Last but not least, we show how to extend our framework to the synchronous settings. This includes a generalization of the formalization of the rule $\Delta$, incorporating the possibility of simultaneous executions of coagents, and thus generalizing the concept of a simple path of execution to that of a directed feedforward graph of execution. Then, the asynchronous case is addressed using the so-called \textit{Markov trick}, to transform the asynchronous non-Markov MDP to the synchronous Markov MDP by augmenting the state space. This trick was also used in \cite{kostas2020asynchronous}.  We use this Markov trick to derive our general PGT for asynchronous networks. As such, we 
\begin{itemize}
    \item generalize the PGT to the (a)synchronous coagent networks framework (no assumption on parameter sharing).
\end{itemize}
While our framework is general and allows, for example, for cycles and loops in the execution path, it is important to outline precisely where and when this theorem applies. This is exhibited in \cref{sec:examples}, where in addition to exactly clarifying the theoretical constraints, we cite examples, previously known and new ones, such as Feedforward Options Network, and also non-examples for this purpose. 

Although our work emphasizes more on theory, we show examples of insights and improvements brought by our theory to the online policy learning algorithm and its experimental implementation. We follow the larger focus of prior works on experimenting with options among all coagent networks. We identify multiple areas of improvements in the previous algorithms derived from the PGT for options networks (\cref{sec:algorithm}). First and foremost, we realize that in all previous works, the update takes place on all options at each time step, instead of only when each is called back, which is what the PGT states. Correcting this immediately enables
\begin{itemize}
    \item a faster runtime and learning time on the order of the number of options.
\end{itemize} 
Then, we identify an incorrect update of the termination function that has been used in the literature and argue how it artificially worsens the issue of early option termination.
\begin{itemize}
    \item We address the early option termination issue by our correction of the termination update and proper tuning of the temperature of termination functions, 
\end{itemize} 
thereby avoiding an additional hyperparameter (such as the deliberation cost) in our analysis. Furthermore, 
\begin{itemize}
    \item we argue theoretically how parents of options in the hierarchy are useful as target networks in the Q-learning for the AC algorithm,
\end{itemize} 
(see more in \cref{rmk:11vs1_targetnetwork} and \cref{FON_app}), and we accordingly make stabilizing changes in the algorithm. We experiment extensively to validate our hypotheses on a nonstationary task using the OC/HOC and the new Feedforward Options Network. Overall, these allow us to show that our implementation outperforms the existing literature (\cref{sec:exp_performance,sec:optlengthandtermtemp}). Finally, we extensively investigate and show the improvements, brought by the changes we made, on the long term stability of training (\cref{appsec:more_on_exp}). Lastly, we attach our code as supplementary material for reproducing the experiments with our algorithmic improvements.
% This allows the user to build hierarchical option models with different configurations, such as arbitrary number of hierarchies, different connection patterns between levels of hierarchy, arbitrary number of options at each level, and the use of different learning algorithms and neural networks for each option.
      
\section{Preliminaries}
We will study policies acting in a Markov Decision Process (MDP): A tuple $(\mcS, \{\mcA_s\}_{s\in\mcS}, R, P, \gamma)$ where $\mcS$ is the set of states, $\mcA_s$ is the set of actions $a \in \mcA_s$ available at state $s$, $R : (s,a,s') \to r \in \mbbR$ is the reward function for a transition $(s,a,s')$, $P(s'|s,a)$ is the transition function, and $\gamma \in [0,1]$ is the discount factor.

In the problems we consider, each state contains not only information about the environment but also information about the \textit{state of the agent} $\pi$, describing something internal to the agent that is relevant to its execution. Thus, we will use the subscript $\pi$ in $\mcS_\pi$ to denote this augmented set of states, and use $\mcS_{\env}$ to exclusively refer to states of the environment. In both cases, the exact definition will be dependent on the context. For this part, we will simply use $\mcS$. Further, a subset $\mcS_{\op{init}} \subset \mcS$ is defined as the subset of possible initial states. 

At each time-step $t$, the Markov agent/policy $\pi$ acts on state $s_t \in \mcS$ by action $a_t \in \mcA_{s_t}$ with probability $\pi(a_t|s_t)$. The state then changes to $s_{t+1}$ with probability $P(s_{t+1}|s_t,a_t)$ and the reward $r_t:=R(s_t,a_t,s_{t+1})$ is received by the agent. We define 
\begin{itemize}
    \item the expected return of $\pi$ as $J_\pi := \mbbE_{\pi,\mcS_{\op{init}}}[\sum_{t=0}^{\infty} \gamma^t r_t ]$,
    \item the value function of $\pi$ as $V_\pi(s_0) := \mbbE_\pi[\sum_{t=0}^{\infty} \gamma^t r_t | s_0 ]$ which is the expected return given initial state $s_0 \in \mcS_{\op{init}}$,
    \item and the action-value function of $\pi$ as $Q_\pi(s,a) := \mbbE_\pi[\sum_{t=0}^{\infty} \gamma^t r_t | (s_0,a_0)]$ which is the expected reward given initial state-action pair $(s_0,a_0)$. 
\end{itemize}  
On-policy algorithms are one of the main class of algorithms used in online RL to learn the policy with the highest expected return. We can train the parameters $\theta$ describing $\pi$ by ascending the \textit{policy gradient} $\grad_\theta J_\pi$ written as follows:
\begin{align}\label{policygradient}
\grad_\theta J_\pi = \sum_{s_0 \in \mcS_{\op{init}}} d(s_0) \sum_{s\in \mcS} d(s|s_0) \sum_{a \in \mcA_s} \frac{d\pi}{d\theta}(a|s)Q_\pi(s,a).
\end{align}
Here, $d(s_0)$ is the initial distribution for state $s_0 \in \mcS_{\op{init}}$ and $d(s|s_0)$ is the \textit{discounted} state probability of reaching state $s$ from $s_0$, meaning
\begin{align}\label{discountedprobability}
    d(s|s_0)= \sum_{l=1}^{\infty} \sum_{\substack{(s_0,\ldots , s_l) \\ s_l= s}} \sum_{i=0}^{l-1} \gamma^i P_{\pi,\mcS}(s_{i+1}|s_i),
\end{align}
where $P_{\pi,\mcS}(s_{i+1}|s_i)$ is the transition probability of $s_i$ to $s_{i+1}$ given policy $\pi$ and transition function $P$ of $\mcS$. While the equations and theorems in this paper are written for a finite set of states and actions, it is straightforward to generalize them to the infinite case.

\section{Coagent Network}\label{sec:definition_CN}
In this section, we offer a new perspective on coagent networks by formalizing the rule of execution, benefits of which are shown in (1) deriving our main result, the policy gradient theorem, with an intuitive short proof, (2) generalizing it the (a)synchronous setting, and (3) (re)designing previously known and new coagent networks in the next section. We study an agent $\Pi$ composed of a network of Markov policies $\pi_o$, also called coagents, where at each time-step, rules set by the user determine the sequence of policies $(\pi_{o_1},\ldots,\pi_{o_k})$ that act.

\begin{dfn}\label{CNdfn}
Let $\Pi$ be a Markov policy acting within an MDP $\mcM$ with an augmented state space $\mcS_\Pi$ and action space $\mcA_x$ for $x \in \mcS_\Pi$. Let $\mcC_\Pi = \{\pi_o\}_{o \in \mcO_\Pi}$, where $\pi_o$ are Markov policies, indexed with a set of nodes $\mcO_\Pi$, with state and action space $\mcS_o, \mcA_{x_o}$ for $x_o \in \mcS_o$. $\Pi$ is a \textbf{Markov single reward coagent network} with coagent set $\mcC_\Pi$ if
\begin{itemize}
  \item Every state $x \in \mcS_\Pi$ uniquely determines a coagent $\pi_{o_1} \in \mcC_\Pi$ with state $x_{o_1} \in \mcS_{o_1}$, which will be the first coagent in a sequence that will define the action of $\Pi$. 
  \item This sequence is generated using a rule $\Delta$ designed and fixed by the user and not influenced by the parameters, and determines the next acting coagent, along with its input state, using the state $x_o$ and action $u_o$ of the previous coagent:
\begin{align}
    \Delta : (x_o,u_o) \to (o',x_{o'}) , \ \ x_{o'} \in \mcS_{o'}.
\end{align}
$\Delta$ can be stochastic if $o$ is the last coagent that performs a primitive action within a stochastic environment. \textit{Primitive actions} are those that change the state of the environment. In that case, $x_{o'}$ may not be deterministically determined. Moreover, $\Delta$ can also be stochastic in the selection of the next coagent $o'$. For simplicity, we shall assume that $\Delta$ is only stochastic in the former, and there is no randomness in the selection of $o'$.  
  \item Eventually, after finitely many applications of $\Delta$, a coagent applies a primitive action $a$ that leads to the next state $x' \in \mcS_\Pi$. Note this does not mean that the last coagent's action is exactly $a$, as it could contain additional information necessary for the next execution of $\Pi$. 
\end{itemize}
\end{dfn}
\textbf{Execution paths. }$\Pi$'s actions can be viewed as primitive actions, leading to the policy gradient (\ref{policygradient}). Alternatively:
\begin{dfn}\label{execpathdfn}
An action of $\Pi$ is an \textbf{execution path} $P=( (x_{o_i},u_{o_i}) )_{i=1}^k$ where $k$ coagents $\pi_{o_i}$ actions lead to a primitive action $a$, with the last action $u_{o_k}$ containing that information.
\end{dfn}
% From the environment, only the primitive action matters, but to derive a more useful policy gradient we will need to distinguish $Q_\Pi(x,a)$ and $Q_\Pi(x,P)$ as the former does not inform us of the status of the agent $\Pi$ while executing $a$, while the latter does so by providing $(x_{o_k},u_{o_k})$. 
As a result of this, we can compute $Q_\Pi(x,a)$ as $\mbbE_{P:u_{o_k} \text{leading to } a}[Q_\Pi(x,P)]$.

\textbf{Computing reward for coagents. }Assume $\pi_o$ acts on $x_o^{(0)}$ by $u_o$ and only acts again at $x_o^{(l)}$, meaning after $\Pi$ has gone through $l$ many execution paths. The \textit{transition tuple} for $\pi_o$ is defined as $(x_o^{(0)},u_o,x_o^{(l)})$. Similar to how the cumulative reward for $\Pi$ is computed using the rewards $R(x_t,a_t,x'_t)$ from its transitions, $R$ can also be used to uniquely define the reward functions for all coagents transitions. Let $\Pi$'s states and primitive actions in the time interval associated to $\pi_o$'s transition $(x_o^{(0)},u_o,x_o^{(l)})$ be $(x^{(i)},a^{(i)})_{i=0}^{l-1}$. The reward $R_o$ for the transition $(x_o^{(0)},u_o,x_o^{(l)})$ is 
\begin{align}\label{cumrewardpiodfn}
    R_o(x_o^{(0)},u_o,x_o^{(l)}) := \sum_{i=0}^{l-1} r_i\gamma^i,
\end{align} 
where $r_i = R(x^{(i)},a^{(i)},x^{(i+1)})$. Notice that if $\pi_o$ is never reactivated (i.e. $l=\infty$), then the reward corresponding to $(x_o^{(0)},u_o)$ is an infinite sum.
\begin{rmk}
The reward \textit{source} of all coagents is the same $R$, hence the name \textit{single} reward in our definition. The motivation and necessity of this assumption is explained later in \cref{rmk:singlereward}.
\end{rmk}
\textbf{Convention. }Throughout this text, we will use the term Coagent Network (CN) to refer to a Markov single reward coagent network.

Coagent networks have a natural graph representation.
% Since we use the terminology of coagent \textit{network} and execution \textit{path} and \textit{nodes} ($\mcO_\Pi$), it is natural to define a directed graph representation for the model when possible. 

\begin{dfn}\label{graph_rep}
The graph of a CN $\Pi$ has vertex set $\mcO_\Pi$ and directed edges which shows which coagent could follow the next in some execution path.
\end{dfn}
\begin{rmk}
A loop in a graph simply means that the coagent can act multiple times in a row. Furthermore, one can start with the most refined decomposition of $\Pi$ into stochastic policies and contract any edge to derive different graph representations. The policy gradient will obviously not change as $\Pi$ has not changed; indeed, as shown in our main theorem, the policy gradient for the new composite coagent decomposes to the sum of the two original ones. For an example of a coagent network graph, see \cref{treehocfigure1}.
\end{rmk}
% For a concrete example, we refer to HOC in \ref{hocpgt_rederived}, where there are two different graph representations related by the contraction of edges.

\subsection{Policy Gradient Theorem}
Let $\nabla_\theta J_\Pi$ be the policy gradient for $\Pi$ with parameters shared among coagents, written as:
\begin{align}\label{policygradientofPi}
    \grad_\theta J_\Pi= &\sum_{x_0 \in \mcS_{\Pi,\op{init}}} d(x_0) \sum_{x \in \mcS_\Pi} d(x|x_0) \nonumber \\
    &\sum_{P \in \mcA_x}\frac{d\Pi}{d\theta}(P|x) Q_\Pi(x,P),
\end{align}
where actions are viewed as execution paths $P=( (x_{o_i},u_{o_i}) )_{i=1}^k$ with $u_{o_k}$ implying a primitive action. We may drop $\theta$ from $\nabla_\theta$ as it is implied from the context.
\begin{thm}\label{CNPGT} The policy gradient is the sum of the policy gradient of the coagents:
\begin{align}\label{mastereq}
    \grad J_\Pi = \sum_{x_0 \in \mcS_{\Pi,\op{init}}} d(x_0) \sum_{o} \sum_{x\in S_{\Pi}, x_o \in \mcS_{\pi_o}} d(x_o,x|x_0) \nonumber \\
    \sum_{u_o \in \mcA_{x_o}}\frac{d\pi_o}{d\theta}(u_o|x_o)Q_{\pi_{o}}(x_{o},u_{o}).
\end{align}
Here, $d(x_o,x|x_0)$ is the discounted probability of reaching state $x$ from $x_0$, i.e. $d(x|x_0)$, multiplied by the probability of reaching $x_o$ from $x$ within a \textbf{single} execution path.
\end{thm}
\begin{proof}
Note $\Pi(P|x) = \prod_{o \in P}\pi_o(u_o|x_o)$ where $o \in P$ means every coagent $o_i, 1\le i \le k$. Rewriting (\ref{policygradientofPi}):
\begin{align}\label{eq:rewrite_policygradientofPi}
    &\sum_{x_0 \in \mcS_{\Pi,\op{init}}} d(x_0) \sum_{x \in \mcS_\Pi} d(x|x_0) \nonumber \\
    &\sum_{P \in \mcA_x}\frac{d}{d\theta}\Big( \prod_{o\in P} \pi_o(u_o|x_o) \Big )Q_\Pi(x,P).
\end{align}
Distributing the derivative over the product leads to
\begin{align}\label{dist_deriv}
    &\sum_{o} \Big( \prod_{o'\in P_{<o}} \pi_{o'}(u_{o'}|x_{o'}) \Big )\frac{d\pi_{o}}{d\theta}(u_{o}|x_{o}) \cdot \nonumber \\
    &\Big( \prod_{o'\in P_{>o}} \pi_{o'}(u_{o'}|x_{o'}) \Big ),
\end{align}
where $P_{<o},P_{>o}$ are the parts of $P$ before and after $o$. The first product over $P_{<o}$, summed over all possible $P_{<o}$ paths leading to $x_o$, gets absorbed by $d(x|x_0)$ and gives us $d(x,x_{o}|x_0)$ by definition. For the product over $P_{>o}$, summed over all possible $P_{>o}$ paths with state-action $(x_o,u_o)$ for $o$, gets absorbed by $Q_\Pi(x,P)$ and gives $Q_{\pi_{o}}(x_{o},u_{o})$. This is proved by iteratively using a Bellman-type equation for $\pi_o$ in terms of the next coagent in the execution path:
\begin{align}\label{bellmanCN}
    Q_{\pi_o}(x_o,u_o) = \sum_{u_{o'}} \pi_{o'}(u_{o'}|x_{o'})  Q_{\pi_{o'}}(x_{o'},u_{o'}),
\end{align}
where $o'$ is the next coagent according to $\Delta$. Applied iteratively, this gives $Q_{\pi_o}(x_o,u_o) =$
\begin{align}\label{bellmanCNfinal}
     \sum_{P_{>o}} \prod_{o' \in P_{>o}} \pi_{o'}(u_{o'}|x_{o'}) Q_{\pi_{l_{P_{>o}}}}(x_{l_{P_{>o}}},u_{l_{P_{>o}}}),
\end{align}
where the summation is over all paths $P$ that include $(x_o,u_o)$, and $l_{P_{>o}}$ is the index of the last coagent in $P_{>o}$, which action immediately leads to a primitive action, thus $Q_{\pi_{l_{P_{>o}}}}(x_{l_{P_{>o}}},u_{l_{P_{>o}}}) = Q_\Pi(x,P)$.
% Note how we implicitly used Markov condition in our proof. When the term $\frac{d\pi_{o}}{d\theta}(u_{o}|x_{o})$ is fixed and both sides are summed over independently, one is implicitly assuming that what has happened before and what happens after $\pi_{o}$ execution, given $(x_o,u_o)$, are independent, a property of Markov condition.
\end{proof}
\begin{rmk}
Our much shorter general proof and definition compared to those of previous works (which also had more assumptions on the model) show the benefit of formalizing the rule $\Delta$. In comparison, this avoids many of the calculations around the discounted probability distributions identities \cite{riemer2018learning,riemer2020role}, as one can use the graphical representation of the network to reason through the algebra by simply following the relevant paths of execution as demonstrated in the proof above.
\end{rmk}
\begin{rmk}\label{rmk:singlereward}
Note that our proof also uses the single reward source assumption, that $R$ determines all rewards $R_o$. Otherwise, if $R_o$ were to include any `pseudo' or `intrinsic' reward, as in the goal-based HAC model \cite{levy2017hierarchical}, relating $Q_\Pi$ to $Q_{\pi_o}$ as in (\ref{bellmanCN},\ref{bellmanCNfinal}) would not have been possible (see \cref{nonexamples_cn} for further details).
\end{rmk}
\begin{rmk}
The application of this theorem to any specific CN consists of (1) identifying the coagents and $\mcS_\Pi,\mcS_o$, and (2) computing the discounted state action probability $d(x_o,x|x_0)$ (see e.g. \cref{HOCPolicyGradientTheorem} for HOC).
\end{rmk}
\subsection{(A)Synchronous Coagent Networks}\label{appssec:synchronous}
Our definition of coagent networks needs to be slightly generalized to include the (a)synchronous setting \cite{kostas2020asynchronous}. In brief, the rule $\Delta$ needs to accommodate simultaneous executions for synchronous CNs. Then, the asynchronous setting is recovered by augmenting the state space. 

\textbf{Synchronous Coagent Network. }The rule $\Delta$ of a CN dictates the path of execution. In particular, our definition of a path of execution is a sequence of coagents states and actions. In the synchronous framework, the policy $\Pi$ is instead composed of a fixed feedforward network with $N$ layers of coagents where all coagents in the same layer execute simultaneously. The $i-$th layer's states $X_i = (x_{i,o})_o$ (with $o$ going over all coagents in layer $i$) is given by its previous layer outputs into $o$, called $U_{i,o}^\pre$ and the environment state $s$, i.e. $x_{i,o} = (U_{i,o}^\pre,s)$. Hence, a path of execution would be a sequence of \textit{sets} of coagents where coagents within the same set execute simultaneously, and their collective actions and states determine those of the next set. Therefore, the only change to our definition is in the rule $\Delta$, to become a function of the form $\Delta : \{(x_{i,o},u_{i,o})\}_{o} \to \{(x_{i+1,o'})\}_{o'}$.

To adapt our theorem, we take the following approach. First, formally consider all coagents in the same layer as being one coagent. This means we have a coagent network which is simply one line of coagents $\Pi_1, \ldots ,\Pi_N $ acting one after the other. Here, we are in our original framework and our theorem for this sequence of coagents gives
\begin{align}
    \nabla J_\Pi = &\sum_{x_0 \in \mcS_{\Pi,\op{init}}} d(x_0) \sum_{i=1}^N \sum_{x\in S_{\Pi}, X_i \in \mcS_{\Pi_i}} d(X_i,x|x_0) \nonumber \\
    &\sum_{U_i \in \mcA_{X_i}}\frac{d\Pi_i}{d\theta}(U_i|X_i)Q_{\Pi_{i}}(X_{i},U_{i})
\end{align}
where capital letters are used for the states and actions of $\Pi_i$, which is composed of those of its constituting coagents $\{\pi_{i,o}\}_o$, i.e. $X_i = (x_{i,o})_o, U_i = (u_{i,o})_o$. Now our problem is to rewrite the policy gradient of coagent networks such as $\Pi_i$, as the sum of its simultaneously acting coagents. This requires an analysis similar to \cref{dist_deriv}, by distributing $\frac{d}{d\theta}$ over the coagents acting simultaneously
\begin{align}\label{eq:synch_before_marginalization}
\begin{split}
    &\sum_{x_0 \in \mcS_{\Pi,\op{init}}} d(x_0) \sum_{i=1}^N \sum_{x\in S_{\Pi}, X_i \in \mcS_{\Pi_i}} d(X_i,x|x_0) \sum_{U_i \in \mcA_{X_i}} \\
    &\sum_{o} \Big( \prod_{o'\neq o} \pi_{i,o'}(u_{i,o'}|x_{i,o'}) \Big )\frac{d\pi_{i,o}}{d\theta}(u_{i,o}|x_{i,o})Q_{\Pi_{i}}(X_{i},U_{i})
\end{split}
\end{align}
Similar to our main theorem, we follow this by taking appropriate marginalization over undifferentiated coagents, state-action occupancies, and $Q$-values (carried out in more details in \cref{appssec:synchronous_full_proof}). They yield the desired decomposition to gradients of the coagent policies:
\begin{align}
\begin{split}
        \nabla J_\Pi = &\sum_{x_0 \in \mcS_{\Pi,\op{init}}} d(x_0) \sum_{i,o} \sum_{x\in S_{\Pi}, x_{i,o} \in \mcS_{\pi_{i,o}}} d(x_{i,o}, x|x_0)  \\
        &\sum_{u_{i,o} \in \mcA_{x_{i,o}}}  \frac{d\pi_{i,o}}{d\theta}(u_{i,o}|x_{i,o})Q_{\pi_{i,o}}(x_{i,o},u_{i,o}). \nonumber
\end{split}
\end{align}

\textbf{Asynchronous Coagent Network. }Defined in \cite{kostas2020asynchronous}, an asynchronous coagent network is one where the coagents may act asynchronously. For example, one node at environment time step $t$ may have waited to act based on the actions taken by others in the step $t-10$. One can see the asynchronous setting as a non-Markov version of the synchronous coagent network. More generally, using the Markov trick, i.e. changing the non-Markov MDP to a Markov MDP by augmenting the state space by adding histories to the states, we have:
\begin{thm}\label{gen_result}
The policy gradient of $\Pi$ for a single reward coagent network that is formed by non-Markov coagents can be written as the sum of the policy gradient of its coagents.
\end{thm}
To prove the above for the particular example of asynchronous networks, we first apply the Markov trick, which in this case, is exactly the same state augmentation trick that \cite{kostas2020asynchronous} employed to translate this problem into a synchronous MDP problem. Then one can write the shared parameter version of the synchronous setting as previously proved. Next, one must show that the synchronous coagents of this new MDP have the same policy gradients as the original ones. Again, this equivalence is achieved by following the same reasoning in \cite[Sec. 5.1, p. 7]{kostas2020asynchronous} which, although done for the non-shared parameter version, is irrelevant of the parameters, and only revolves around the definition of the state augmentation and the two MDPs equivalence (\cref{appssec:asynchronous_full_proof}).

\section{Examples and Applications}\label{sec:examples}
We discuss herein how our definition easily incorporates previous models and inspires new ones.

\textbf{1) Hierarchical Option Critic (HOC):}
% We refer to \cite[and references therein]{riemer2020role} for a survey of HOC. Here, we need to use new notations that we believe are more helpful to see HOC as a CN. 
For illustrating the expressivity of our definition, let us model an HOC \cite{riemer2020role} as a CN. An HOC has $N$ levels of hierarchy with $\pi^1$, sitting at the first level of hierarchy, as the most abstract option and selecting the first option $o^1$ with probability $\pi^1(o^1|s)$ given a state $s$. 

We view HOC as a \textit{tree of options}, denoted by $\left<m_1,\ldots,m_N\right>$ where parents at layer $i$ have $m_{i+1}$ many children.  An option is equipped with a policy $\pi$ which selects a child and a termination function $\beta$. Each option is uniquely determined by a tuple $o^{1:j}=(o^1,\ldots,o^j)$ where $o^i \in \{1,\ldots,m_i\}$, and $0\le j\le N-1$ with the most abstract option (the root) labelled by $o^{1:0}:=o^0$. The option $o^{1:j}$, sitting at the $j+1-$st level of hierarchy, has a policy $\pi_{o^{1:j}}(\cdot|s)$, also denoted by $\pi^{j+1}(\cdot|s,o^{1:j})$ which selects a child $o^{1:j+1}$, and termination function $\beta_{o^{1:j}}(\cdot)$, also denoted by$\beta_{j}(\cdot,o^{1:j})$, which decides to terminate with probability $\beta_{j}((s,o^{1:j}),1)$ and to not terminate with probability $\beta_{j}((s,o^{1:j}),0) := 1-\beta_{j}((s,o^{1:j}),1)$. We will abuse notation and denote $\beta_{j}((s,o^{1:j}),1)$ by $\beta_{j}(s,o^{1:j})$. We may also use `node' while referring to an option in this tree. 
% \begin{rmk}
% Here we have given a rough picture of the graph representation of HOC, with the more accurate one given in \cref{hocpgt_rederived}.
% \end{rmk}
% As is observed in the options framework, instead of considering the states of the environment $s \in \mcS_{\env}$ alone, it is better to consider the states of the environment \textit{and} the agent in our state description. 

Denote the HOC policy by $\Pi$. The states $\mcS_\Pi$ are tuples of the form $(s,o^{1:i},d)$ or $(s,o^{1:i},u)$. We may simplify the notation by dropping ``$1:$'' from the subscript index and use $s_{o,i,d}$ or $s_{o,i,u}$. $s_{o,i,d}$ ($s_{o,i,u}$) means the state of the environment is $s$, and the algorithm is about to execute $\pi_{o^{1:i}}$ ($\beta_{o^{1:i}}$), as it is in the $d= \ $downward ($u =\ $upward) execution mode. 

We describe the rule $\Delta$ for HOC. Starting from the root option policy $\pi^1$ at $s_{o,0,d}$, each node takes and action, moving from $s_{o,j,d} \to s_{o,j+1,d}$. Eventually, a node $o^{1:N-1}$ selects a primitive action $a=o^N$ changing the state of the environment to $s'$. Then the upward mode is initiated, i.e. $s'_{o,N-1,u}$. In this mode, termination functions get selected along the previous path and they decide whether to terminate or not. If a node terminates, the parent's termination function is called. Therefore the state moves from $s'_{o,j+1,u} \to s'_{o,j,u}$ with probability $\beta_{j+1}(s_{o,j+1})$. At some level $l+1$, node $o^{1:l}$ does not terminate, with probability $1-\beta_{l}(s_{o,l})$ and the mode changes back to $d$, i.e. $s'_{o,l,u} \to s'_{o,l,d}$. Then the downward execution applies as before until a new primitive action is selected by some $o'^{1:N-1}$. Given this rule $\Delta$, we derive the associated graph representation as shown in \cref{treehocfigure1}. 
\begin{figure}[h]
    \centering
    \includegraphics[width=\columnwidth]{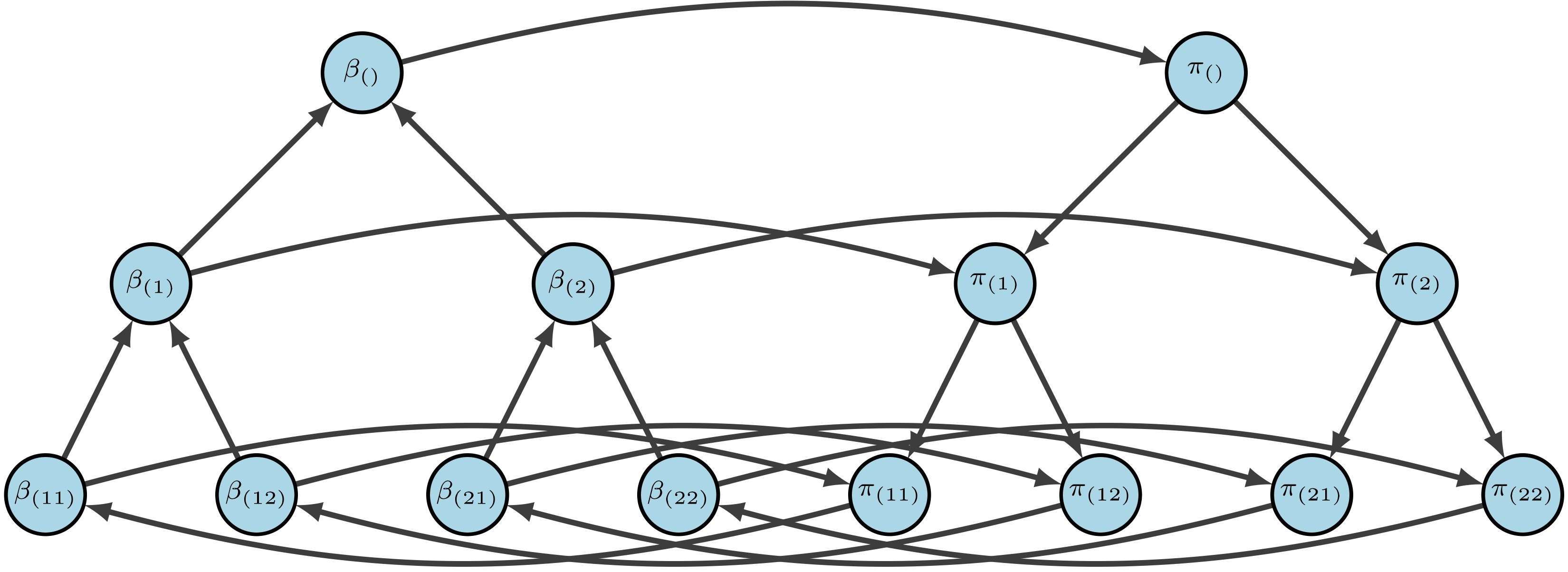}
    \caption{A graph representing the HOC $\langle 1,2,2\rangle$. The policy and termination function of an option each form a coagent. Nodes are labelled by their unique address and type (policy or termination). The termination functions send an edge to their corresponding policies. The policies of the leaf nodes are also followed by their corresponding termination after applying a primitive action.}
    \label{treehocfigure1}
\end{figure}

\textbf{2) Feedforward Option Network (FON):}\label{FON}
Our CN framework allows us to consider other examples such as the simple feedforward network where options at level $i$ are connected to all the next ones at level $i+1$. This CN is used in our experiments as a toy model to better understand the option properties and how that is affected if parents share children. The rule $\Delta$ is similar to that of HOC. We denote by $\left< m_1,m_2,\ldots,m_{N} \right>$ an $N$ layer FON where the $i$-th layer has $m_i$ many options. In all cases, there is a single root, thus $m_1=1$. In our experiments, we tested configurations such as $m_i =1, \forall i$, and $\left < 1,2,\ldots, 2 \right >$. 

\textbf{3 and 4) Options of Interest \cite{khetarpal2020options} and Stochastic Neural Nets \cite{gupta2021structural,chung2021map}:} There are more examples of coagent networks which we discuss further in \cref{appsec:more_on_examples}.

\textbf{Non-examples of CNs:} 
Any model that violates the stochasticity of the coagents or the single reward source assumption could be considered as not directly fitting into our definition and would require examination. Two important models of this nature are the Attention Option-Critic \cite{chunduru2020attention}, violating the stochasticity of the coagents, and Hierarchical Actor-Critic (HAC) \cite{levy2017hierarchical}, violating the single reward source assumption. However, both can in theory be incorporated in our framework if one adds stochasticity, or uses the Markov trick (see \cref{nonexamples_cn}). 

% There are two important models which do not completely fit into our \cref{CNdfn} and require examination. The first is Attention Option-Critic \cite{chunduru2020attention}, violating the stochasticity of the coagent and the second is Hierarchical Actor-Critic (HAC) \cite{levy2017hierarchical}, violating the single reward source assumption. However, both can in theory be incorporated in our framework if one adds stochasticity, or uses the Markov trick (see \cref{nonexamples_cn}).

\section{Practical Insights From Our Theory}\label{sec:algorithm}

\cref{TabularAlg} shows how to leverage \cref{CNPGT} and apply the Actor-Critic (AC) algorithm to learn the options in an HOC/FON network. Below are our improvements and comparisons derived from the theory.

\textbf{1) Update only on arrival and its runtime advantage. }
% We should note that strictly following the math means that we should update the policy or termination only when they act. 
% Here, we choose to update them both once their option is reactivated. 
% So even though sometimes the termination function $\beta_o$ is the only part of the option $o$ that acts (because it terminates), the policy $\pi_o$ will also be updated, although it did not act.
% The strict mathematical update implemented in the attached code is the algorithm employing Proximal Policy Optimization (PPO) to train each coagent (see also \cite{klissarov2017learnings} for the option-critic PPO implementation).
Previous works \cite{riemer2018learning,riemer2020role,khetarpal2020options,bacon2017option,chunduru2020attention} update every parent of the current active option at each time-step. However, \cref{CNPGT} only lists the gradients $\frac{d\pi_o}{d\theta}$ at states $x_o$, where $\pi_o$ is in the execution path. Thus the more accurate way to update options is to do so only when they are called back. Otherwise, depending on the levels of hierarchy $N$ and average duration of options, the runtime of each time step is increased by a factor of $O(N)$. As demonstrated in the experiments (\cref{sec:exp_performance}), updating each time step does not lead to less episodes for learning either.
% for models with the best performance, 
Thus, the learning time is also increased by a factor of $O(N)$.
\begin{algorithm}[h!]
\caption{HOC/FON Actor-Critic}\label{TabularAlg}
\begin{algorithmic}[1]
\Procedure{AC}{$\textit{env},t_{\max},\left <m_i\right>_{i=1}^N$, $\alpha_Q,\alpha_\pi,\alpha_\beta,\gamma,\theta$}
\State $o \gets \textit{RootOption}$
\State $\textit{PathToRoot} = [o]$
\State $t=0$
\State $s \gets s_0$  // set initial state
\State // select options for initial step
\State $\omega, \textit{PathToRoot} = \textit{PrimitiveAction}(s, t, o)$
\Repeat 
\State // take an action and step in the environment
\State $s^{'},r, \textit{done} \gets \textit{env}.\textit{step}(\omega)$
\State // Computing rewards $R_o$ (see (\ref{cumrewardpiodfn}))
\For{$o \in \textit{PathToRoot}$}
\State $o.Reward \gets o.Reward + \gamma^{t-o.ActivationTime-1}r$
\EndFor
\For{$o \in \textit{PathToRoot}$}
\If{$o.\textit{SampleTermination(s)}$}  
\State $o.\textit{Terminated} \gets \textit{True}$
\Else
\State $\omega \gets o$
\State \textbf{break}
\EndIf
\State // $\omega$ is the option that did not terminate
\EndFor
\State $\text{UpdateCritics}(s, \textit{PathToRoot}, t, \textit{done})$
% \State  // $s$ is not an input for UpdateActors as the previous state-action of the actors are used for update
\State $\text{UpdateActors}(\textit{PathToRoot}, t, \textit{done})$
\State $\text{UpdateBetas}(s, \textit{PathToRoot},t$, $\omega)$
\For{$o \in \textit{PathToRoot}$}
\If{$o.\textit{Terminated}$}
\State $o.\textit{Terminated} \gets \textit{False}$  
\State // Re-initializing for the next step
\EndIf
\EndFor
\State $s \gets s'$ , $t\gets t+1$
\State $\omega, \textit{PathToRoot}=\textit{PrimitiveAction}(s, t, \omega)$
\Until{\textit{done} or $t > t_{\max}$}
\EndProcedure
\end{algorithmic}
\end{algorithm}

\textbf{2) Incorrect $\beta_o$ update and early termination. }\label{wrongbetaupdate}
We address a recurring mistake in the literature's AC algorithm implementation for updating $\beta$, which worsens one of the issues of OC and HOCs called early option termination. As \cref{CNPGT} states, for the update of an option $o$, the term $\frac{d\beta_o}{d\theta}(s,a)$ should be scaled by the $Q_{\beta_o}(s,a)$. If $a= \ 
$termination, then $Q_{\beta_o}(s,\text{termination}) = V_{\beta_{\text{parent}(o)}}(s)$, i.e. the value of the higher option termination function at $s$. Otherwise, $Q_{\beta_o}(s,\text{not termination}) = V_{\pi_o}(s)$. For the sake of illustration, let us consider the case of the Option-Critic model with $root$ being the parent of $o$. Then $V_{\beta_{\text{parent}(o)}}(s) = V_{\pi_{root}}(s)$, as the root never terminates. Throughout the literature, instead of multiplying the gradient by either $V_{\pi_{root}}(s)$ or $V_{\pi_o}(s)$, the difference of these two terms denoted by $A_{\beta_o}(s) = V_{\pi_o}(s) - V_{\pi_{root}}(s)$ is considered. Hence, independent of the action, the update is always
$$d\theta_{\beta_o} \gets d\theta_{\beta_o}  - \alpha_{\beta_o} \frac{d\beta_o(s)}{d\theta_{\beta_o}} A_{\beta_o}(s),$$
where $\theta_{\beta_o}$ are ${\beta_o}$'s parameters. In theory, it is clear that $A_{\beta_o}(s)$ is always negative as $V_{\pi_o}(s) \le V_{\pi_{root}}(s)$, therefore changing the parameters in the direction of the termination gradient, thus always encouraging early termination. This artificially created problem then becomes the reason behind adding another hyperparameter $\eta>0$ to mitigate this issue \cite{bacon2017option,harb2018waiting}.
% :
% $$d\theta_{\beta_o} \gets d\theta_{\beta_o}  - \alpha_{\beta_o} \frac{d\beta_o(s)}{d\theta_{\beta_o}} (A_{\beta_o}(s)+\eta).$$
Correcting this mistake, while eliminating the need for an additional hyperparameter $\eta$, does not resolve the early termination issue entirely. Indeed, at the beginning of training, since the lower options get their values updated more frequently, as long as rewards are nonnegative and values are initialized similarly, one can expect the value functions of the lower options to be generally higher than those of their parents. This would push the options to terminate less. However, the rate at which the parents are updated is important and this rate may need to be quite small. As supported by experiments (\cref{sec:optlengthandtermtemp}), this can be facilitated by lowering the temperature of termination functions.

\textbf{3) Using the parent as target network. }\label{parentfortargetcomputation}Compared to previous works, another important modification to the algorithm is the use of the parent of the child for the critics target computation, instead of the child itself.
% ; this is discussed later in \ref{optlengthandtermtemp}.
In low temperatures, one needs to fully leverage the parent option as a value target network in the critic update of the child. For an option policy $\pi_o$, the usual $Q$-learning target for $Q_{\pi_o}(s,a)$ has the following form:
$$\delta_o \gets Q_{\pi_o}(s,a) - \big(r + \gamma ^{t} ((1-\beta_o(s^{(t)}))\max_aQ_{\pi_o}(s^{(t)},a) +$$
$$\beta_o(s^{(t)}) (\text{higher option terms}) )\big),$$
where $t$ is the number of time-steps until $o$ is called back, and $r$ the discounted cumulative rewards accumulated during that time. Action $a$ can be a primitive action or a child of $o$. Notice that the multiplier of $\gamma^t$ is simply $V_{\beta_o}(s^{(t)})$ which is decomposes in two terms by considering the two terminating and not terminating cases. In theory, $\max_a Q_{\pi_o}(s^{(t)},a) = V_{\pi_o}(s^{(t)}) = Q_{\pi_{\text{parent}(o)}}(s^{(t)},o)$ and so in practice, the parent can serve the role of a value target network. Therefore, one can make the following change in the update
$$\delta_o \gets Q_{\pi_o}(s,a) - \Big(r + \gamma^{t} 
\big((1-\beta_o(s^{(t)}))\cdot$$
$$Q_{\pi_{\text{parent}(o)}}(s^{(t)},o) + \beta_o(s^{(t)}) (\text{higher parent terms}) \big)\Big).$$
Going further, we implement a similar change to ``$(\text{higher parent terms})$'' where any $\max_a Q_{\pi_{o'}}(s^{(t)},a)$ is replaced by $Q_{\pi_{\text{parent}(o')}}(s^{(t)},o')$ for any $o'$ in the path to root for $o$ (line \ref{UpdateCriticsParentTarget} of \cref{UpdateAndHelperFunctionsTabular}). As supported by our experiments, we hypothesize that this ensures that $Q_{\pi_o}$ for the lowest options $o$ do not get too much larger than those of their parents, as the target includes the action value of the parent instead of the option itself. Hence termination probability drops in a less dramatic way, allowing the higher options to learn enough to be able to play their role as a value target and stabilize the learning. Eventually, this allows us to obtain models with longer lasting options (\cref{sec:optlengthandtermtemp,appsec:more_on_exp}). A similar improvement for $Q_{\beta_o}$ is discussed in \cref{appsec:more_on_exp}.
\begin{rmk}\label{rmk:11vs1_targetnetwork}
Viewing the parent as a target network is also motivated by studying more deeply the crucial difference between a simple FON model $\left < 1,1 \right >$ vs $\left< 1\right> $. As theoretically justified in \cref{FON_app} and experimentally confirmed, $\left < 1,1 \right >$ turns out to be an AC model which is soft-updated using a value target network (the parent) with rollout at a learnable rate (determined by the termination function of the child). This also explains why in other hierarchical networks, such as HAC, it is unnecessary to use a target network. On the other hand, $\left< 1\right> $ is exactly a vanilla AC model. 
\end{rmk} 
% \textbf{5) Not computing the discounted probability for policy updates. }Theorem \ref{CNPGT} states that the update term of each coagent must include a discounted probability $d(x_o,x|x_0)$. In the case of the termination functions, we compute these in $Probs$ in line \ref{impsamplingbeta} of Algorithm \ref{UpdateAndHelperFunctionsTabular}. However, no such term is computed for the policy functions. Recall that in the case of HOC, this discounted probability term is 
% $\Big(\prod_{k=j}^{N-1}\beta_k(s'_{o,k})\Big)P_{\beta,\pi}(s'_{o',j-1,d}|s'_{o,j-1,u})$ as shown in (\ref{policyimpsamplingterm}). 
% We did not use this discounted probability in the policy updates, as we observed that they did not improve results, reduced efficiency of the algorithms, and furthermore, made it far more difficult to control the option length, which is what we intend to do later in \ref{optlengthandtermtemp}.

\section{Experiments}
We conduct a number of experiments on a nonstationary stochastic sparse variant of the Four Rooms (FR) task \cite{riemer2018learning,riemer2020role}. We invite the reader to \cref{appsec:experiments} and the ones after, which, as referred to, confirm many of the discussed insights and include
\begin{itemize}
    \item A comparison of the performance of OC/HOC/FON to each other and previous works implementations (\cref{treesvsmatt}).
    \item Experiments on option length and how to main performance while tuning the temperature hyperparameter of $\beta_o$ to increase usage of all options.
    \item Long runs to check the stability of our algorithm, confirming the hypothesis of wider networks enjoying more stability in the long runs.
\end{itemize}
\begin{figure}[h!]
    \centering
    \includegraphics[width = \columnwidth]{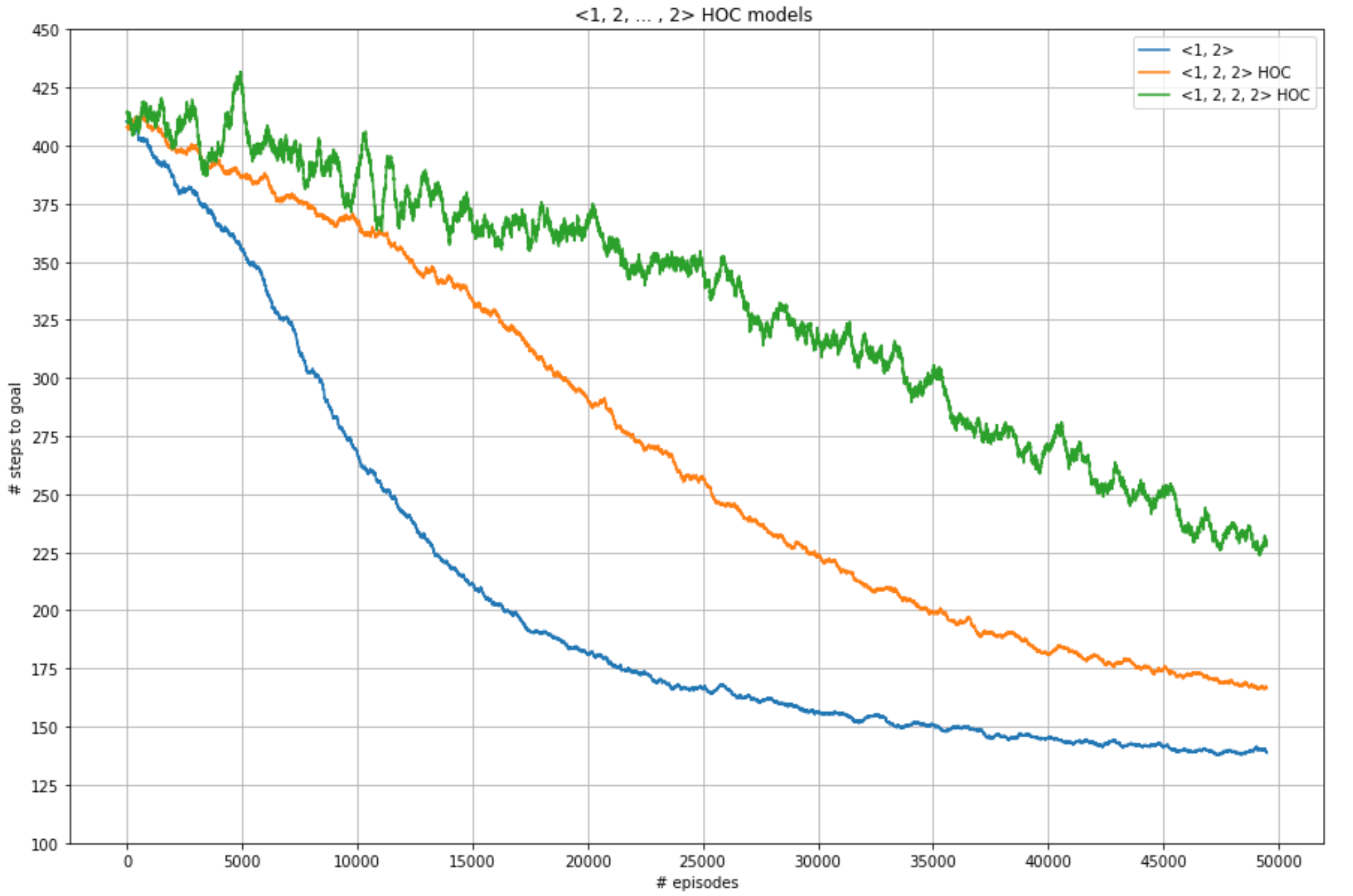}
    \caption{
    Our results for the OC model $\left<1,2\right>$ and HOC models $\left<1,2,2\right>,\left<1,2,2,2\right>$ for 5 random seeds. To be compared with the results from \cite[Fig. 3]{riemer2018learning} for the OC model $\left<1,4\right>$ and HOC models $\left<1,2,2\right>,\left<1,2,2,2\right>$, where our models outperform their counterparts. We also note how the simplest $\left<1,2\right>$ OC model outperforms all other models.
    % and the moving average in the bottom chart is taken over 100 episodes instead of 500.
    }
    \label{treesvsmatt}
\end{figure}

\section{Conclusion and Future Works}
In this work, we introduce a new intuitive definition of coagent networks, allowing us to design new ones, and illustrate the idea of execution paths to prove policy gradient theorems even when coagents share parameters or act asynchronously. We survey the performance and option properties of well-known coagent networks while making the algorithms more performant, and provide theoretical analysis supporting the improvements. 
% Future works include a better understanding of the tuning of the hyperparameters and initializations of policies and termination functions in (non-tabular) non-linear tasks. We also need a better measure to compare the stability of hierarchical models $\left < 1,\ldots,1 \right >$ with that of the AC model, and compare/compute the stability as one uses more levels of hierarchy. Some links have been established (\ref{FON_app}) between the hierarchical model $\left < 1,1 \right >$ and the AC model with a target network, which is well-known to improve stability. One wonders if using any more levels like $\left < 1,1,1 \right >$ actually improves stability compared to $\left < 1,1 \right >$. This may not be true for small tasks like the Four Rooms experiment, but could be for more complicated ones.
This work attempts to put coagent networks and their algorithms on solid mathematical footing, and future works should focus more on the network design of these models and the suitable tasks that they can be evaluated against. For example, one of the design goals of the options framework is for different options to represent different skills. If so, the training and tasks should accommodate such outcome; having options which focus, or have their attention on different parts of a screen or different parts or phases of the task \cite{khetarpal2020options,chunduru2020attention}, may not necessarily accommodate this. In addition, further algorithmic improvements should be done to mitigate the issues inherent in training such networks, chief among them being the variance \cite{gupta2021structural}, in order to make them competitive against state-of-the-art online RL paradigms.
% the optimal policy should be decomposable to policies achieving goals with requiring different skills towards a more abstract goal. 
% On the other hand,  It simply means we have leveraged the capacity and modularity of our model to tackle the problem, but the decomposition achieved may not reflect any decomposition to skills which can be reused in a transfer learning setting.
% We should also be aware of the possibility that additional structures and auxiliary losses need to be added to the vanilla options model to facilitate such an outcome.

\section*{Acknowledgements}
We would like to thank Vaneet Aggarwal for his suggestions and comments. The first named author would like to acknowledge the support of the Perimeter Institute for Theoretical Physics and Microsoft during his time of working on this research. Research at Perimeter Institute is supported by the Government of Canada through Innovation, Science and Economic Development Canada and by the Province of Ontario through the Ministry of Research, Innovation and Science. The experiments were conducted using Microsoft computational resources.

\bibliography{main}

\appendix

\section{Hierarchical Option Critic}\label{HOCPolicyGradientTheorem}
In this appendix, we review the graph representation of HOC, and by presenting some useful Bellman equations for the PGT, show how our \cref{CNPGT} for HOC is the same PGT as in \cite[Theorem 1]{riemer2020role}. Note the list of complicated equations in the next parts are simply derived as part of confirming that our result matches with the literature, and should not be seen as proving \cref{CNPGT}, which we accomplished already in the short proof in the main text.
\textbf{Alternative graph representation of HOC. } It is possible to define the CN matching HOC differently by taking the policy and termination function corresponding to an option $o^{1:j}$ as a single coagent $\kappa_{o^{1:j}}$ (\cref{treehocfigure}). This coagent will act as $\pi^{j+1}(\cdot|s_{o,j})$ when given a state with execution mode $d$, and act as the termination function $\beta_j(s_{o,j})$ given a state with execution mode $u$. In this case, the graph of the CN is a tree where each edge is doubled, with one edge going up and another going down, and each vertex has a loop. Notice this graph is a contraction of \cref{treehocfigure1} along the edges connecting the two copies of the binary tree.

\begin{figure}[h]
    \centering
    \includegraphics[width=\columnwidth]{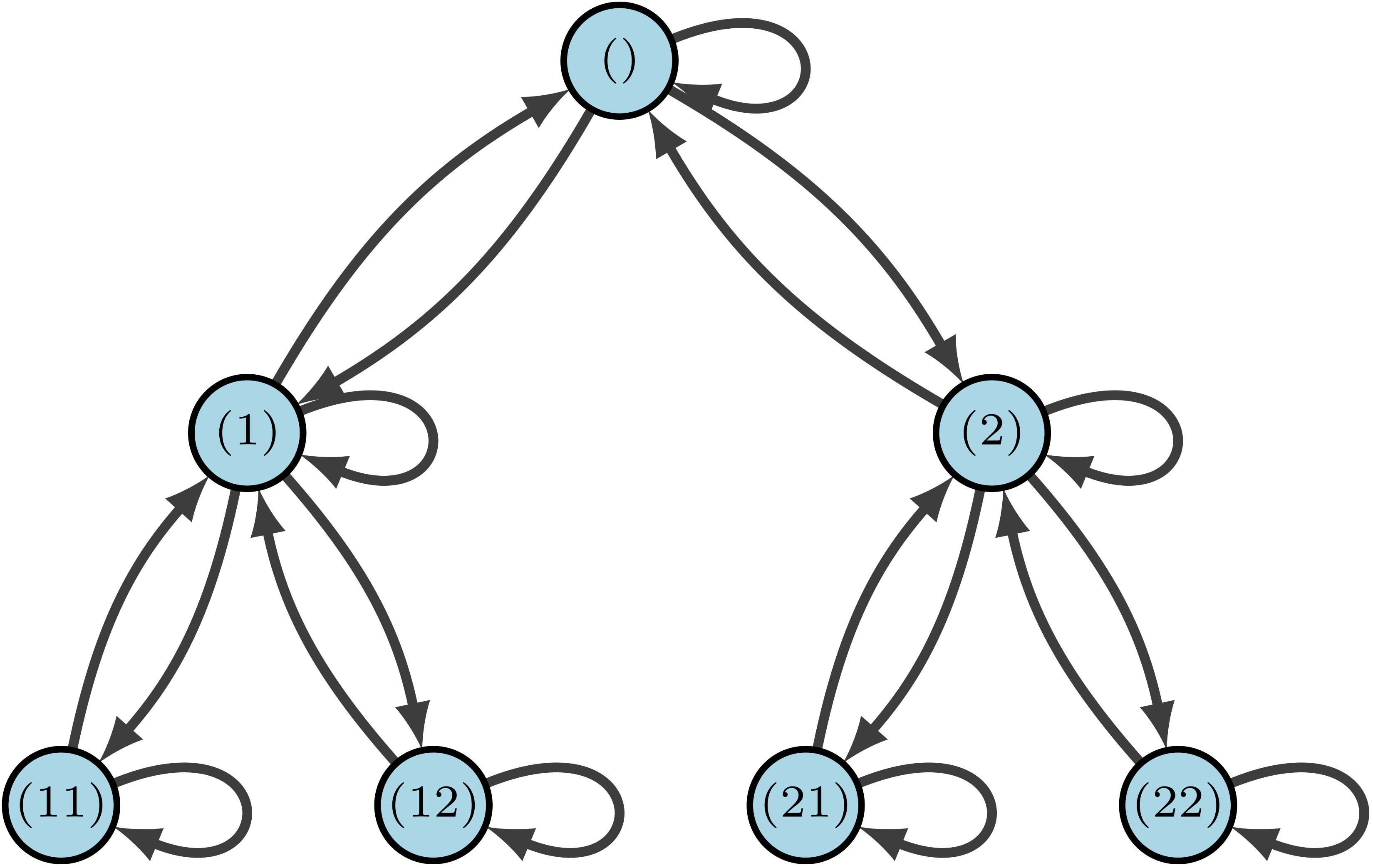}
    \caption{A graph representing the coagent network of HOC with $N=3$ and $m_i=2$. The policy and termination function of each option form a single coagent. Nodes are labelled by their unique address.}
    \label{treehocfigure}
\end{figure}

\textbf{Bellman equations in HOC. }To express the Bellman equations that will be necessary to understand the equivalence of our PGT for HOC and the one in the literature, we first need to clarify our notation for the edge cases:
\begin{itemize}
    \item The state $s_{o,N,d}$ is always followed by some state $s'_{o,N,u}$ for some $s' \in \mcS_{\env}$. Notice it is always the case that a primitive action such as $o^N$, if it were to be informally looked at as an option, immediately terminates after taking place, in other words $\beta_N()\equiv 1$. So $s'_{o,N,u}$ is \textit{immediately} followed by $s'_{o,N-1,u}$. Therefore, it is meaningful to take $s'_{o,N,u} = s'_{o,N-1,u}$.
    \item  $s_{o,0,d}$ or $s_{o,0,u}$ means the highest level of hierarchy $o^0$ at state $s$, where the first option has not been chosen yet. The highest level policy never terminates. Therefore, as $\beta_0()\equiv 0$, $s_{o,0,u}$ is \textit{immediately} followed by $s_{o,0,d}$ and as such $s_{o,0,u} = s_{o,0,d}$. This state is followed by $s_{o,1,d}$ for some option $o^1$ assigned by $\pi^1$. 
    \item  In our summations $\sum_{x}^y$ or products $\prod_{x}^y$, if the higher index is smaller than the lower index ($x>y$), we consider the result to be $1$. This convention is helpful as it makes the equations easier to express without specifically accounting for the edge cases.
\end{itemize}

To describe $Q-$values of HOC, recall that in standard RL, we tend to differentiate the action value and the state value by using a different notation ($Q$ and $V$). Our terminology allows us to unambiguously use the same notation $Q()$ for the value of our augmented states at any level of hierarchy or any state of execution $d$ or $u$. 

First, let $r(s_{o,N})$ be the reward (or expected reward, if the environment reward process is stochastic) given by the environment to the agent which has taken primitive action $o^N$ at $s\in \mcS_{\env}$. Let $\gamma \in [0,1]$ be the discount factor. All $Q-$values will be computed in succession. Starting with the easiest case at the lowest level, $Q(s_{o,N,d})=$
\begin{align}
    r(s_{o,N})+\gamma \sum_{s'}P(s'_{o,N-1,u} | s_{o,N,d})Q(s'_{o,N-1,u})
\end{align}
where $P$ is the transition probability of the environment, that given action $o^N$ at $s$, the state changes to $s'$. Note that $Q(s_{o,N,d}) = Q(s_{o,N-1,d},o^N)$, i.e. the $Q$-value of the state-action pair $(s_{o,N-1,d},o^N)$ for the policy $\pi^N$ of the option $o^{1:N-1}$.

The equation above is obvious, as the next state is $s'_{o,N,u} = s'_{o,N-1,u}$, which is in the upward execution phase. Next, computing the value at this state, $Q(s_{o,N-1,u})=$
\begin{align}\label{bellman_u_N-1}
   \sum_{i=0}^{N-1} (1-\beta_i(s_{o,i})) \big (\prod_{j=i+1}^{N-1} \beta_j(s_{o,j})\big ) Q(s_{o,i,d})
\end{align}
Note that the product $(1-\beta_i(s_{o,i})) \big (\prod_{j=i+1}^{N-1} \beta_j(s_{o,j})\big )$ is the probability of upward trajectory up to level $i+1$. At that level, the decision is to not terminate, which has probability $1-\beta_i(s_{o,i})$. Notice $Q(s_{o,N-1,u})$ can also be interpreted as the state-value for termination $\beta_{N-1}$ of option $o^{1:N-1}$ at state $s_{o,N-1,u}$. Similar interpretations exist for the $Q$-values computed later.

Finally, computing $Q(s_{o,i,d})$ completes the cycle needed to compute $Q(s_{o,N,d})$, as a `Bellman' equation is a recursive equation. To do so, observe the following:
\begin{align}\label{bellman_f}
Q(s_{o,l-1,d})=\sum_{o^l}\pi^l(o^l|s_{o,l-1})Q(s_{o,l,d}), \  \forall l\le N 
\end{align}
Applied iteratively, one obtains an expression of $Q(s_{o,i,d})$ in terms of $Q(s_{o,N,d})$, which was already computed.

In similar fashion, one can compute the $Q$ value at all other states. To do so, we shall derive the rewards for any node using \cref{cumrewardpiodfn}:
\begin{align}
    r(s_{o,l})= \sum_{o^{l+1:N}} \prod_{i=l+1}^N \pi^{i}(o^i|s_{o,i-1})r(s_{o,N}), \ \forall l\le N
\end{align}
Then, using the transition probability $P_\pi(s'_{o,N,u} | s_{o,l,d})=$
\begin{align}\label{transition_prob}
     \Big (\prod_{i=l+1}^N \pi^{i}(o^i|s_{o,i-1})\Big )P(s'|s_{o,N}) , \ \forall l \le N
\end{align}
it follows
\begin{align}\label{bellman_d}
    Q(s_{o,l,d})=r(s_{o,l})+\gamma \sum_{s',o^{l+1:N}}P_\pi(s'_{o,N,u} | s_{o,l,d})Q(s'_{o,N,u})
\end{align}
Notice in both equations above, $s'_{o,N,u}$ can be replaced by $s'_{o,N-1,u}$. Further, for all states with upward execution mode, one can write $Q(s_{o,l,u})=$
\begin{align}\label{bellman_u}
    \sum_{i=0}^l (1-\beta_i(s_{o,i})) \big (\prod_{j=i+1}^{l} \beta_j(s_{o,j})\big ) Q(s_{o,i,d}), \ \forall l\le N
\end{align}
When the algorithm is at $s_{o,l,u}$, it will begin executing upward and reach some state $s_{o,i,u}$, for some $0\le i\le l$ (with probability $\prod_{j=i+1}^{l}\beta_j(s_{o,j})$), where it decides to execute downward with probability $(1-\beta_i(s_{o,i})$), i.e. the state changes to $s_{o,i,d}$. The above equation multiplies these probabilities by the value $Q(s_{o,i,d})$ of the state $s_{o,i,d}$ it starts executing downward at.

\begin{rmk}
In the literature \cite{sutton1999between,bacon2017option,riemer2018learning}, $Q_\Omega(s,o^{1:i})$ generally corresponds to our $Q(s_{o,i,d})$ while $Q_U(s,o^{1:i})$ generally corresponds to $Q(s_{o,i,u})$; in the edge case of $i=N$, this correspondence could become a bit imprecise, due to the fact that the literature notation accounts for those cases separately by introducing new notations. Finally, $V_\Omega(s)$ corresponds to $Q(s_{o,0,d})$.
\end{rmk}

\begin{rmk}\label{bacon_correspondence}
The equations (\ref{bellman_d}) and (\ref{bellman_f}) give the following in the case of $l=2,N=2$:
\begin{align}\label{bellman_d_l=2_N=2}
Q(s_{o,1,d})&=\sum_{o^2}\pi^2(o^2|s_{o,1})Q(s_{o,2,d})\\
    Q(s_{o,2,d})&=r(s_{o,2})+\gamma \sum_{s'}P(s'_{o,1,u} | s_{o,2,d})Q(s'_{o,1,u})
\end{align}
which can be recognized as the equations for $Q_\Omega$ in terms of $Q_U,U$ as written in \cite[Eqs. 1-2]{bacon2017option}.
\end{rmk}

\textbf{Policy gradient theorem for HOC. }Below, we write the PGT theorem for HOC to motivate the next notations and probability computations. In a series of steps, we will identify this PGT with \cite[Theorem 1]{riemer2020role}. For an initial state $(s_{0})_{o_0,N-1,d}$ at the lowest level of the tree at node $o_0^{1:N-1}$, we have
\begin{thm}\label{HOCPGTRewritten}
$\frac{dQ}{d\theta}((s_{0})_{o_0,N-1,d})=$
\begin{align}
    &\sum_{s,o^{1:N-1},s'}\mu(s',s_{o,N-1,d}|(s_{0})_{o_0,N-1,d})\nonumber \\
    &\Big[ \sum_{o^N}\frac{d\pi^N}{d\theta}(o^N|s_{o,N-1})Q(s_{o,N-1,d},o^N) \nonumber \\
    &- \gamma \sum_{l=1}^{N-1}\Big(\prod_{k=l+1}^{N-1}\beta_k(s'_{o,k})\Big)\frac{d\beta_l}{d\theta}(s'_{o,l}) \cdot \nonumber \\
    & (Q_{\beta_l}(s_{o,l},0) - Q_{\beta_l}(s_{o,l},1)) \nonumber \\
    &+\gamma \sum_{j=0}^{N-1}\Big(\prod_{k=j}^{N-1}\beta_k(s'_{o,k})\Big)\sum_{o'^{1:j}}P_{\beta,\pi}(s'_{o',j-1,d}|s'_{o,j-1,u})  \cdot \nonumber \\ \label{policyimpsamplingterm}
    &\frac{d\pi^j}{d\theta}(o'^{j}|s'_{o',j-1})Q(s'_{o',j-1,d},o'^j) \Big].
\end{align}
\end{thm}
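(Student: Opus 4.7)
The plan is to derive the formula by applying Theorem \ref{CNPGT} (in its per-initial-state form) to $\Pi=$ HOC, viewed as the coagent network described in Section \ref{hocpgt_rederived}, with coagent set $\{\pi^{j+1}(\cdot|s_{o,j}),\beta_j(\cdot|s_{o,j})\}$ indexed by option addresses $o^{1:j}$ for $0 \le j \le N-1$. Each of the three displayed groups of terms in the statement will correspond naturally to a class of coagents encountered along execution paths starting at $(s_0)_{o_0,N-1,d}$: the leaf policy $\pi^N$ that fires at the start of each path, the termination functions $\beta_l$ called during the upward phase after the environment transition, and the non-leaf policies $\pi^j$ re-invoked during the subsequent downward phase.

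First, I would address the technicality flagged in the remark preceding Section \ref{hocpgt_rederived}: HOC execution paths begin with a leaf-policy action rather than ending with one as in Definition \ref{execpathdfn}. This is handled by reindexing the path to begin at $s_{o,N-1,u}$ as noted in the appendix reference, or equivalently by applying the master equation (\ref{mastereq}) directly to $Q$ at the shifted state and carrying the leading leaf-policy term separately. After this setup, I would apply (\ref{mastereq}) to obtain a single sum over all coagents in which each term is a discounted-probability-weighted local policy gradient.

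Next, I would read off the reach-probability $d(x_o, x|x_0)$ of the master equation for each class of coagent. The factor common to all three groups, $\mu(s',s_{o,N-1,d}|(s_0)_{o_0,N-1,d})$, captures the joint discounted probability of the path reaching the leaf state $s_{o,N-1,d}$ with environment state $s$ and then transitioning to $s'$. For the leaf coagent $\pi^N$ no extra factor is needed, yielding $\sum_{o^N}\frac{d\pi^N}{d\theta}(o^N|s_{o,N-1})Q(s_{o,N-1,d},o^N)$. For each termination coagent $\beta_l$, the additional factor $\prod_{k=l+1}^{N-1}\beta_k(s'_{o,k})$ arises as the probability that the upward phase reaches level $l$ (all strictly higher-level terminations fire), and the inner sum over $b \in \{0,1\}$ collapses via $\frac{d\beta_l(\cdot,0)}{d\theta} = -\frac{d\beta_l}{d\theta} = -\frac{d\beta_l(\cdot,1)}{d\theta}$ to give the signed difference $-\frac{d\beta_l}{d\theta}(s'_{o,l})(Q_{\beta_l}(s'_{o,l},0)-Q_{\beta_l}(s'_{o,l},1))$. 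For each re-invoked policy $\pi^j$ at a new address $o'^{1:j-1}$ (necessarily matching $o^{1:j-1}$ along the continued suffix), the additional factor is $\prod_{k=j}^{N-1}\beta_k(s'_{o,k}) \cdot P_{\beta,\pi}(s'_{o',j-1,d}|s'_{o,j-1,u})$, capturing the full upward phase followed by the non-termination flip to mode $d$ at level $j-1$.

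The main obstacle will be the bookkeeping of these in-path reach-probabilities, in particular keeping track of the $\gamma$ accrued exactly once per environment transition and isolating $\mu$ as a common prefactor. The rule $\Delta$ of HOC is stochastic in two places within a single execution path (the environment transition itself and the termination outcomes), and one must verify that the master equation's decomposition of $d(x_o, x|x_0)$ correctly factors as $\mu$ times the termination-product prefactors without double-counting the discount or miscounting the termination chain. Once this factorization is established and the sign of the termination gradient is correctly extracted from the binary policy $\beta_l$, the remainder is algebraic rearrangement into the three displayed groups.
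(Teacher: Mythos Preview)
Your proposal is correct and in fact coincides with the derivation the paper gives in Appendix \ref{hocpgt_rederived_app}, where Theorem \ref{CNPGT} is applied directly to HOC and the three classes of coagent contributions (leaf policy, terminations, re-invoked policies) are read off exactly as you describe. The identification of the in-path reach factors $\prod_{k=l+1}^{N-1}\beta_k$ and $\prod_{k=j}^{N-1}\beta_k\cdot P_{\beta,\pi}$, and the collapse of the binary $\beta_l$ gradient into the signed advantage, are all handled there just as you outline.

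However, the proof the paper actually cites for Theorem \ref{HOCPGTRewritten} (Appendices \ref{bellman_equ_for_HOC} and \ref{HOCPolicyGradientTheorem}) takes a different route: it does not re-derive the result from the coagent master equation at all, but instead shows that the displayed formula is a notational rewrite of the already-established HOCPGT of \cite[Theorem 1]{riemer2019learning}. The work there consists of setting up the Bellman equations for the augmented states $s_{o,l,d}, s_{o,l,u}$, identifying $Q(s_{o,N,d})=Q(s_{o,N-1,d},o^N)$ and $A(s'_{o,l})=Q_{\beta_l}(s'_{o,l},0)-Q_{\beta_l}(s'_{o,l},1)$ via (\ref{advantage_2}), and then recognizing each bracketed group as a per-coagent policy gradient (equations (\ref{rewrite_pi^N})--(\ref{rewrite_pi^l})). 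Your approach is more self-contained, since it does not rely on the external theorem, at the cost of having to do the $\mu$-factorization and $\gamma$-bookkeeping from scratch; the paper's cited approach is shorter because it outsources that to \cite{riemer2019learning} and only verifies the translation of notation.

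One small imprecision worth tightening: you describe $\mu(s',s_{o,N-1,d}|\cdot)$ as including the transition to $s'$, but in the paper's accounting $\mu$ carries the discount only up to $s_{o,N-1,d}$; the explicit $\gamma$ in the $\beta_l$ and $\pi^j$ groups is precisely the discount for the single environment step to $s'$ (see Remark \ref{path_exec_HOC} and the absorption in (\ref{rewrite_beta^l})). You flag this as the main obstacle, so you are aware of it, but your phrasing of what $\mu$ captures should be adjusted accordingly.
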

There are a few notations that need to be defined:
\begin{itemize}
    \item $\mu(s', s_{o,N-1,d}|(s_{0})_{o_0,N-1,d})$ is the discounted probability of starting at $(s_{0})_{o_0,N-1,d}$, reaching the state $s_{o,N-1,d}$, which after $\pi^N$ execution is followed by $s'_{o,N-1,u}$. It does not directly correspond to $d(x_o,x|x_0)$ in \cref{CNPGT}, but it will be shown to be related. 
    \item $P_{\beta,\pi}(s'_{o',j-1,d}|s'_{o,j-1,u})$ is the transition probability from $s'_{o,j-1,u}$ to $s'_{o',j-1,d}$, which is dependent on the termination functions $\beta$ and policies $\pi$. It is the probability of the algorithm reaching $s'_{o',j-1,d}$ at level $j$, after finishing its upward phase of execution (which started at the same level at $s'_{o,j-1,u}$) and going down again to level $j$ (see \cref{P_beta_dfn_t}).
    \item $Q_{\beta_l}(s_{o,l},0), Q_{\beta_l}(s_{o,l},1)$ are the action value functions of $\beta_l$ for option $o^{1:l}$ at $s_{o,l}$. Action $1$ denotes termination and $0$ denotes the opposite (see \cref{advantage_2}).
\end{itemize}

The notations that match with the reference have also the same meaning, for example $P_{\beta,\pi}(\cdot|\cdot), \mu(\cdot|\cdot)$ or the advantage $A()$. As will be shown in equations (\ref{rewrite_pi^N}-\ref{rewrite_pi^l}), we can interpret the above sum as the sum of the policy gradients of $\pi^j$ and $\beta_j$ of the options, as their similarity to the usual policy gradient $\sum_x d(x|x_0) \sum_a \frac{d\pi}{d\theta}(a|x) Q(x,a)$ in (\ref{policygradient}) can be observed. Below we will make steps towards proving this interpretation.

% \begin{rmk}\label{mergingnotationHOCandCN}
% We introduced many new notations for HOC which will be useful when writing the algorithm. However, we often need to view HOC as a CN and use our notations in CN to be brief in our theory which apply to all layers. In that case, we will denote an option $o^{1:i}$ by $o$, which means $\pi_o(\cdot|s) = \pi^i(\cdot|s_{o,i})$ and $\beta_o(s) = \beta_i(s_{o,i})$. In case we need to refer to the parent of $o^{1:i}$, i.e. $o^{1:i-1}$, we will simply use $\text{parent}(o)$.
% \end{rmk}
% We review the HOC policy gradient theorem (HOCPGT) using the new notations. 
% This section is needed as a reference, as we apply our main result to get a direct proof for HOCPGT.

Let us rewrite (\ref{bellman_u}) so that the $Q-$value on the right-hand side is evaluated at states with the same level as the left-hand side. To do so, apply (\ref{bellman_f}) repeatedly on $Q(s_{o,i,d})$ to get to $Q(s_{o,l,d})$, which leads to the following when $l\le N$:
\begin{align}\label{bellman_u2}
    Q(s_{o,l,u})=&\sum_{o'^{1:l}} \Big (\sum_{i=0}^l (1-\beta_i(s_{o,i})) \big (\prod_{j=i+1}^{l} \beta_j(s_{o,j})\big ) \nonumber \\
    &\big (\prod_{p=i+1}^{l} \pi^p(o'^p|s_{o',p-1})\big ) 1_{o'^{1:i}=o^{1:i}}\Big ) Q(s_{o',l,d}),
\end{align}
where $1_{o'^{1:i}=o^{1:i}}$  is one if $o'^{1:i}=o^{1:i}$ and zero otherwise. The term multiplied by $Q(s_{o',l,d})$ has an independent meaning, which will allow to make this equation more compact. Let $P_{\beta,\pi}(s_{o',l,d}|s_{o,l,u})$ be the transition probability from $s_{o,l,u}$ to $s_{o',l,d}$, which is dependent on the termination functions $\beta$ and policies $\pi$. Hence, this probability solely depends on the model and not the environment. It is the probability of the algorithm executing downward at $o'^{1:l}$ at level $l+1$, after finishing its upward phase of execution (which started at level $l+1$) and coming down again to level $l+1$. It is easy to see that this probability is the same multiplicative term in the equation above:
\begin{align}\label{P_beta_dfn_t}
    P_{\beta,\pi}(s_{o',l,d}|s_{o,l,u}) =& \sum_{i=0}^l (1-\beta_i(s_{o,i})) \big (\prod_{j=i+1}^{l} \beta_j(s_{o,j})\big ) \nonumber \\
    &\big (\prod_{p=i+1}^{l} \pi^p(o'^p|s_{o',p-1})\big ) 1_{o'^{1:i}=o^{1:i}} .
\end{align}
One can generalize this to compute other useful transition probabilities $\forall  l \le m \le N$:
\begin{align}\label{p_beta_gen1}
    &P_{\beta,\pi}(s_{o',m,d}|s_{o,l,u}) = \nonumber \\
    &\prod_{p=l+1}^m \pi^p(o'^p|s_{o',p-1}) P_{\beta,\pi}(s_{o',l,d}|s_{o,l,u}) ,\\ \label{p_beta_gen2}
    &P_{\beta,\pi}(s_{o',m,d}|s_{o,l,u}) = \nonumber \\
    &P_{\beta,\pi}(s_{o',m,d}|s_{o,m,u})\prod_{j=m+1}^l \beta_j(s_{o,j}).
\end{align}
Using the above, equation (\ref{bellman_u2}) becomes:
\begin{align}\label{bellman_u_final}
    Q(s_{o,l,u})=\sum_{o'^{1:l}}P_{\beta,\pi}(s_{o',l,d}|s_{o,l,u}) Q(s_{o',l,d}).
\end{align}
Next, let us define $\mu(s'_{o',l-1,d}|s_{o,l-1,d})$, which is the sum of discounted probabilities for getting from $s_{o,l-1,d}$ to $s'_{o',l-1,d}$. We will compute two transition probabilities. Using (\ref{transition_prob}) to write the transition probability for $s_{o,l,d} \to s'_{o,N-1,u}$:
\begin{align}
   &P_\pi(s'_{o,N-1,u}|s_{o,l,d})= P_\pi(s'_{o,N,u}|s_{o,l,d})= \nonumber\\
   &\sum_{o^{l+1:N}} P(s'|s_{o,N}) \prod_{i=l+1}^N \pi^{i}(o^i|s_{o,i-1})
\end{align}
Also, for $s_{o,l-1,d} \to s'_{o,l-1,u}$:
\begin{align}\label{p_transition_l-1}
    &P_{\beta,\pi}(s'_{o,l-1,u}|s_{o,l-1,d})= \nonumber \\
    &\sum_{o^{l:N-1}} \prod_{j=l}^{N-1} \beta_j(s'_{o,j}) P_\pi(s'_{o,N-1,u}|s_{o,l-1,d})
\end{align}
Next, define the discounted one-step and $k$-steps recursively as follows $\forall l\le N$:
\begin{align}\label{p_gamma_dfn_d}
    &P_\gamma^{(0)}(s'_{o',l-1,d}|s_{o,l-1,d}) = 1_{s'_{o',l-1,d} = s_{o,l-1,d}},   \nonumber \\
    &P_\gamma^{(1)}(s'_{o',l-1,d}|s_{o,l-1,d})= \nonumber \\
    &\gamma P_{\beta,\pi}(s'_{o',l-1,d}|s'_{o,l-1,u}) P_{\beta,\pi}(s'_{o,l-1,u}|s_{o,l-1,d}), \nonumber  \\
    &P_\gamma^{(k)}(s'_{o',l-1,d}|s_{o,l-1,d})= \nonumber \\
    &\sum_{s''_{o'',l-1}} P_\gamma^{(k-1)}(s'_{o',l-1,d}|s''_{o'',l-1,d})P_\gamma^{(1)}(s''_{o'',l-1,d}|s_{o,l-1,d}).
\end{align}
The definition for $\mu()$ can be written for different transitions. The one needed for HOCPGT is:
\begin{align}\label{mu2}
    \mu(s'_{o',l-1,d}|s_{o,l-1,d})=\sum_{k=0}^\infty P_\gamma^{(k)}(s'_{o',l-1,d}|s_{o,l-1,d}), 
\end{align}
holding $\forall l \le N$. Let us now define the  \textit{advantage} as follows:
\begin{align}\label{advantage}
    A(s_{o,l})=Q(s_{o,l,d}) - Q(s_{o,l-1,u})
\end{align}
The advantage answers this question: If one is able to choose, then how much advantageous it is to start executing downward from $s_{o,l}$, than to change the higher level options and try a different set of options $s_{o',l}$? The difference above in the values determines the advantage of this choice. Indeed, just like the first term $Q(s_{o,l,d})$ is the value of not terminating, the second term is the value of terminating. Thus, alternatively, one can write:
\begin{align}\label{advantage_2}
    A(s_{o,l})= Q_{\beta_l}(s_{o,l},0) - Q_{\beta_l}(s_{o,l},1)
\end{align}
using the action value functions of $\beta_l$. It is clear that
\begin{align}\label{beta_q_values}
    Q_{\beta_l}(s_{o,l},1) = Q(s_{o,l-1,u}) \ , \    Q_{\beta_l}(s_{o,l},0) = Q(s_{o,l,d}),
\end{align}
where $1$ denotes the termination action, and $0$ the opposite action. These actions have probabilities:
\begin{align}\label{beta_probs}
    \beta_l(1|s_{o,l}) := \beta_l(s_{o,l}) \ , \ \beta_l(0|s_{o,l}) :=1-\beta_l(s_{o,l}).
\end{align}
We are now ready to state the rewritten HOCPGT as written by \cite{riemer2020role}.
\begin{thm}\cite[Theorem 1]{riemer2020role}\label{HOCPGT}
$\frac{dQ}{d\theta}((s_{0})_{o_0,N-1,d})=$
\begin{align}
\begin{split}
    &\sum_{s,o^{1:N-1},s'}\mu(s',s_{o,N-1,d}|(s_{0})_{o_0,N-1,d}) \\
    &\Big[ \sum_{o^N}\frac{d\pi^N}{d\theta}(o^N|s_{o,N-1})Q(s_{o,N,d})\\
    &- \gamma \sum_{l=1}^{N-1}\Big(\prod_{k=l+1}^{N-1}\beta_k(s'_{o,k})\Big)\frac{d\beta_l}{d\theta}(s'_{o,l})A(s'_{o,l})\\
    &+\gamma \sum_{j=0}^{N-1}\Big(\prod_{k=j}^{N-1}\beta_k(s'_{o,k})\Big)\\
    &\sum_{o'^{1:j}}P_{\beta,\pi}(s'_{o',j-1,d}|s'_{o,j-1,u})\frac{d\pi^j}{d\theta}(o'^{j}|s'_{o',j-1})Q(s'_{o',j,d}) \Big]
\end{split}
\end{align}
where $\mu(s',s_{o,N-1,d}|(s_{0})_{o_0,N-1,d})$ is the discounted probability of starting at $(s_{0})_{o_0,N-1,d}$, reaching the state $s_{o,N-1,d}$, which after $\pi^N$ execution is followed by $s'_{o,N-1,u}$.
\end{thm}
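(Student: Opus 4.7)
The plan is to apply the Coagent Network policy gradient theorem (Theorem \ref{CNPGT}) directly to HOC, using the coagent structure described in Section \ref{hocpgt_rederived} where each option $o^{1:l}$ contributes two coagents $\pi^{l+1}(\cdot\mid s,o^{1:l})$ and $\beta_l(\cdot\mid s,o^{1:l})$, plus $\pi^N$ at the leaves that issues primitive actions. With initial distribution concentrated at $(s_0)_{o_0,N-1,d}$, Theorem \ref{CNPGT} expresses $\frac{dQ}{d\theta}((s_0)_{o_0,N-1,d})$ as a sum over coagents of contributions of the form $d(x_o,x\mid (s_0)_{o_0,N-1,d})\sum_{u_o}\frac{d\pi_o}{d\theta}(u_o\mid x_o)Q_{\pi_o}(x_o,u_o)$. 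Since each coagent's input state is determined by the global state of $\Pi$, the joint $(x_o,x)$ index collapses naturally to the relevant augmented state at which that coagent acts.

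Next I would identify the three coagent families and compute their discounted reaching probabilities in terms of $\mu$, the termination products, and $P_{\beta,\pi}$ from (\ref{P_beta_dfn_t})--(\ref{p_beta_gen2}). For $\pi^N$ acting at $s_{o,N-1,d}$, the state is reached directly, with discounted weight $\sum_{s'}\mu(s',s_{o,N-1,d}\mid (s_0)_{o_0,N-1,d})$, and the inner sum produces $\sum_{o^N}\frac{d\pi^N}{d\theta}(o^N\mid s_{o,N-1})Q(s_{o,N,d})$, using $Q_{\pi^N}(s_{o,N-1},o^N)=Q(s_{o,N,d})$. For $\beta_l$ acting at $s'_{o,l,u}$, the reaching probability picks up one factor of $\gamma$ for the primitive action step $s_{o,N-1,d}\to s'_{o,N-1,u}$ and then the termination product $\prod_{k=l+1}^{N-1}\beta_k(s'_{o,k})$ for the upward climb. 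For $\pi^j$ acting at $s'_{o',j-1,d}$, the reaching probability similarly has the $\gamma$ factor, the full termination product $\prod_{k=j}^{N-1}\beta_k(s'_{o,k})$ up to level $j-1$, and then the transition $P_{\beta,\pi}(s'_{o',j-1,d}\mid s'_{o,j-1,u})$ capturing any further up-and-down activity before $\pi^j$ fires.

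The one non-routine computation is rewriting the $\beta_l$ contribution in advantage form. Because $\beta_l$ has only the two actions $\{0,1\}$ with probabilities $1-\beta_l(s_{o,l})$ and $\beta_l(s_{o,l})$, we have $\tfrac{d}{d\theta}\beta_l(1\mid s_{o,l})=-\tfrac{d}{d\theta}\beta_l(0\mid s_{o,l})$; writing $\tfrac{d\beta_l}{d\theta}(s_{o,l})$ for the derivative of the termination probability, the inner sum collapses to
\begin{align*}
\sum_{b\in\{0,1\}}\frac{d\beta_l}{d\theta}(b\mid s_{o,l})Q_{\beta_l}(s_{o,l},b)=\frac{d\beta_l}{d\theta}(s_{o,l})\bigl(Q_{\beta_l}(s_{o,l},1)-Q_{\beta_l}(s_{o,l},0)\bigr)=-\frac{d\beta_l}{d\theta}(s_{o,l})A(s_{o,l}),
\end{align*}
which explains the explicit minus sign in the theorem statement. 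The $\pi^j$ contributions retain the standard policy-gradient form with $Q_{\pi^j}(s_{o',j-1},o'^j)=Q(s_{o',j,d})$ via the Bellman identity (\ref{bellman_f}). Summing the three families of terms reproduces Theorem \ref{HOCPGT} exactly.

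The main obstacle I expect is the bookkeeping around the $\pi^j$ term: one has to verify that the joint discounted probability of reaching $s'_{o',j-1,d}$ factors cleanly as $\mu(s',s_{o,N-1,d}\mid\cdot)\cdot\gamma\cdot\prod_{k=j}^{N-1}\beta_k(s'_{o,k})\cdot P_{\beta,\pi}(s'_{o',j-1,d}\mid s'_{o,j-1,u})$, rather than as a more tangled sum over intermediate termination levels. The clean factorization follows by decomposing the path according to the highest level reached during the upward phase and then applying (\ref{p_beta_gen2}), but it is exactly the place where one must be careful not to double count execution paths that overshoot level $j-1$ and come back down. Once this factorization is in hand, all three terms assemble into the stated formula.
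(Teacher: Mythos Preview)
Your proposal is correct and follows essentially the same route as the paper: apply Theorem~\ref{CNPGT} to the HOC coagent decomposition, then identify each coagent's discounted reaching weight with the $\mu$, $\gamma\prod\beta_k$, and $P_{\beta,\pi}$ factors, and finally collapse the two-action $\beta_l$ sum into the advantage form. The only cosmetic difference is that the paper (in \S\ref{hocpgt_rederived_app}) takes the execution path to begin at $s_{o,N-1,u}$ rather than $s_{o,N-1,d}$, which lets it treat $\pi^N$ on the same footing as the other $\pi^j$ instead of splitting it off as you (and the original theorem statement) do; the content is identical.
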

\begin{rmk}
The last term above might seem different from the one in the reference \cite[Theorem 1]{riemer2020role}, but they are the same. Our convention, motivated from our point of view, is to sum over each node $o'^{1:j}$ individually, while in the reference, the last node $o'^{1:N-1}$ at the bottom of the tree is taken and the sum is over its parents.
\end{rmk}
We now show how to identify our main theorem application to HOC with the above. Let us denote  $x_0 = (s_{0})_{o_0,N-1,d}$ and $x = s_{o,N-1,d}$. Recall the HOC policy is denoted by $\Pi$. Then the first term in the bracket above can be written as:
\begin{align}\label{rewrite_pi^N}
    \sum_{x} d^\Pi(x|x_0) \sum_{o^N} \frac{d\pi^N}{d\theta}(o^N|x)Q_{o^{1:N-1}}(x,o^N),
\end{align}
where $d^\Pi(x|x_0)$ is the discounted transition probability for a policy $\Pi$, for reaching $x$ from $x_0$. Notice there is no $s'$, as $\frac{d\pi^N}{d\theta}(o^N|s_{o,N-1})Q(s_{o,N,d})$ does not depend on $s'$, and so the summation over $s'$ averages out this outcome state. Also note that $Q_{o^{1:N-1}}(x,o^N)$ is the $Q$-value of $\pi^N(\cdot|o^{1:N-1})$ policy of the node $o^{1:N-1}$, hence equal to $Q(s_{o,N,d})$. Therefore, the first term is simply the policy gradient of the lowest level node.

For the second term, for a given node $o^{1:l}$ for $1 \le l \le N-1$, the term $\gamma \Big(\prod_{k=l+1}^{N-1}\beta_k(s'_{o,k})\Big)$ can be absorbed into the discounted transition probability $\mu()$ to create $d^\Pi(x'_{o,l},x|x_0)$ where $x'_{o,l} = s'_{o,l,u}$. This is the discounted transition probability from $x_0 \to x$ in any number of execution paths, and from $x \to x'_{o,l}$ within the same execution path. 
\begin{rmk}\label{path_exec_HOC}
Notice that an execution path in this setting, is an execution from $\pi^N$ followed by a path upward, then a path downward until before the next execution of $\pi^N$ takes place. Thus, as the path involves an environmental step due to the execution of $\pi^N$ at the beginning of the path, the discounted transition probability takes a discount factor $\gamma$. 
\end{rmk}
The above remark applies when going from $d^\Pi(x|x_0)$ to $d^\Pi(x'_{o,l},x|x_0)$, as is the case in $\gamma \Big(\prod_{k=l+1}^{N-1}\beta_k(s'_{o,k})\Big)$. Furthermore, using (\ref{beta_probs}),
$$
    \frac{d\beta_l}{d\theta}(1|s'_{o,l}) =  \frac{d\beta_l}{d\theta}(s'_{o,l}) \ ,  \ \frac{d\beta_l}{d\theta}(0|s'_{o,l}) = - \frac{d\beta_l}{d\theta}(s'_{o,l}).
$$
From the above, using (\ref{advantage_2}) and (\ref{beta_q_values}):
\begin{align}\label{rewrite_beta^l}
\begin{split}
    &- \sum_{s,o^{1:N-1},s'}\mu(s',s_{o,N-1,d}|(s_{0})_{o_0,N-1,d}) \\
    &\gamma \Big(\prod_{k=l+1}^{N-1}\beta_k(s'_{o,k})\Big)\frac{d\beta_l}{d\theta}(s'_{o,l})A(s'_{o,l}) =   \\ 
    &\sum_{x'_{o,l}, x} d^\Pi(x'_{o,l},x|x_0) \sum_{a=0,1} \frac{d\beta_l}{d\theta}(a|x'_{o,l}) Q_{\beta_l}(x'_{o,l},a).
\end{split}
\end{align}
Similar to (\ref{rewrite_pi^N}), one can see that (\ref{rewrite_beta^l}) is the usual policy gradient theorem for the termination policies $\beta_l$ at $o^{1:l}$. Finally, for the last term, for each $\pi^j$ at $o'^{1:j-1}$:
\begin{align}\label{rewrite_pi^l}
\begin{split}
    &\sum_{s,o^{1:N-1},s'}\mu(s',s_{o,N-1,d}|(s_{0})_{o_0,N-1,d}) \\
    &\Big[\gamma \Big(\prod_{k=j}^{N-1}\beta_k(s'_{o,k})\Big)  P_{\beta,\pi}(s'_{o',j-1,d}|s'_{o,j-1,u}) \times \\
    &\sum_{o'^j}\frac{d\pi^j}{d\theta}(o'^{j}|s'_{o',j-1})Q(s'_{o',j,d})\Big ] = \\ 
    &\sum_{x'_{o',j-1},x} d^\Pi(x'_{o',j-1},x|x_0) \\
    &\sum_{o'^j}\frac{d\pi^j}{d\theta}(o'^{j}|x'_{o',j-1})Q_{o'^{1:j-1}}(x'_{o',j-1}, o'^j)
\end{split}
\end{align}
Where $d^\Pi(x'_{o',j-1},x|x_0)$ is the discounted transition probability from $x_0 \to x$ in any number of execution paths, and from $x \to x'_{o',j-1}= s'_{o',j-1}$ within the same execution path. The factor $\gamma \Big(\prod_{k=j}^{N-1}\beta_k(s'_{o,k})\Big) P_{\beta,\pi}(s'_{o',j-1,d}|s'_{o,j-1,u})$ is what describes the discounted probability of this last transition within the same execution path.
\begin{rmk}
% There exists another proof of HOCPGT (\ref{HOCPGT}) in \cite[See appendix]{riemer2020role} using the CPGT in \cite{kostas2020asynchronous}, though it relies on the assumption that one has a shared parameter version of the synchronous PGT (a corollary of our theorem).
Notice our proof does not involve any further state-augmentation. This is in contrast to the proof in \cite[App. C]{riemer2020role}, where something called the termination vector $T$ is defined and the graph structure is completely forgotten to be able to cast HOC into a synchronous coagent network (only two coagents). This shows that our framework can provide more intuitive ways for computing the policy gradient of coagent networks.
\end{rmk}

\section{Non-Examples of Coagent Network}\label{nonexamples_cn}
\textbf{Attention Option-Critic (AOC). }The AOC model \cite{chunduru2020attention} is an OC augmented with attention mechanisms $h_\omega$ for each option $\omega$. The state $s_\omega$, input for $\pi_\omega$ and its termination function $\beta_\omega$, is obtained as follows:
\begin{itemize}
    \item Take the environment state $s$ along with the address of the option and the mode of execution (like $s_{o,2,d}$),
    \item change the environment state $s$ to $s_\omega = h_\omega(s) \odot s$ where $h_\omega(s)$ has the same shape as $s$ with values in $[0,1]$, and $\odot$ is the Hadamard multiplication.
\end{itemize} 
The motivation behind this is to focus each option on parts of the states that are relevant. This model leverages attention to show that it is possible to achieve useful options with \textit{reasonable} lengths. 

The AOC learning algorithm can not be recast as a policy gradient of a CN. The main obstacle is the attention mechanism, which is \textit{deterministic}, and thus is not a coagent. It is learned using gradients of the option action value. 

However, following Remark \ref{makingstochastic}, if the attention mechanisms were stochastic, then we could apply the PGT as in Theorem \ref{CNPGT}. The coagents in this case are the policies, terminations and the attention mechanisms. The rule $\Delta$ is similar to that of HOC, where we let the attention corresponding to an option choose an action when execution mode is $d$. 

\textbf{Hierarchical Actor-Critic (HAC). }From the point of view of \cref{CNdfn}, there are two ways that HAC \cite{levy2017hierarchical} fails to be a CN. 

First, we recall how HAC functions. HAC is a hierarchical goal-based model of policies $\pi^1,\ldots,\pi^N$ where each policy selects a goal $g_i$ to be achieved by $\pi^{i+1}$. Each policy has a fixed time horizon of $K$ steps in which it has the chance to achieve the goal. It terminates once it reaches the goal or after $K$ many executions, and the parent selects a new goal. As a result, the policy $\pi^i$ acts again after at most $K^{N-i}$ many time-steps. The reward given to a policy is based on how close the current state is to the goal, which is measured by a metric defined by the user. The edge case of $g_0$ (the goal for $\pi^1$) is either set by the user or simply not chosen, in which case the reward for $\pi^1$ is just the environmental reward. In practice, the reward for $\pi^i$ is a linear combination of the goal reward and the environmental reward. Clearly, for the more abstract policies the environmental reward should have more weight. In any case, it is clear that the policies do not have a single reward source (\cref{rmk:singlereward}).

Another CN assumption that HAC violates is the Markov property. This difference can be resolved, at least in theory. If one considers, say, the mid-level policy of a $3-$level HAC, its action depends on the subgoal $g_1$ it has received from the highest policy for a fixed $K^2$ executions paths (environmental time steps) or until it reaches the subgoal. During each $K$ executions of the $K^2$ executions, the mid-level policy is only called once by the lowest policy trying to reach the subgoal $g_2$ assigned by the mid-policy during that $K$ time-steps. The mid-policy has to remember the subgoal $g_1$ it was assigned to in the first execution path for the next $K^2$ execution paths. Note that such non-Markovianness is also present in the case of asynchronous firing in \cite{kostas2020asynchronous} as remarked in the main text. This is not an insurmountable theoretical barrier in viewing these models as CNs, at least if one is willing to augment the state space by including all histories. In the case of goal-conditioned models, one can provide the information of all selected subgoals into the state. 
% As long as the number of hierarchies $N$ is not too large, as the dimension of the state gets larger proportionally since it has to record all assigned subgoals. Thus, 
Thus, more generally, by the same Markov trick, the Markov condition in \cref{CNdfn} can be lifted.

\section{More on Examples of Coagent Networks}\label{appsec:more_on_examples} 
% \begin{rmk}
% Generally, $x$ itself contains the information regarding the state of the environment and the state of the agent, such as which coagent should pick up the environment state and start the execution. Also, each $x_o \in \mcS_{\pi_o}$ contains some information regarding the original state $x \in \mcS_\Pi$ and some information that may have been received from the previous coagents in the execution path.
% \end{rmk}

% \begin{rmk}
% For the examples of CNs in this text, only a high-level description of $\Delta$ and the state spaces will be given, as the detailed abstract formulae do not add much to the understanding of the model. As alluded to in the beginning, the picture to have in mind of such models \textit{is a network of stochastic policies which process some information and pass it to each other until a final decision is made}. The model and the next theorem, apply to \textit{any agent which decision making process can be broken down to a set of stochastic policies sharing a single reward source}. 
% \end{rmk}
% THIS SHOULD GO ELSEWHERE
% The usual convergence theorem for the policy gradient theorem applies, as the Theorem \ref{CNPGT} is essentially a rewrite of the policy gradient for $\Pi$.
% \begin{thm}\label{convergence_guarantee_on_policy}
% The following holds \textbf{(with some Lipschitz conditions as in \cite{bertsekas2000gradient})}: The unbiased policy update above makes $J_\Pi(\theta)$ converge to a finite value and $\lim_{t\to \infty} \grad_\theta J_\Pi =0$.
% \end{thm}

The freedom in the graph structure and parameter sharing pattern, could allow us to exploit relational biases in a given environment/task to build more powerful RL agents. We mention a general but customizable coagent network, followed by two other examples in the literature.

\textbf{1) Coagent networks with cycles or loops.} Let us consider a cycle for the CN graph and design a corresponding rule $\Delta$. The nodes are labeled by $o_n$ for $n \in \{1, \ldots, N\}$, where each node, depending on the input, makes a primitive action or sends some information via its action $u_n$ to the next node (with $o_N$ sending to $o_1$). The description of the rule $\Delta$ is as follows: one assigns each state $s$ of the environment two integers $n_s \in \{1,\ldots,N\}$ and $l_s \in \mbbN$, which means the node $o_{n_s}$ is the one that performs a primitive action at environment state $s$ after $s$ has passed through $l_s$ many cycles. More generally, one can imagine $N$ decision functions $f_{n,s} : \mcS_{\pi_{o_n}} \to \{1, \ 0\}$ for any $s\in \mcS_{\env}$ and $n \in \{1, \ldots, N\}$, where $1$ means primitive and $0$ non-primitive. Depending on the input of the node, after finitely many coagent executions, we have $f_{n,s}(x_{o_n}) = 1$. This should be guaranteed by the programmer to happen to avoid infinite loops. This forces the node $o_n$ to apply a primitive action at $x_{o_n}$, which may contain $s$ as the information regarding the environment.

\textbf{2) Options of Interest. }One of the structures that one can add to the HOC is an interest function $I_\omega$ for each option $\omega$ \cite{khetarpal2020options}. One interpretation of this function is that it introduces a deterministic part in the process of the selection of $\omega$ by its parent $\Omega$, which is unique to $\omega$ and disjoint from the process of selection of the other children. This modularity has an effect similar to attention in Attention Option-Critic, discussed in \cref{nonexamples_cn}. We can still apply \cref{CNPGT} by combining the parent with the interest functions of its children, i.e. the new coagent being $\pi^{new}_{\Omega} (\omega|s) = I_\omega(s) \pi^{old}_{\Omega} (\omega|s) $. Then \cref{CNPGT} applied to $\{\pi_o^{new}, \beta_o\}_{\text{options }o}$ will give the same PGT in \cite{khetarpal2020options}.

\textbf{3) Stochastic Neural Nets. }Stochastic Neural Nets, in their full generality, have stochastic activation functions and/or weights. In all cases, each neuron forms a coagent with its parameters being the parameters of the activation function along with the weights connected to the neuron that compute its input. A more refined decomposition of the agent to coagents is possible if the weights are stochastic, in which case, every weight is also a coagent by itself. Notice that although our discussion has been mainly around RL applications, we can apply our results to the supervised learning setting by simply using a discount factor $\gamma = 0$, although we naturally expect supervised learning algorithms to be superior in their own field of application.
% The behavior of a stochastic network can approach that of a deterministic network, if for example, the stochastic parameters are Gaussian distributions with very small variance. Stochasticity is also used for improving exploration in a model.
\begin{rmk}\label{makingstochastic}
As a general idea, one could inject stochasticity into deterministic (parts of) RL models and apply the coagent network policy gradient theorem, and potentially obtain better results, as demonstrated in the noisy DQN model \cite{fortunato2017noisy}.
\end{rmk}

% \subsection{HOC}\label{hocpgt_rederived_app}
% To obtain \cref{HOCPGT}, we simply apply (\ref{CNPGT}) giving us the same equations in (\ref{rewrite_pi^N}),(\ref{rewrite_beta^l}), and (\ref{rewrite_pi^l}) for HOCPGT with some small changes due to our different definition of execution paths. More precisely, since the start of the path is at $s_{o,N-1,u}$, one first obtains the terminations policy gradients:
% \begin{align}
%     \sum_{x_{o,l}, x} d^\Pi(x_{o,l},x_o|x_0) \sum_{a=0,1} \frac{d\beta_l}{d\theta}(a|x_{o,l}) Q_{\beta_l}(x_{o,l},a)  
% \end{align}
% where $x = s_{o,N-1,u}, x_{o,l} = s_{o,l,u}$ and $x_0 = (s_0)_{o_0,N-1,u}$. Then going the downward phase, one obtains the policy gradient of $\pi$'s
% \begin{align}
% \begin{split}
%     &\sum_{x_{o',j-1},x} d^\Pi(x_{o',j-1},x_o|x_0)\\
%     &\sum_{o'^j}\frac{d\pi^j}{d\theta}(o'^{j}|x_{o',j-1})Q_{\pi_{o'^{1:j-1}}}(x_{o',j-1}, o'^j),
% \end{split}
% \end{align}
% where $x_{o',j-1} = s_{o',j-1,d}$ and $\pi_{o'^{1:j-1}}(-|s) = \pi^j(-|s,o'^{1:j-1})$ is the policy of the option $o'^{1:j-1}$. Note the above is for all $j=1,\ldots,N$ and it is unlike (\ref{rewrite_pi^N}) and  (\ref{rewrite_pi^l}), where $j=N$ had to be separated from the rest. The reason for the separation is that some reward had to be counted \textit{only} towards the value of $\pi^N$, which executed the primitive action first.

\section{Synchronous Coagent Network PGT}\label{appssec:synchronous_full_proof}
We finish the proof by explaining the marginalizations mentioned after \cref{eq:synch_before_marginalization}. We first note that the marginalization of $Q_{\Pi_{i}}(X_{i},U_{i})$ over $(u_{i,o'})_{o'\neq o}$ given by the weighting $\prod_{o'\neq o} \pi_{i,o'}(u_{i,o'}|x_{i,o'})$, gives the $Q$-value $Q(X_i,u_{i,o})$ as the latter is naturally defined as $\sum_{U_i}\prod_{o'\neq o} \pi_{i,o'}(u_{i,o'}|x_{i,o'})Q_{\Pi_{i}}(X_{i},U_{i})$, where the sum is over all $U_i$ such that $o$'s action is $u_{i,o}$. However, according to PGT, we should compute $Q_{\pi_{i,o}}(x_{i,o}, u_{i,o})$, which can be done as follows
\begin{align}
    Q_{\pi_{i,o}}(x_{i,o}, u_{i,o}) = \sum_{X_i} P( X_i|x_{i,o})Q(X_i,u_{i,o}).
\end{align}
The probability $P(X_i|x_{i,o})$ is the probability that within the same time step of $x_{i,o}$, the state of $\Pi_i$ is $X_i$. This probability is dependent on the environment MDP and $\Pi$. One can recover this probability from the discounted state action occupancy as follows:
\begin{align}
\begin{split}
     \sum_{x_0,x} d(x_0) d(X_i,x|x_0) &= \sum_{x_0,x} d(X_i,x,x_0 ) = d(X_i) \\
     &= d(x_{i,o}, (x_{i,o'})_{o'\neq o}) \\
     &= d(x_{i,o}) P((x_{i,o'})_{o'\neq o}|x_{i,o}) \\
     &=  d(x_{i,o}) P(X_i|x_{i,o}).
\end{split}
\end{align}
The last two equalities require further explanation. We recall the definition of the discounted state action occupancy
\begin{align}
\begin{split}
    &d(x_{i,o}, (x_{i,o'})_{o'\neq o}) = \\ 
    &\sum_{k=1}^\infty \gamma^k P^{(k)}(x_{i,o}, (x_{i,o'})_{o'\neq o}) = \\
    &\sum_{k=1}^\infty \gamma^k P^{(k)}(x_{i,o}) P^{(k)}((x_{i,o'})_{o'\neq o}|x_{i,o}) 
\end{split}
\end{align}
Here $P^{(k)}$ is the state action occupancy corresponding to the $k$-th step, its value depending on $k$, $\Pi$, and the MDP $\mcM$. We need to show $P^{(k)}((x_{i,o'})_{o'\neq o}|x_{i,o}) = P((x_{i,o'})_{o'\neq o}|x_{i,o})$, i.e. that it is independent of $k$. Notice that the policy $\Pi$'s internal actions and outputs does not depend on $k$, but on $s$; and while the distribution of $s$ itself indeed depends on ($\Pi$, $\mcM$, $k$), we note that the state $s$ is already given in $x_{i,o}=(U_{i,o}^\pre,s)$. Thus, the term $P^{(k)}((x_{i,o'})_{o'\neq o}|x_{i,o}) $ is independent of $k$.

Finally, after marginalizing $Q(X_i,u_{i,o})$ over $(x_{i,o'})_{o'\neq o}$, we reintroduce the one over $x_0,x$:
\begin{align}
\begin{split}
        &\sum_{x_0 \in \mcS_{\Pi,\op{init}}} d(x_0) \sum_{i,o} \sum_{x\in S_{\Pi}, x_{i,o} \in \mcS_{\pi_{i,o}}} d(x_{i,o}, x|x_0)  \\
        &\sum_{u_{i,o} \in \mcA_{x_{i,o}}}  \frac{d\pi_{i,o}}{d\theta}(u_{i,o}|x_{i,o})Q_{\pi_{i,o}}(x_{i,o},u_{i,o}).
\end{split}
\end{align}

\section{Asynchronous Coagent Network PGT}\label{appssec:asynchronous_full_proof}

For thoroughness, we explicitly state the changes that one has to make to the argument in \cite{kostas2020asynchronous} in order to show that the PGT of the synchronous CN obtained after state augmentation simplifies to the the asynchronous coagents policy gradients. As mentioned in the main text, the argument around this has no relevance to parameter sharing. Therefore, as illustrated below, we note that we simply have replaced the nonshared parameters $\theta_i$ by $\theta$ in the argument in \cite[p. 7]{kostas2020asynchronous}, and summed over $i$ for the policy gradients $\nabla J, \nabla \grave J$:
\begin{quote}
``[...] Having shown that the expected return in the asynchronous setting is equal to the expected return in the synchronous setting, we turn to deriving the asynchronous local policy gradient, $\Delta_i$. It follows from $J(\theta) = \grave J(\theta)$ that $\nabla J(\theta) = \nabla \grave  J(\theta)$. Since $\grave \pi$ is a synchronous, acylic network, and $\grave M$ is an MDP, we can apply the CPGT to find an expression for  $\nabla \grave  J(\theta)$.
For [shared parameter] synchronous network, this gives us $\frac{\partial \grave J(\theta)}{\partial \theta} =$
$$
     \sum_i \mathbb{E} \Big[ \sum_{t=0}^\infty \grave{ 
 \gamma}^t \grave G_t  \frac{\partial \ln\left ( \grave{\pi}_i\left ( (\grave S_t,\grave{U}_t^\text{pre}), \grave{U}_t, \theta \right ) \right )}{\partial \theta} \Big | \theta \Big].
$$

Consider $\partial \ln \left ( \grave{\pi}_i ( (\grave S_t,\grave{U}_t^\text{pre}), \grave{U}_t, \theta  ) \right ) / \partial \theta$, which we abbreviate as $\partial \grave{\pi}_i / \partial \theta$.
When $\grave{U}^{\pre}_t.e_i = 0$, we know that the action is $\grave{U}^i_{t} = \grave{S}_t.u_i = \grave{U}^i_{t-1}$ regardless of $\theta$.
Therefore, in these local states, $\partial \grave{\pi}_i / \partial \theta$ is zero.
When $\grave{U}^{\pre}_t.e_i = 1$, we see from the definition of $\grave{\pi}$ that $\partial \grave{\pi}_i / \partial \theta {=} \partial \pi_i / \partial \theta$.
Therefore, we see that in all cases, $\partial \grave{\pi}_i / \partial \theta {=} (\grave{U}^{\pre}_t.e_i) \partial \pi / \partial \theta$.
Substituting this into the above expression yields:
\begin{align*}
    \mathbb{E} \Big[& \sum_{t=0}^\infty (\grave{U}^{\pre}.e_i) \grave{\gamma}^t  \grave{G}_t \cdot \\
    &\frac{\partial \ln\left ( \grave{\pi}_i\left ( (\grave S_t.s,\grave{U}_t^\text{pre},\grave S_t.u^{\text{all}}), \grave{U}_t, \theta \right ) \right )}{\partial \theta} \Big | \theta \Big].
\end{align*}
In the proof that $J(\theta) = \grave J(\theta)$ given in Section C of the supplementary material, we show that the distribution over all analogous random variables is equivalent in both settings (for example, for all $s \in \mathcal S$, $\Pr(S_t=s) = \Pr(\grave S_t.s = s)$).
Substituting each of the random variables of $M$ into the above expression yields precisely the asynchronous local policy gradient, $\Delta_i$.''
\end{quote}

\begin{algorithm}
\caption{Update and helper functions for Algorithm \ref{TabularAlg}}\label{UpdateAndHelperFunctionsTabular}
\begin{algorithmic}[1]
\Procedure{PrimitiveAction}{$s, t, o$}
\Repeat
\State $o.ActivationTime \gets t, \ \ o.Reward \gets 0$
\State $o' \gets \pi_{o}(s)$ \indent \indent \indent // $o' \in o.Children$ 
\State $PathToRoot.append(o')$
\State $o.PrevAction = o', \ \ o.LastObservation = s$
\State $o'.ActiveParent = o$
\State $o\gets o'$
\Until{$o$ is primitive}
\State \textbf{return} $o, PathToRoot$
\EndProcedure
\Procedure{UpdateCritics}{$s, PathToRoot, t, \text{done}$} \label{updatecriticsalgo}
\State // To compute $v_{o}$ at $s$, refer to \ref{parentfortargetcomputation}
\For{$o \in PathToRoot.Reverse$} 
\State // Starting from $o =$ root, where $\beta_o \equiv 0$
\State $v_{o} \gets (1-\beta_o(s))Q_{\pi_{o.ActiveParent}}(s,o)+\beta_o v_{o.ActiveParent}$ \label{UpdateCriticsParentTarget}
\EndFor
\For{\texttt{$o \in PathToRoot$}}
\State $x_o, u_o \gets o.LastObservation, o.PrevAction$
\State $\delta_o \gets Q_{\pi_o}(x_o,u_o) - (o.Reward + \gamma^{t-o.ActivationTime}v_{o})$
\State $Q_{\pi_o}(x_o,u_o) \gets Q_{\pi_o}(x_o,u_o)+\alpha_Q\delta_o$
\If{not $o.Terminated$}
\State \textbf{break} 
\EndIf
\EndFor
\EndProcedure
\Procedure{UpdateActors}{$PathToRoot, t, \text{done}$}
\For{\texttt{$o \in PathToRoot$}}
\State $c \gets o.PrevAction$
\State $x_o \gets o.LastObservation$
\State // Using active parent action value instead of $\max_{o'} Q_{\pi_o}(x_o,o')$ for baseline
\State $\theta_{\pi_o} \gets \theta_{\pi_o} + \alpha_\pi\frac{d\pi_o}{d\theta_{\pi_o}}(x_o,c)(Q_{\pi_o}(x_o,c) - Q_{\pi_{o.ActiveParent}}(x_o,o))$ 
\If{not $o.Terminated$}
\State \textbf{break}
\EndIf
\EndFor
\EndProcedure
\Procedure{UpdateBetas}{$s,PathToRoot,t,\omega$}
\State // Not the same as in line \ref{UpdateCriticsParentTarget}, only computing $v_\omega$ 
\For{$o \in PathToRoot.Reverse$}
\State $v_{o} \gets (1-\beta_o(s))\max_{o'}Q_{\pi_{o}}(s,o')+\beta_o v_{o.ActiveParent}$ \label{UpdateTerminationParentTarget}
\If{$o == \omega$}
\State \textbf{break}
\EndIf
\EndFor
\State\label{vomegauseline} $q \gets \max_{o'}Q_{\pi_\omega}(s,o')$   // used in line \ref{vomegauseline2}
\State $Probs \gets 1$ \label{impsamplingbeta}
\For{\texttt{$o \in PathToRoot$}}
\If{$o.Terminated$}
    \State \label{vomegauseline2}  $q_{\beta_o} \gets q$  // see \ref{vomegause} 
    \State $\theta_{\beta_o} \gets \theta_{\beta_o} + \alpha_\beta  \cdot Probs \cdot \frac{d\beta_o}{d\theta_{\beta_o}}(s)(q_{\beta_o}-v_{o})$
    \State $Probs \gets Probs \cdot \beta_o(s)$
\Else  // $o=\omega$ and the opposite of the gradient $\beta_o$ has to be used (see \ref{wrongbetaupdate})
    \State $q_{\beta_o} \gets \max_{o'} Q_{\pi_{o}}(s,o')$
    \State $\theta_{\beta_o} \gets \theta_{\beta_o} - \alpha_\beta  \cdot Probs  \cdot \frac{d\beta_o}{d\theta_{\beta_o}}(s)(q_{\beta_o}-v_{o})$
    \State \textbf{break} 
    \State // No update for options that have not been called back
\EndIf
\EndFor 
\EndProcedure
\end{algorithmic}
\end{algorithm}

\section{Inherent learning Stabilization in Hierarchical Options}\label{FON_app}
We want to understand the algorithm for $\left < 1,1 \right >$ by building it piece by piece starting from an AC algorithm and changing it where appropriate. For the sake of illustration, we will assume that we have a fixed value of termination $\frac{1}{n}$ for some $n>0$. This means the option terminates every $n$ steps. This is also somewhat of a reasonable assumption for $\langle 1 , 1\rangle$, as observed in some of the FR experiments, where unless termination temperature is low, the termination function (more or less) stays constant throughout the training of $\left < 1,1 \right >$ (\cref{12and122optionlength,1and11and111optionlength}).

First, note that the root has one action (selecting the child), and therefore, its action value is the value function of the children. Therefore, let us take a vanilla AC algorithm and add a value target network computing $V_{target}$ for the algorithm, where:
\begin{enumerate}
    \item It is only updated every $n$ ($=(\frac{1}{n})^{-1}$) steps using its own rollout update rule 
    \begin{align*}
        V_{target}(s)\leftarrow& r_1+\gamma r_2+\ldots +\\
    &\gamma^{n-1}r_{n-1} + \gamma^n V_{target}(s^{(n)}).
    \end{align*}
    \item The target is used to soft-update the $Q$-value of the AC, where instead of choosing $V_{target}(s')$ to build the target of $Q(s,a)$, we choose $\frac{1}{n}V_{target}(s')+ (1-\frac{1}{n})\max_a Q(s',a)$, meaning
        $$Q(s,a) \leftarrow r + \gamma( \frac{1}{n}V_{target}(s')+ (1-\frac{1}{n})\max_a Q(s',a)),$$
    instead of  
        $$Q(s,a) \leftarrow r + \gamma V_{target}(s').$$
\end{enumerate}    
Notice there are two effects: (1) the rollout effect that trickles down from (a) to the update rule (b), (2) the target network used softly in the update, with the rate being the same as its update frequency, i.e. $1/n$.

The above assumed a fixed termination function. In general, the termination function also learns, so the \textit{rate of the target network update} is learned. In summary, $\left < 1,1 \right >$ can be viewed as an actor-critic model which is soft-updated with a value target network with rollout and a learnable rate of update. This is the reason behind $\left < 1,1 \right >$ outperforming the vanilla AC model as it is equipped with several improvements. This ``target effect'' is generalized/compounded if one uses more levels of hierarchy, i.e. $\left< 1, 1, \ldots ,1 \right >$. This also explains why in other hierarchical networks, such as HAC, it is unnecessary to use a target network. 
% Understanding the exact benefit of having more levels to stability is the subject of future research.

\section{Experiment Settings}\label{appsec:experiments}
At each episode of the Four-Rooms task, the agent starts at a random cell and the goal state to be reached is also picked uniformly from the 104 cells, which are divided into four square rooms dividing a larger square, with four one-cell corridors connecting the adjacent rooms. Each action succeeds with a probability of $\frac{2}{3}$, and if failed, the agent goes randomly to one of the other adjacent cells. The reward is $1$ if the goal is reached, and zero otherwise.
In all models tested below, 
% the weight sharing pattern is the most trivial one, i.e. there is no weight sharing among the coagents, as the experiments are tabular.
% Furthermore, 
all coagents $Q$-values are initialized at zero, with policies being uniform distributions, e.g. the termination probabilities are $\frac{1}{2}$. Unless mentioned otherwise, we take 50 runs with five to ten different random seeds for 50000 episodes, and show the 500 moving average of the quantity of interest, which is either the number of steps to the goal or the option lengths.

\textbf{No target network used. }As mentioned previously and theoretically justified in \cref{FON_app}, each parent is effectively acting as a value target network with rollout. 
% For example, the simple FON $\left < 1,1 \right >$ is an AC model which is soft-updated using a value target network with rollout at a learnable rate (determined by the termination function of the child). 

\textbf{Hyperparameters. }The hyperparameters in all experiments are the same, unless mentioned explicitly otherwise. We choose $\gamma = 0.99$, termination and actor temperature $1$ and  $0.01$, and termination, critic and actor learning rates $\alpha_\beta = 0.001, \alpha_Q = 0.01$ and $\alpha_\pi = 0.00001$, respectively. More details can be found in the code.

\section{Experiments on Performance}\label{sec:exp_performance}
\textbf{Analysis and comparison of results. }Our OC/HOC models generally outperform their previous versions in the literature as shown in \cref{treesvsmatt}. The experiments also demonstrate that increasing the number of lowest options is not always beneficial (\cref{100000epsrun}). 
This finding is confirmed by \cite{riemer2018learning}, but as further illustrated in \cref{100000epsrun} for HOC and FON models, more options simply means slower learning for this task, and not a reduction in the best possible performance. However, taking into account the efficiency of our algorithm as discussed in \cref{sec:algorithm}, the training time for HOC with $4$ levels to reach score below 150 in 100000 episodes is less than that of \cite{riemer2018learning} (reached in 50000 episodes).

\begin{figure}[h!]
    \centering
    \includegraphics[width = \columnwidth]{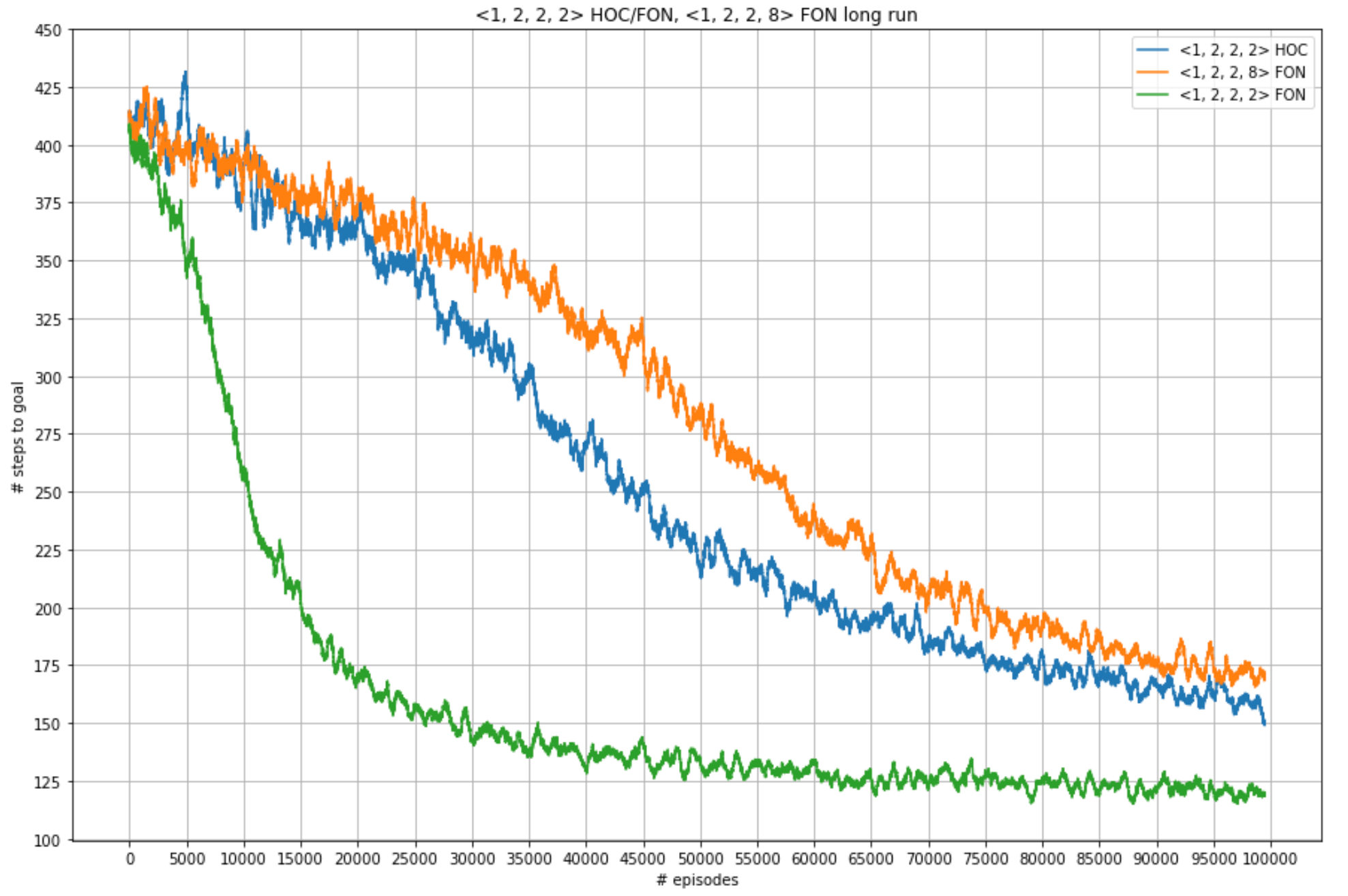}
    \caption{The results for the HOC model $\left<1, 2, 2, 2\right>$ and FON models $\left<1, 2, 2, 8\right>$ and $\left<1, 2, 2, 2\right>$ runs for 100000 episodes for 5 random seeds. We see the eventual convergence of the HOC to the performance achieved by the same architecture in \cite[Fig. 3]{riemer2018learning} for HOC ($N=4$). We hypothesize the slower convergence of the HOC model is mostly due to the number of lowest options present in this model. This is demonstrated by the similar performance of the FON model $\left<1, 2, 2, 8\right>$ with the same number of lowest options, versus the better performing $\left<1,2,2,2\right>$ FON model with the same number of levels of hierarchy but only two lowest options.}
    \label{100000epsrun}
\end{figure}

\textbf{Comparison of FON to AC. }We mentioned in \cref{FON} (with more details in \cref{FON_app}) that the FON model $\left < 1,1 \right >$ has some improvements over a vanilla Actor-Critic model which is represented by $\left < 1 \right >$. We see this fact reflected in the performance of $\left < 1,1 \right >$ in \cref{fonruns}, compared to all of the other models, even the deeper ones.

\begin{figure}[h!]
    \centering
    \includegraphics[width = \columnwidth]{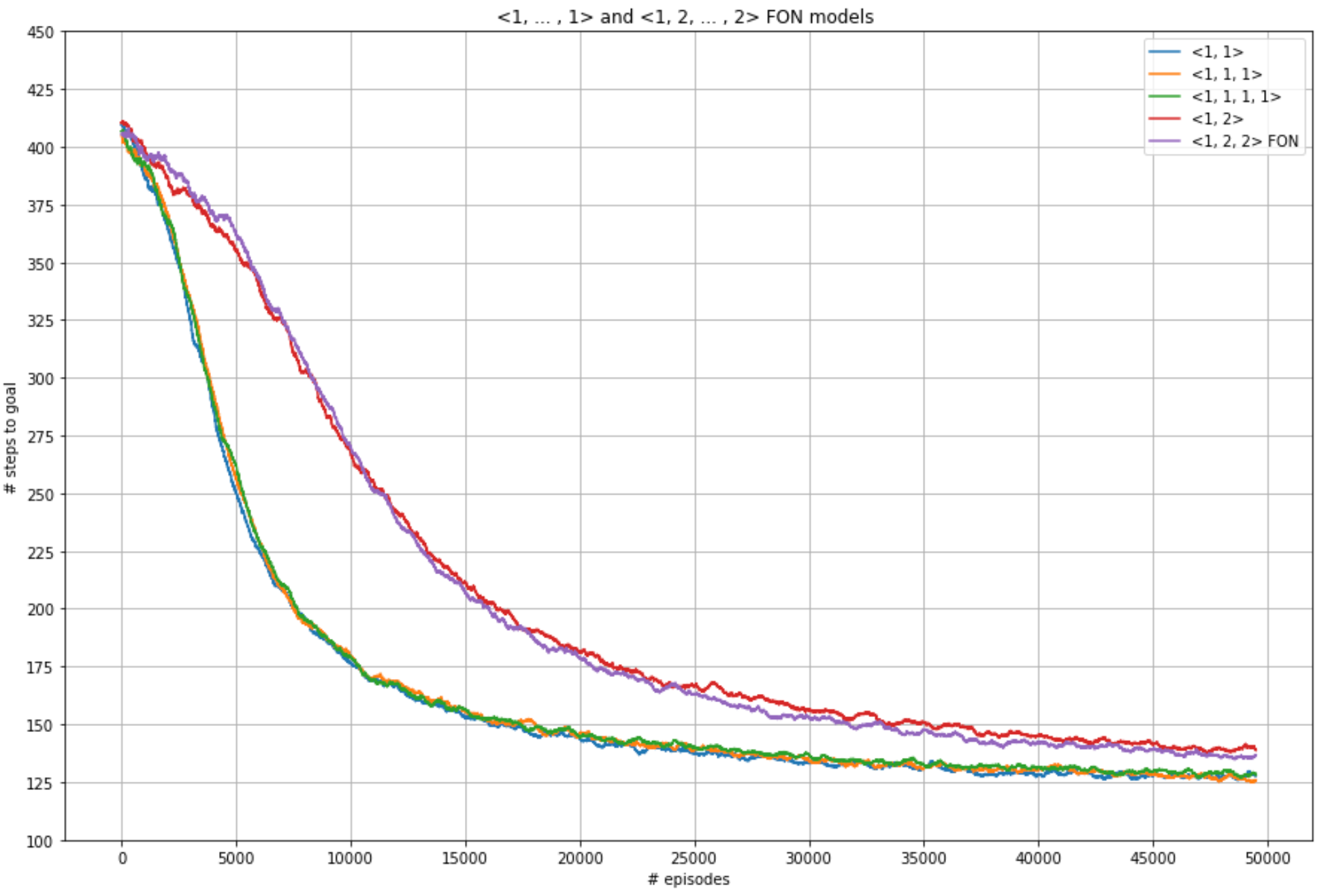}
    \caption{We compare the results of some simple FON models. The best model, in terms of architectural simplicity, convergence time and eventual performance is the simplest model $\left<1, 1\right>$. This also outperforms the previous HOC models in \cref{treesvsmatt}. Note that only two options, a root and a child are present in this model, and the primitive actions are executed by the child. During learning, the presence of another option (the root) is essential. This is further discussed in \ref{FON_app} and also supported by the result of the AC model in \cite[Fig. 3]{riemer2018learning}.}
    \label{fonruns}
\end{figure}

\section{Experiments on Option Properties}\label{sec:optlengthandtermtemp}
% \textbf{Previous solutions to early termination. }
A main issue in option models has been the length of the options, with the model converging to one option dominating, or most options being of very small length (high termination probability). In the incorrect update model of $\beta$ (as discussed in \cref{wrongbetaupdate}), a proposal was to introduce a deliberation cost $\eta$ to the advantage function in the update rule.
% $$d\theta_{\beta_o} \gets d\theta_{\beta_o}  - \alpha_{\beta_o} \frac{d\beta_o(s)}{d\theta_{\beta_o}} (A_{\beta_o}(s)+\eta).$$
% This discourages frequent switching of options as it incurs a cost $\eta$. 
% One can show that this model (in theory) tries to achieve ``an $\eta$-optimal policy'', details of which are discussed in \cite{harb2018waiting}.
% Two other proposals to solve this issue include interest functions or attention modules (see \cref{nonexamples_cn,appsec:more_on_examples}). 

\textbf{Tuning the termination temperature. }Here, we propose a simple method which does not require any additional hyperparameter or architectural modification. We examine the effect of lowering the temperature of the sigmoid function used in computing $\beta$. The idea is that as action values are initialized with $0$s, the termination probabilities initially drop. Then, due to the numerics of a sigmoid function with low temperature, it should become numerically harder for said probabilities to rise, thus contributing to lengthier options.
% More precisely, in the beginning, the lower options will be updated twice more often than the higher options, as the termination probability is initialized at $\frac{1}{2}$. Thus, the value functions of lower options, at any given state, will be larger than those of the higher options. If temperature is chosen low enough, this makes the termination probability drop quickly at the beginning. As long as the lower options are able to collect just enough positive rewards, this turns into a feedback loop, thereby lowering the termination probability even further.
However, this reduces the update frequency of higher options. Over a long period of time, one wonders if the higher options can learn their value function as well, which requires experimentation.
% but since the termination probability is too small with temperature too low, it becomes numerically hard for it to rise. 
% Thus, eventually our model has high option lengths, but whether its performance is as good as previous models requires experimentation.

\textbf{$\left < 1, \ldots, 1\right>$ and $\left < 1, 2, \ldots, 2\right>$ FON models with low temperature. }The models we experiment with have termination temperature 0.05, compared to the normal 1.0 in the previous section. Option lengths dramatically increase (\cref{12and122optionlength,1and11and111optionlength}), spanning a significant part of the episode length while leaving room for other options to be used/called back as well. So no option in the $\left < 1, 2, \ldots, 2\right>$ FON models becomes dominant, as also shown by their almost equal length, meaning both lowest options are active participants in the task.  \cref{12slowandnormaltemp} shows that lowering the temperature slightly degrades the performance in 50000 episodes but longer runs may close this gap.
\begin{figure}[h!]
    \centering
    \includegraphics[width = \columnwidth]{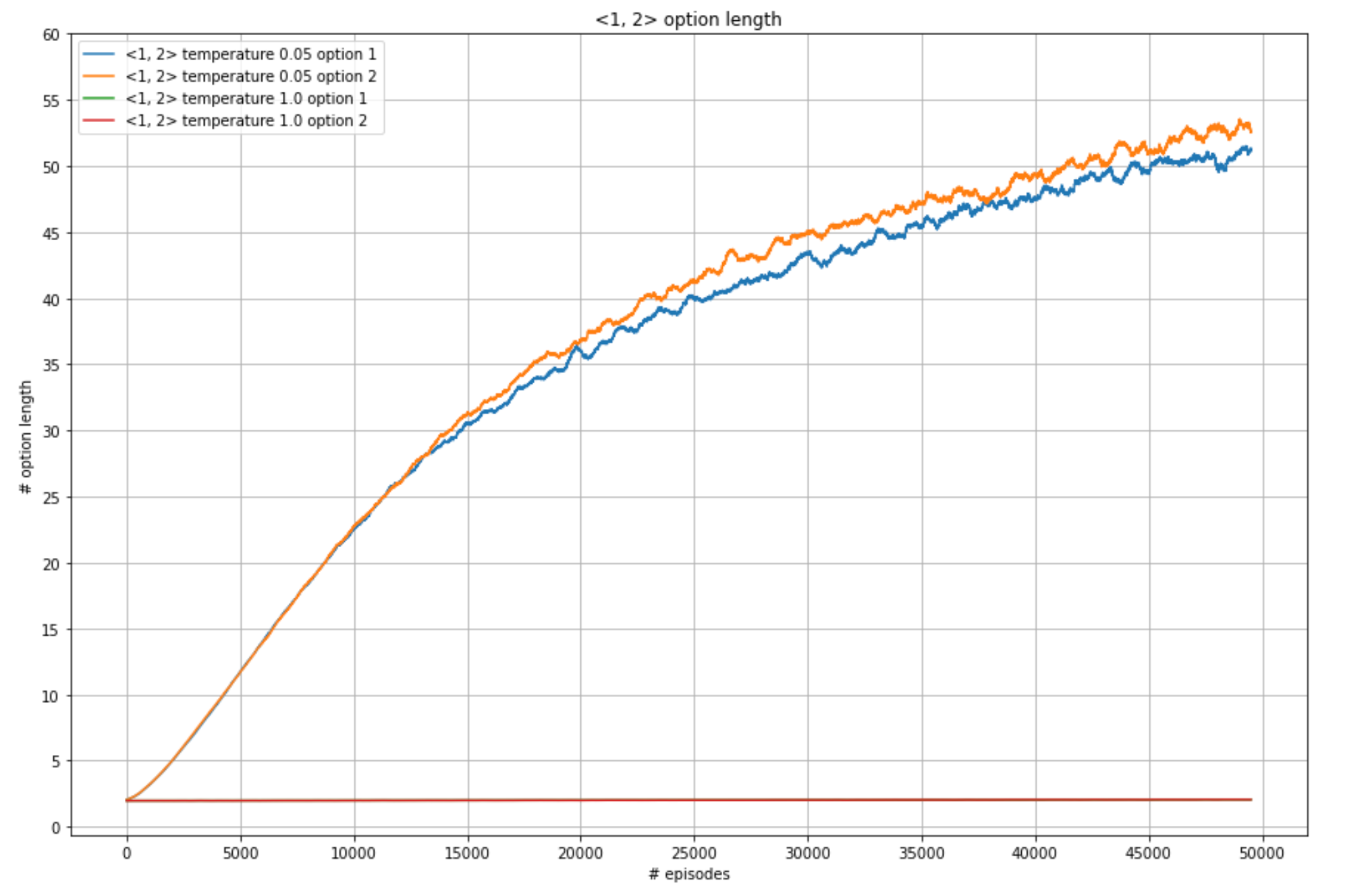}
    \includegraphics[width = \columnwidth]{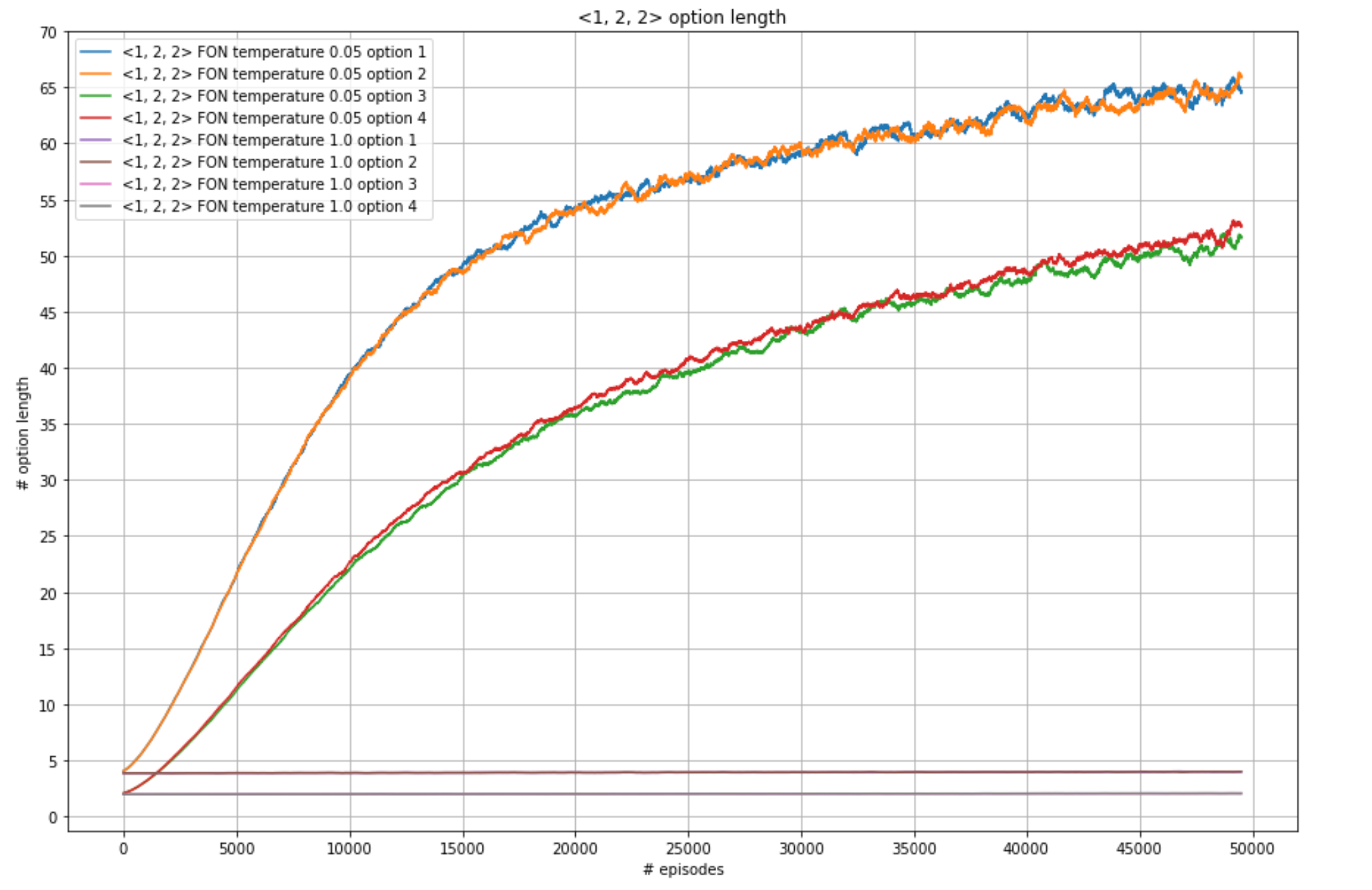}
    \caption{Option lengths compared for the $\left<1,2, \ldots, 2\right>$ models with low and normal temperature. Options are labelled by $\{0,\ldots,O\}$ where $0$ denotes the root, $1,2$ the second and third options at the first level, ... , and $O-1,O$ the lowest two options. No option becomes dominant as their length is almost equal and around half of the episode length.}
    \label{12and122optionlength}
\end{figure}

% However, as expected and clearly shown in Figure \ref{levlongrunstability}, the trade-off is a slower convergence to the best performance.

\begin{figure*}[h]
    \centering
    \includegraphics[width = \columnwidth]{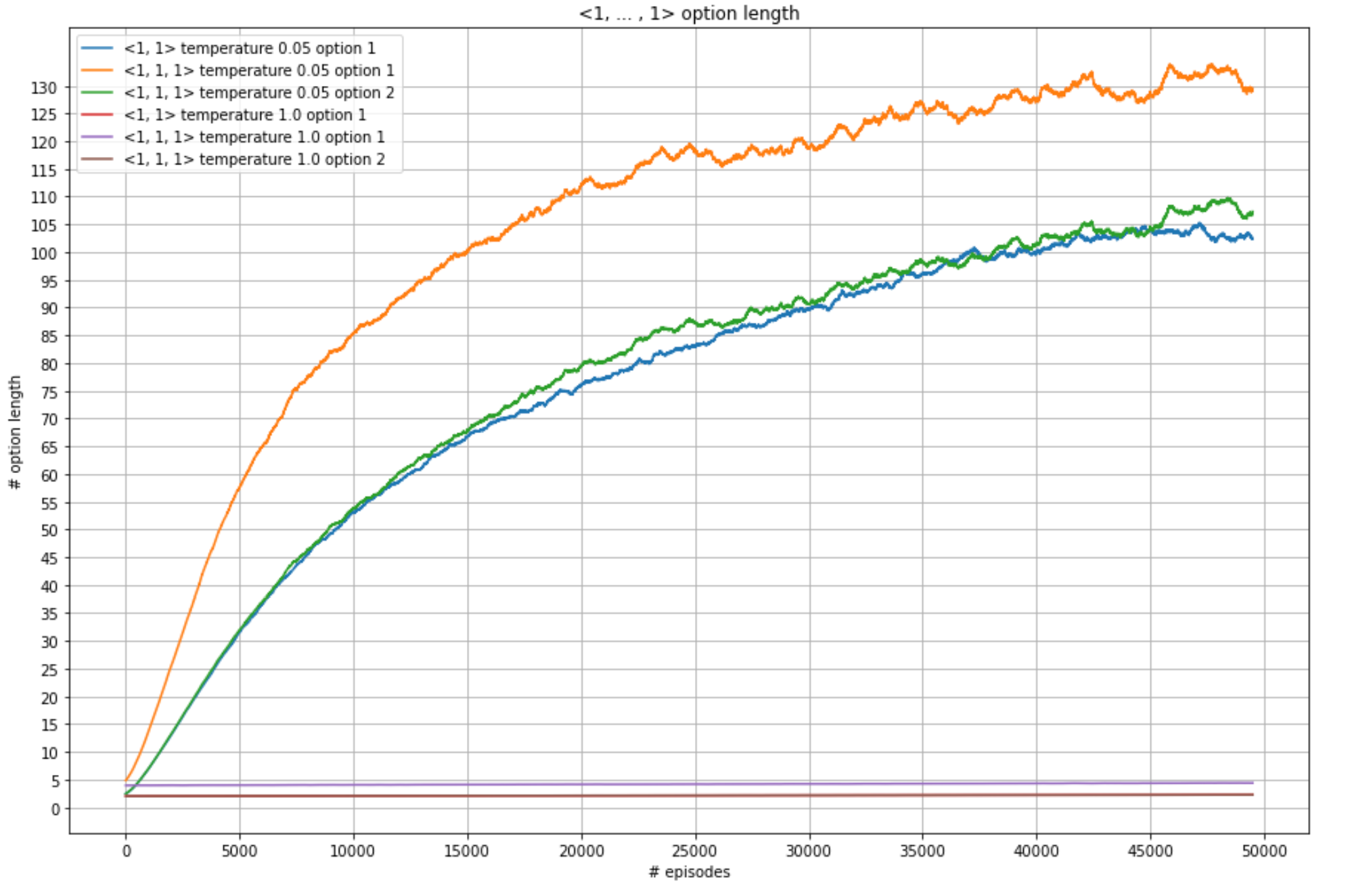}
    \includegraphics[width = \columnwidth]{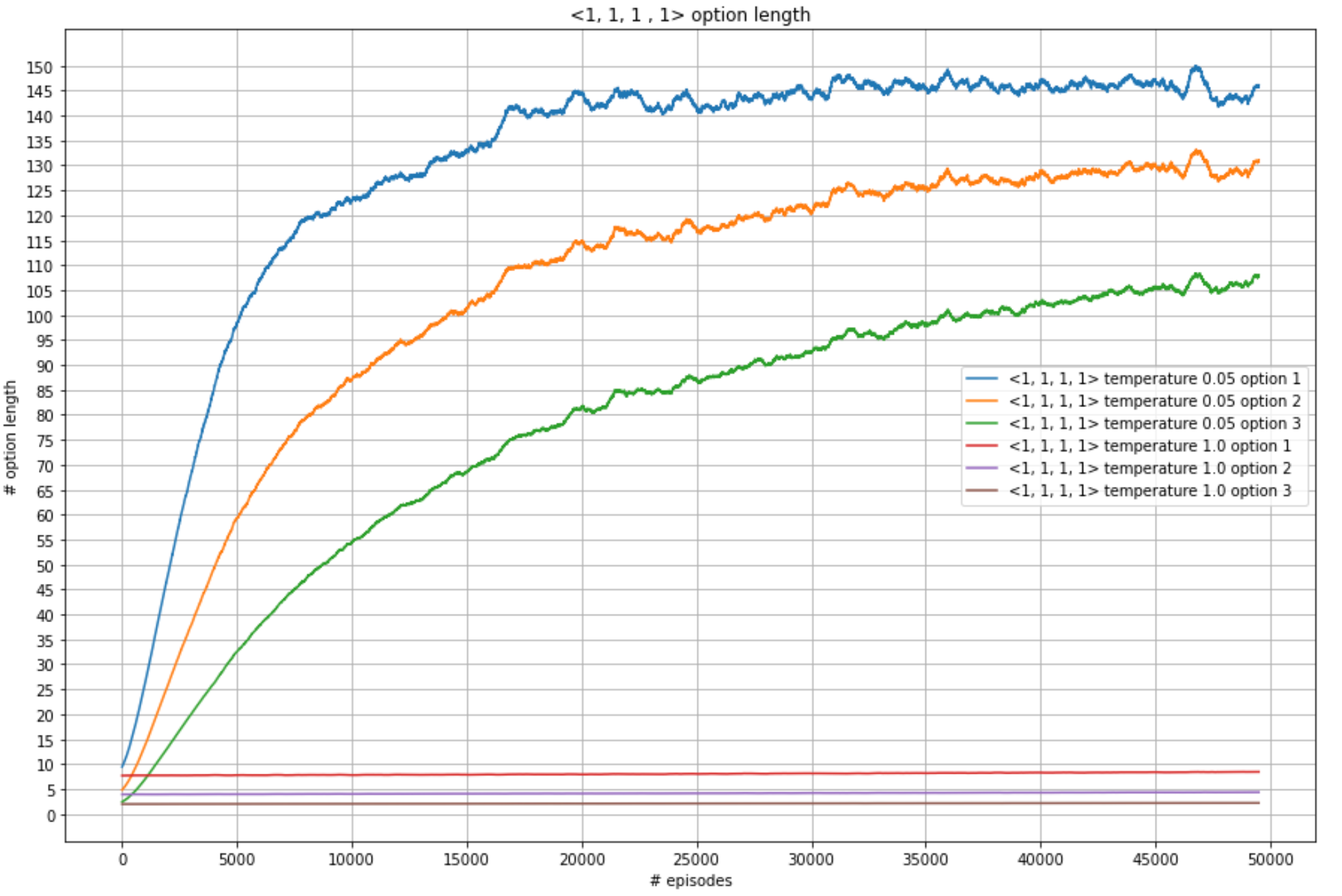}
    \caption{Option lengths compared for the $\left<1, \ldots, 1\right>$ models with low and normal temperature. Options are enumerated by $\{0,\ldots,O\}$ where $0$ denotes the root and $O$ the lowest option. Normal temperature leads to almost constant (very small) length throughout the entire training, while low temperature leads to a stable increase and eventual stabilization. Even though the option length for intermediary options has almost attained the episode length, they and the root are called back (used at least twice in an episode), as their child option length is smaller than the episode length (above 150 as shown in \cref{12slowandnormaltemp}).}
    \label{1and11and111optionlength}
\end{figure*}
\begin{figure*}[h]
    \centering
    \includegraphics[width = \columnwidth]{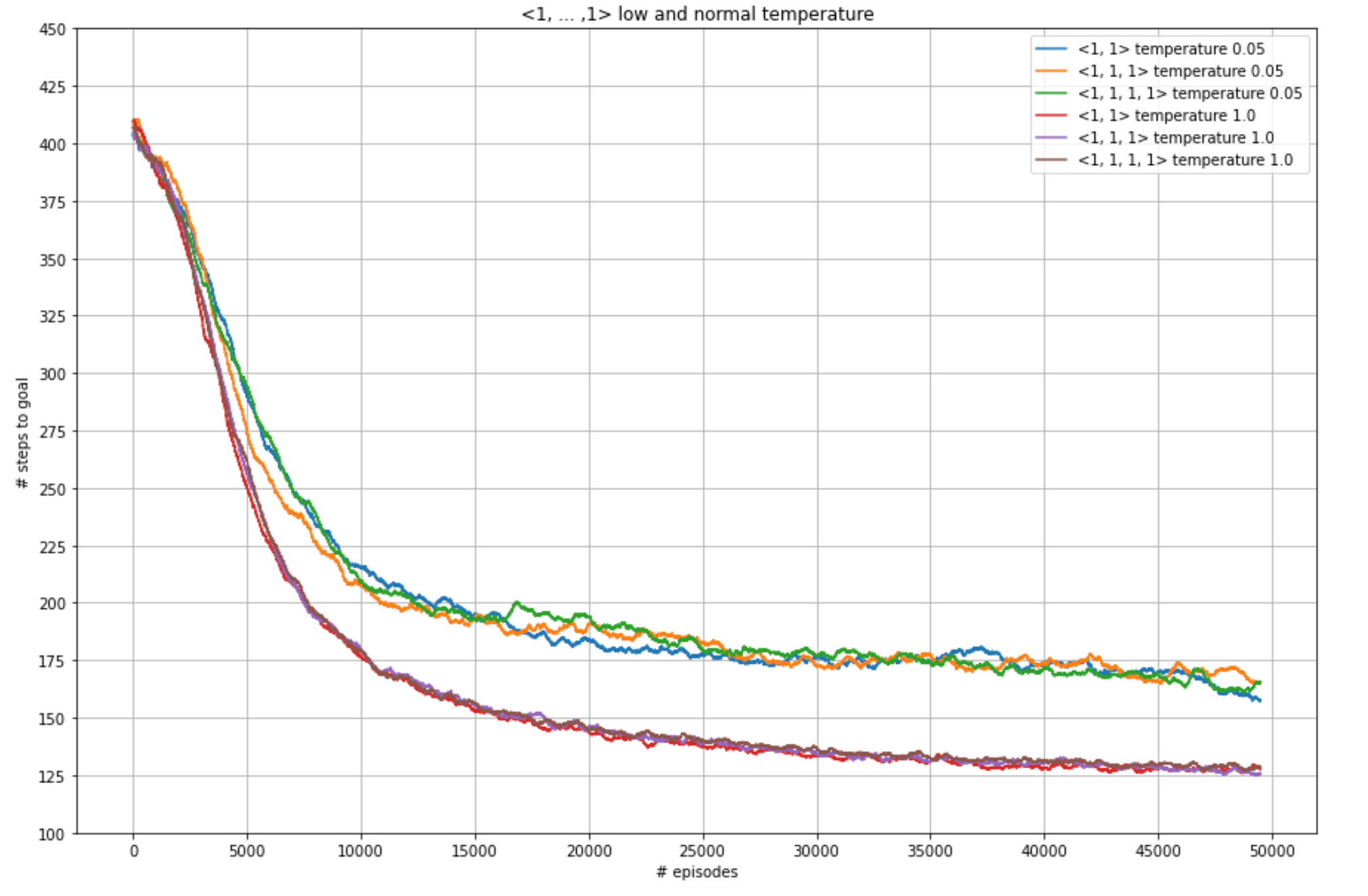}
    \includegraphics[width = \columnwidth]{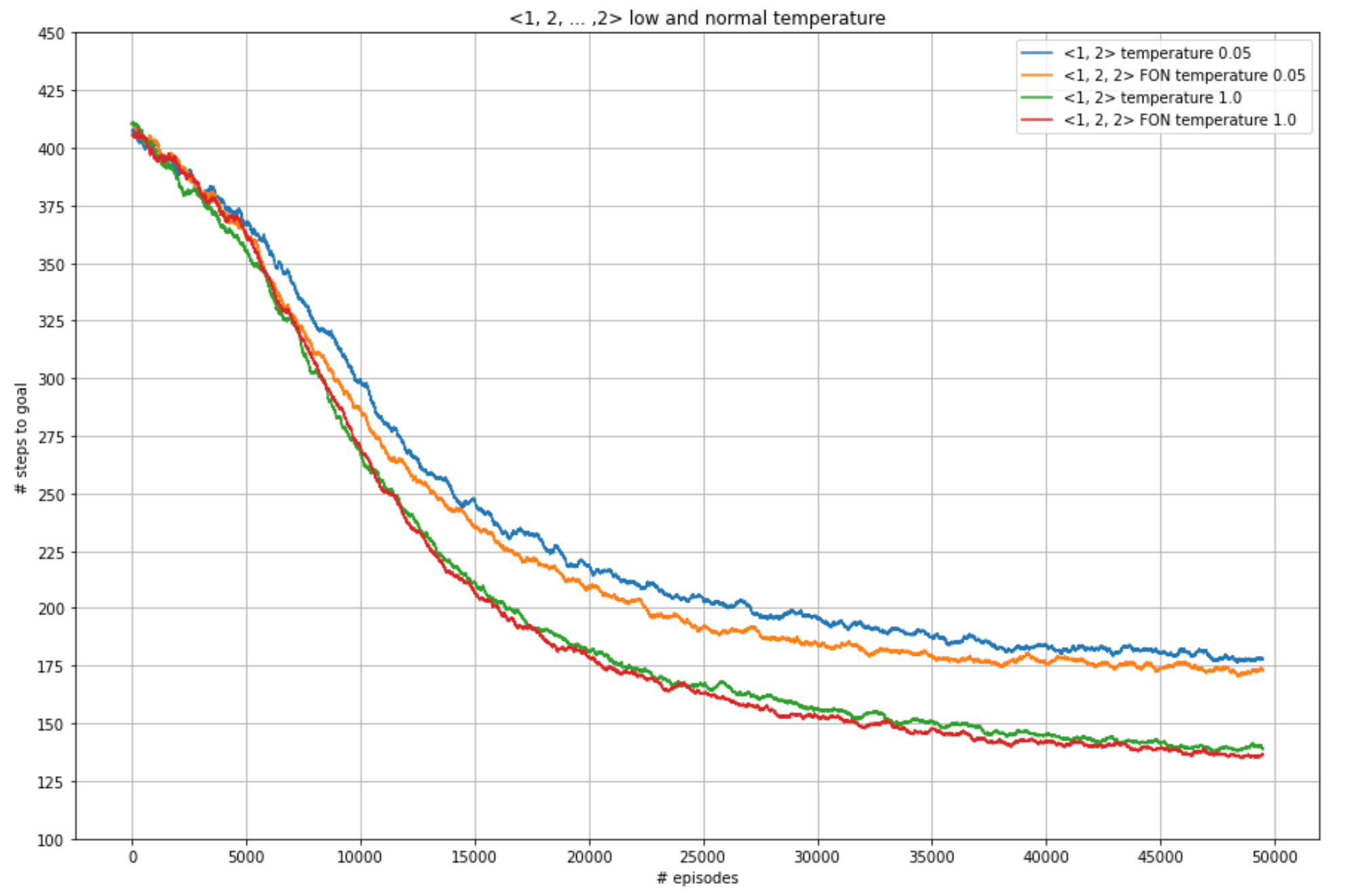}
    \caption{Lower temperature seems to slow the convergence to the best possible performance in both $\left < 1, \ldots, 1\right>$ and $\left < 1, 2, \ldots, 2\right>$ FON models.}
    \label{12slowandnormaltemp}
\end{figure*}

\section{More on Experiments}\label{appsec:more_on_exp}

% \subsection{Non-linear models}
% Despite the improvements seen in the tabular models, our preliminary experiments suggest that using non-linear functional approximators makes it hard to replicate previous results with the current algorithms, although the efficiency of the model is still increased by a factor of $O(N)$. We leave a detailed study of this case to future works and refer to the documentation of our code for more details in this case.
\textbf{Stability and option lengths. }\label{longrunsexps}
The instability of RL algorithms is well-known, especially in nonstationary settings. We run the experiments for ten times longer and with up to seven levels of hierarchy, and observe in \cref{levlongrunstability,levlongrunoptlength} that increasing the width correlates with stability while preserving the lowest level options' lengths. Also, instability in those models correlate with spikes in said lengths. We hypothesize that as the reward function changes dramatically from episode to episode (due to the goal changing), it can be easier for the model to be stable in the long term if there is more width to the network.

To test the stability, we take runs of 500000 episodes on a single random seed, for FON models of the form $\left<1,\ldots,1\right>$ and $\left<1,2,\ldots,2\right>$ in \cref{levlongrunstability}. We observe how the stability of the algorithm in the long term is generally better for the $\left<1,2,\ldots,2\right>$ models. As mentioned, we can see how the crashes in performance, which are spikes in the graphs, are correlated with those present in the option length of the lowest option(s) as shown in \cref{levlongrunoptlength}. Note that none of the lowest options are dominant and they are also lengthy in their uninterrupted execution, confirming \cref{12and122optionlength} for deeper models.

\textbf{Estimating $Q_{\beta_o}$ using the not-terminated ancestor. }\label{vomegause}Similar to how parents were used as target networks in \cref{sec:algorithm} for the policies $\pi_o$, the same approach applies for termination functions as $Q_{\beta_o}(s,\text{termination})= V_{\beta_{\text{parent}(o)}}(s)$. Considering that $\text{parent}(o)$ may have also terminated in the execution path, one could the same value for grandparents of $o$, and so on, up to the ancestor that has not terminated. Based on our experiments, this last option is a better estimate of the true $Q_{\beta_o}(s,\text{termination})$, as that is the beginning of the next phase of execution and thus, is a closer estimate to the next collected rewards (line \ref{vomegauseline} of Algorithm \ref{UpdateAndHelperFunctionsTabular}).

% \textbf{Using only parents for termination update rule. }Note a similar target term is present in the update for $\beta_o$, as the value of its actions are computed using those of the critics. Therefore one can use the same tactic there as well. Experiments have shown that using this modification (in line \ref{UpdateTerminationParentTarget} of Algorithm \ref{UpdateAndHelperFunctionsTabular}) does not reduce performance but has an impact on the rate of increase of the option length, and makes it harder to get highly lengthy options as in \cref{12and122optionlength,1and11and111optionlength}. Hence we do not apply the tactic in this case.

% (\cref{12slowandnormaltemp}). Normal temperature leads to almost constant very small option length (\cref{1and11and111optionlength}), but low temperature leads to its stable increase and eventual stabilization (\cref{levlongrunoptlength})

\begin{figure*}[h]
    \centering
    \includegraphics[width = \columnwidth]{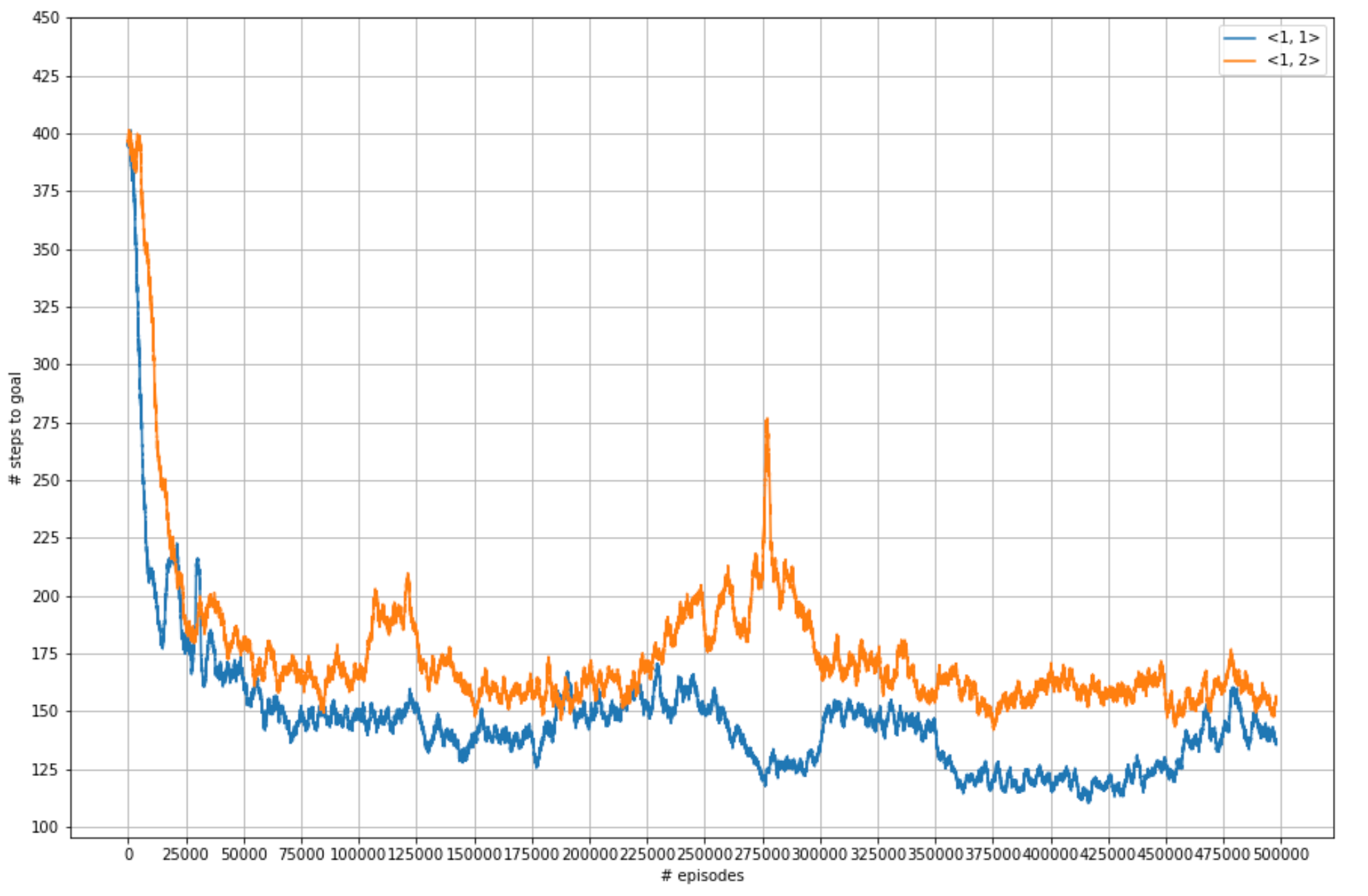}
    \includegraphics[width = \columnwidth]{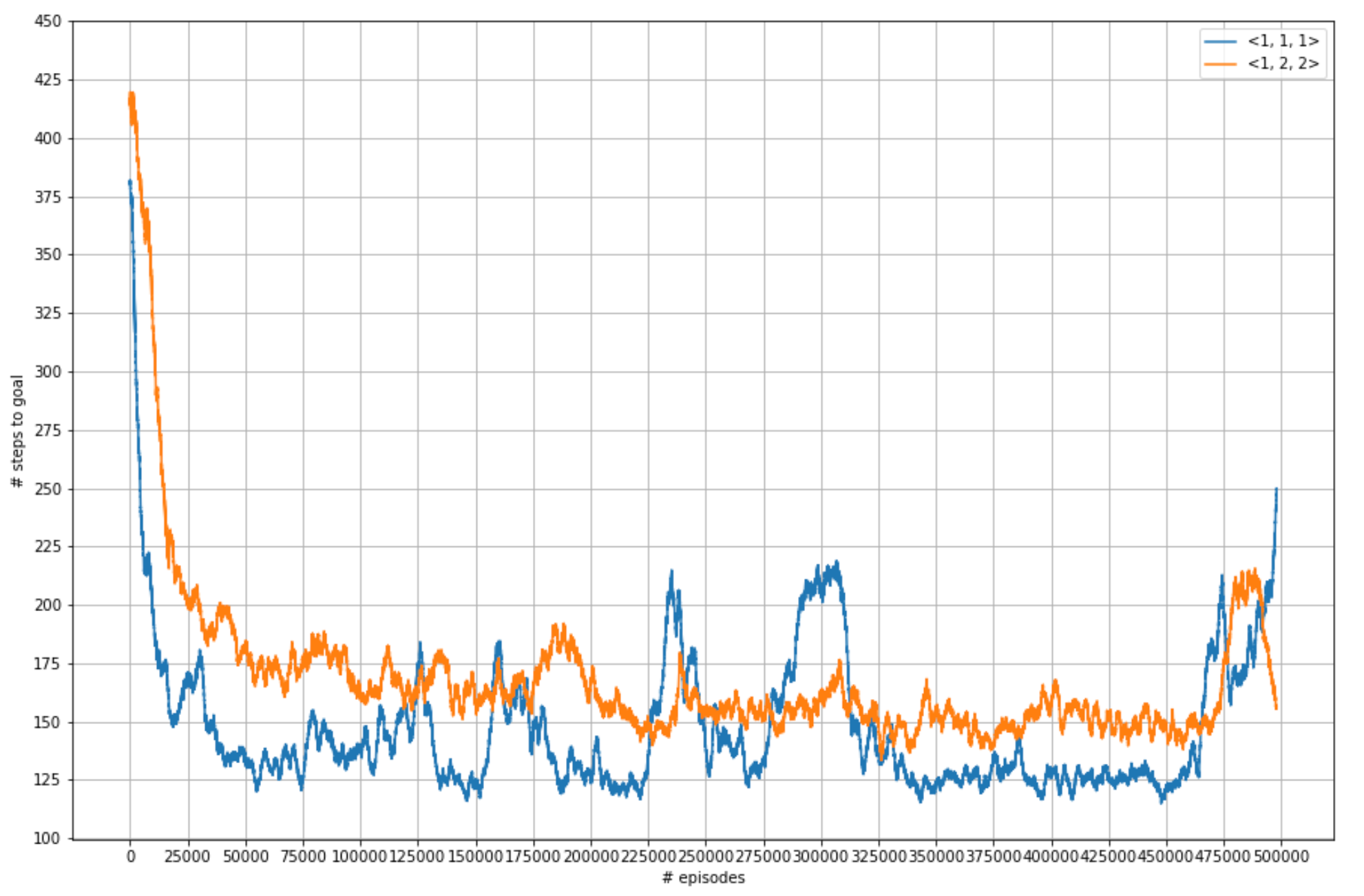}
    \includegraphics[width = \columnwidth]{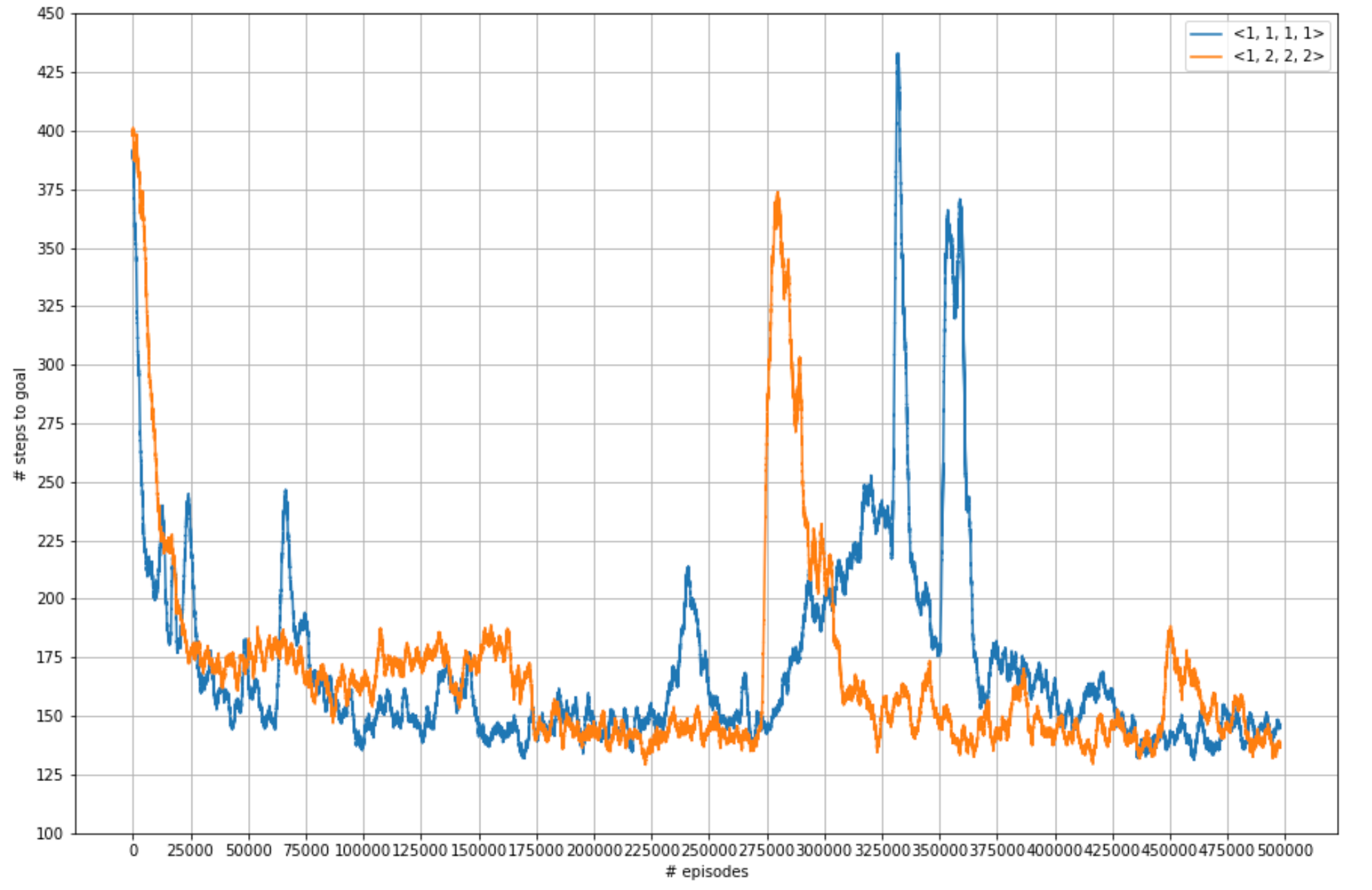}
    \includegraphics[width = \columnwidth]{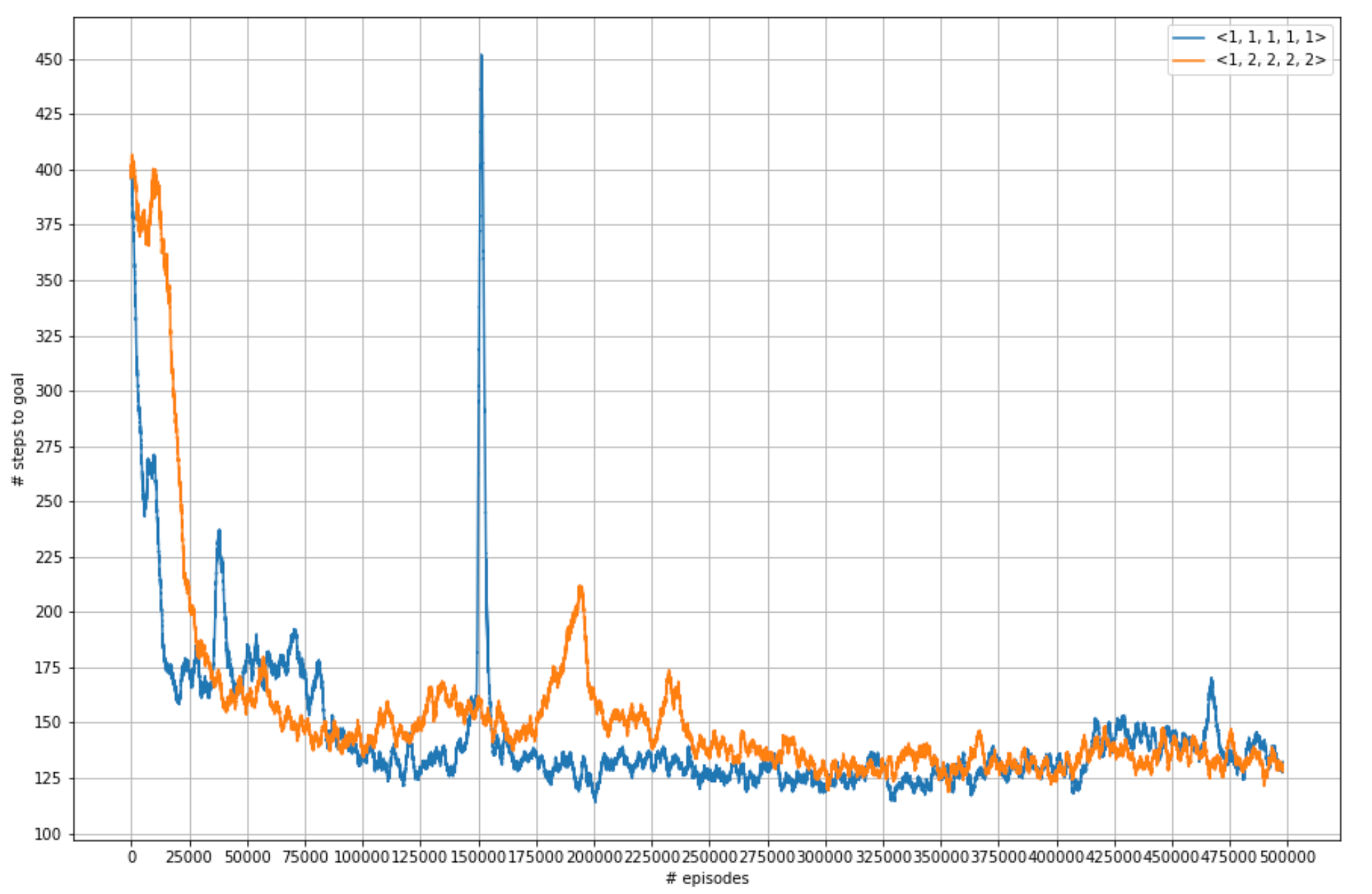}
    \includegraphics[width = \columnwidth]{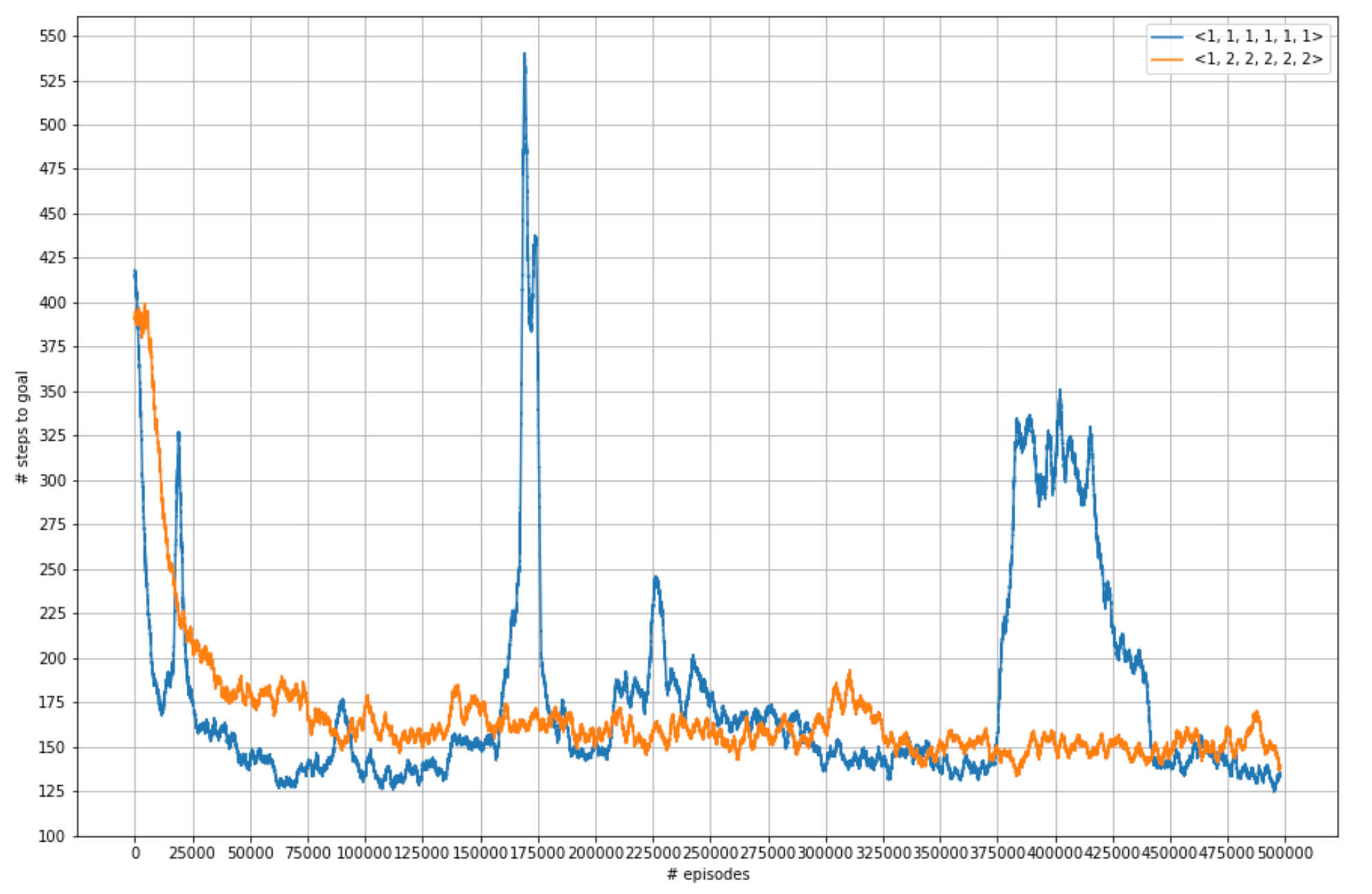}
    \includegraphics[width = \columnwidth]{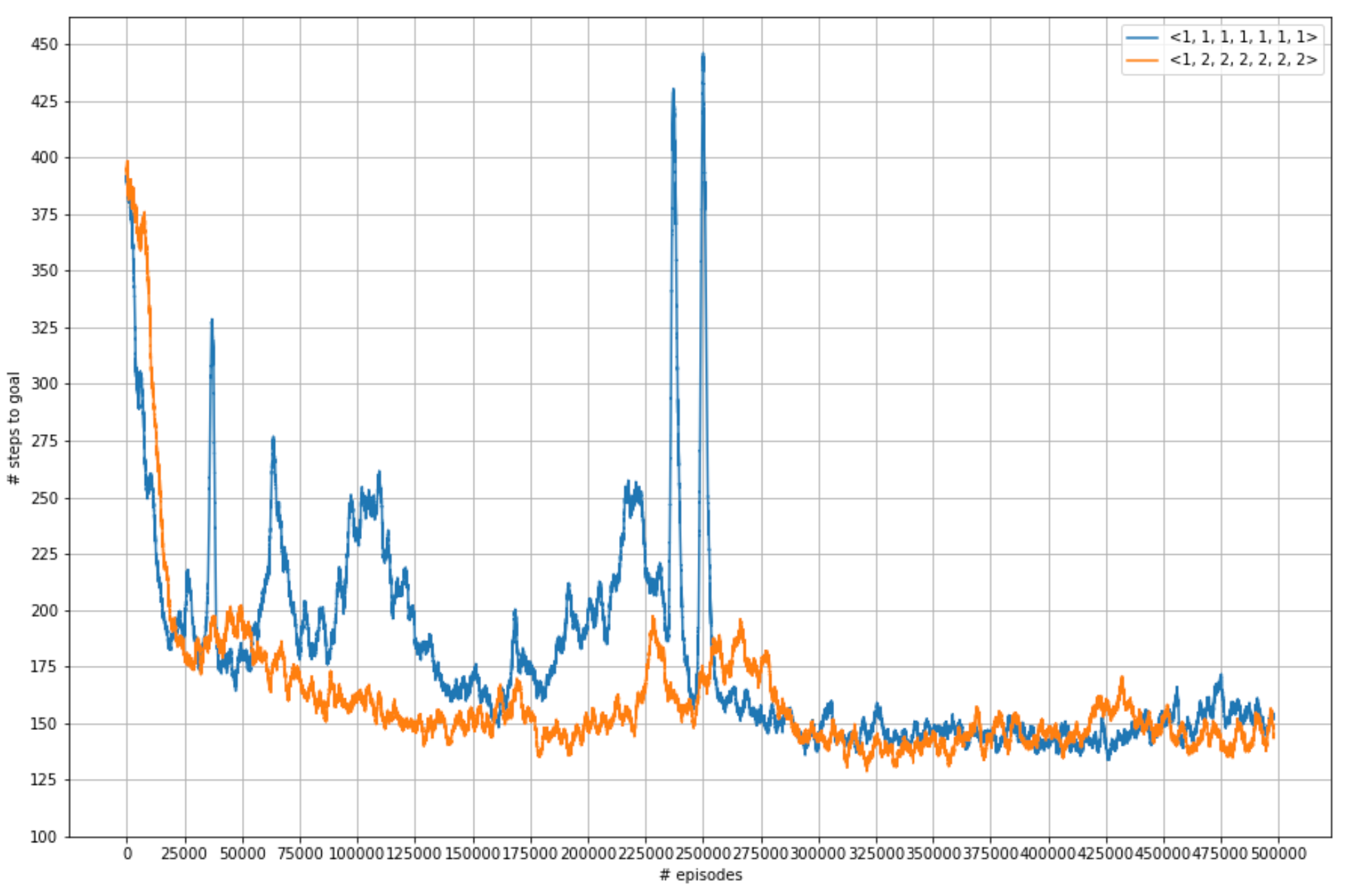}
    \caption{Performance comparison of $\left<1,\ldots,1\right>$ and $\left<1,2,\ldots,2\right>$ FON models with the same number of levels of hierarchy (two to seven). General stability increases as we switch from the former to the latter model. Note that almost all deeper models achieve the best performance of 125 steps as in \cref{fonruns}.}
    \label{levlongrunstability}
\end{figure*}

\begin{figure*}[h]
    \centering
    \includegraphics[width = \columnwidth]{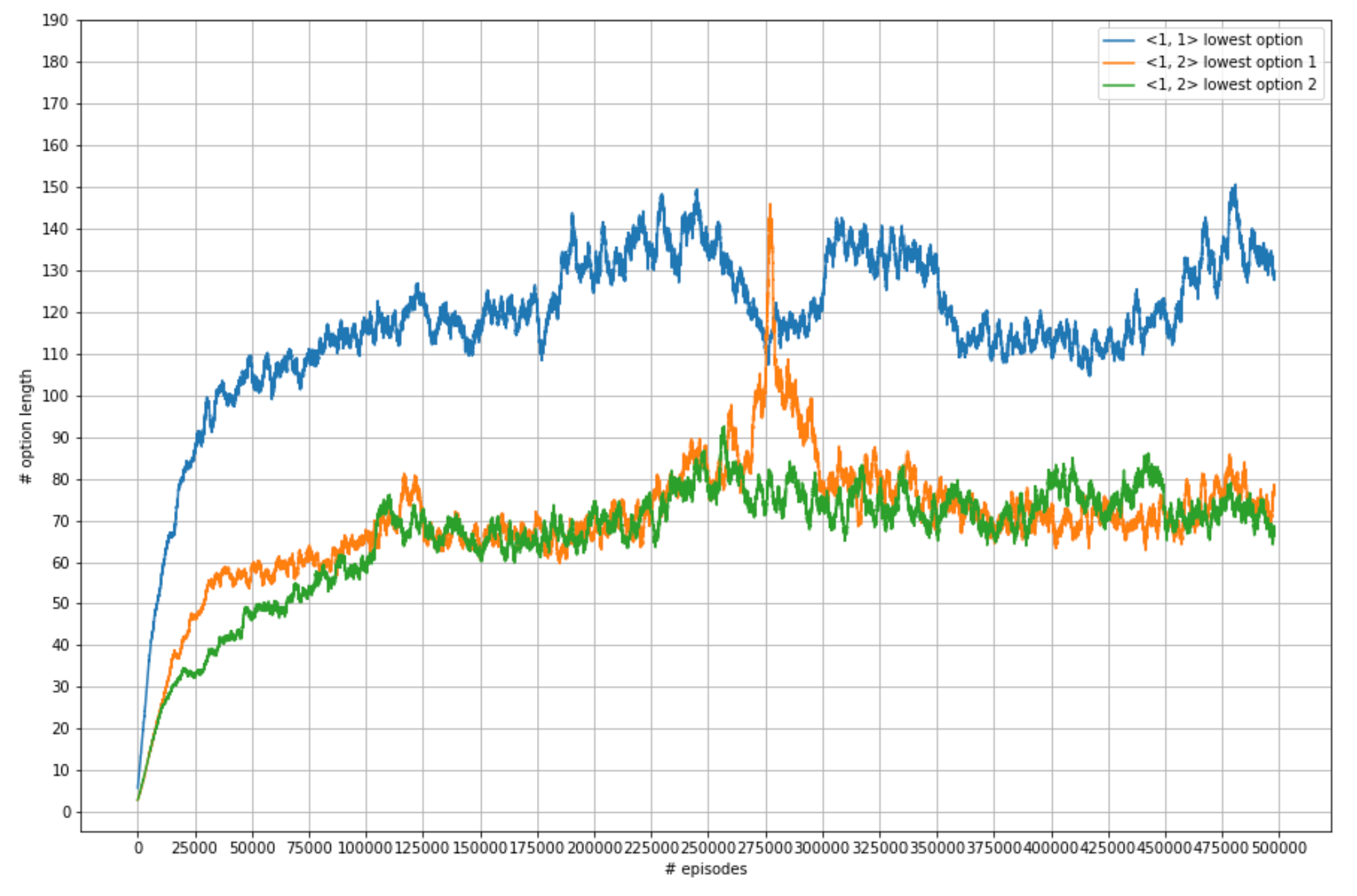}
    \includegraphics[width = \columnwidth]{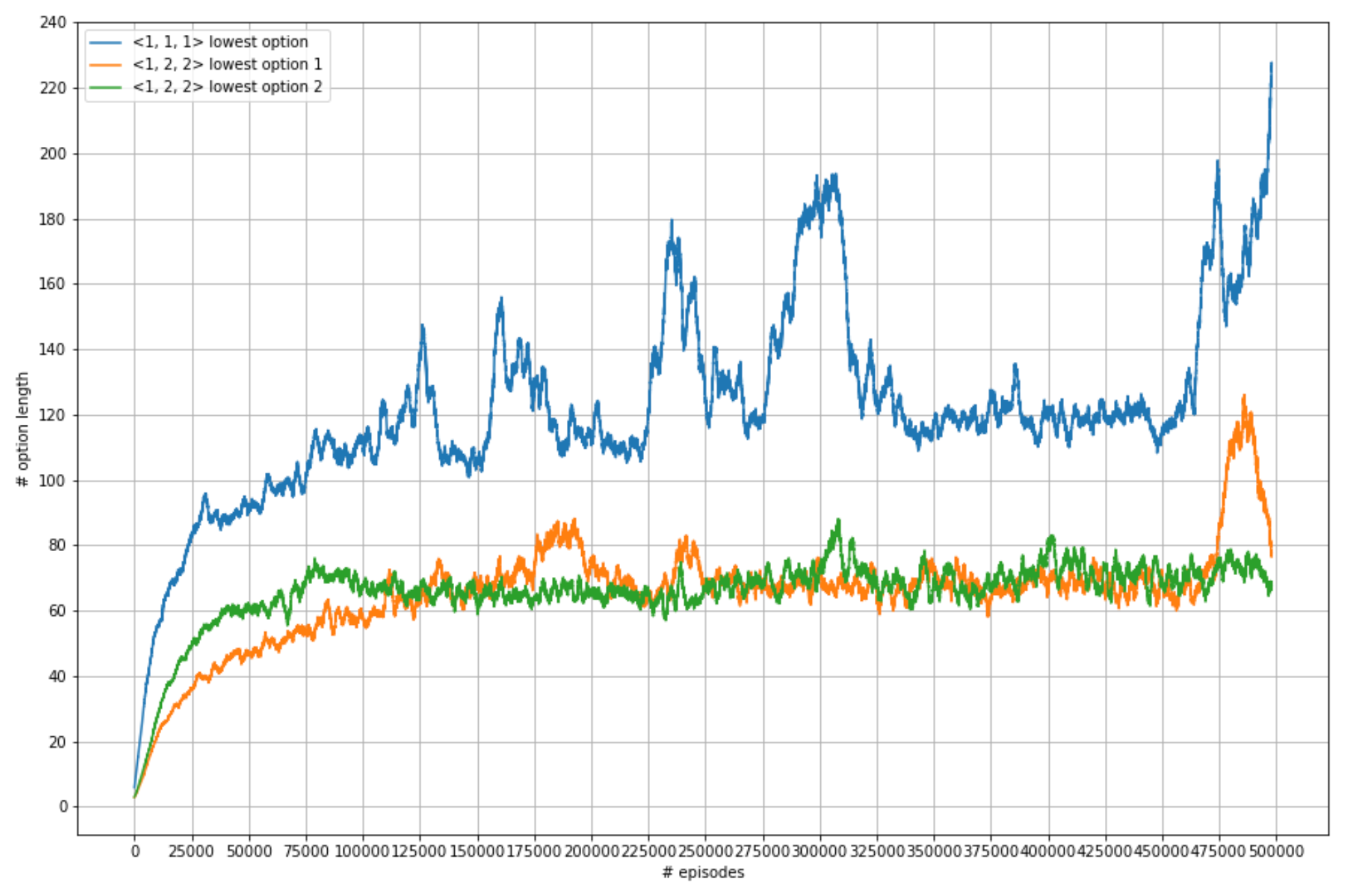}
    \includegraphics[width = \columnwidth]{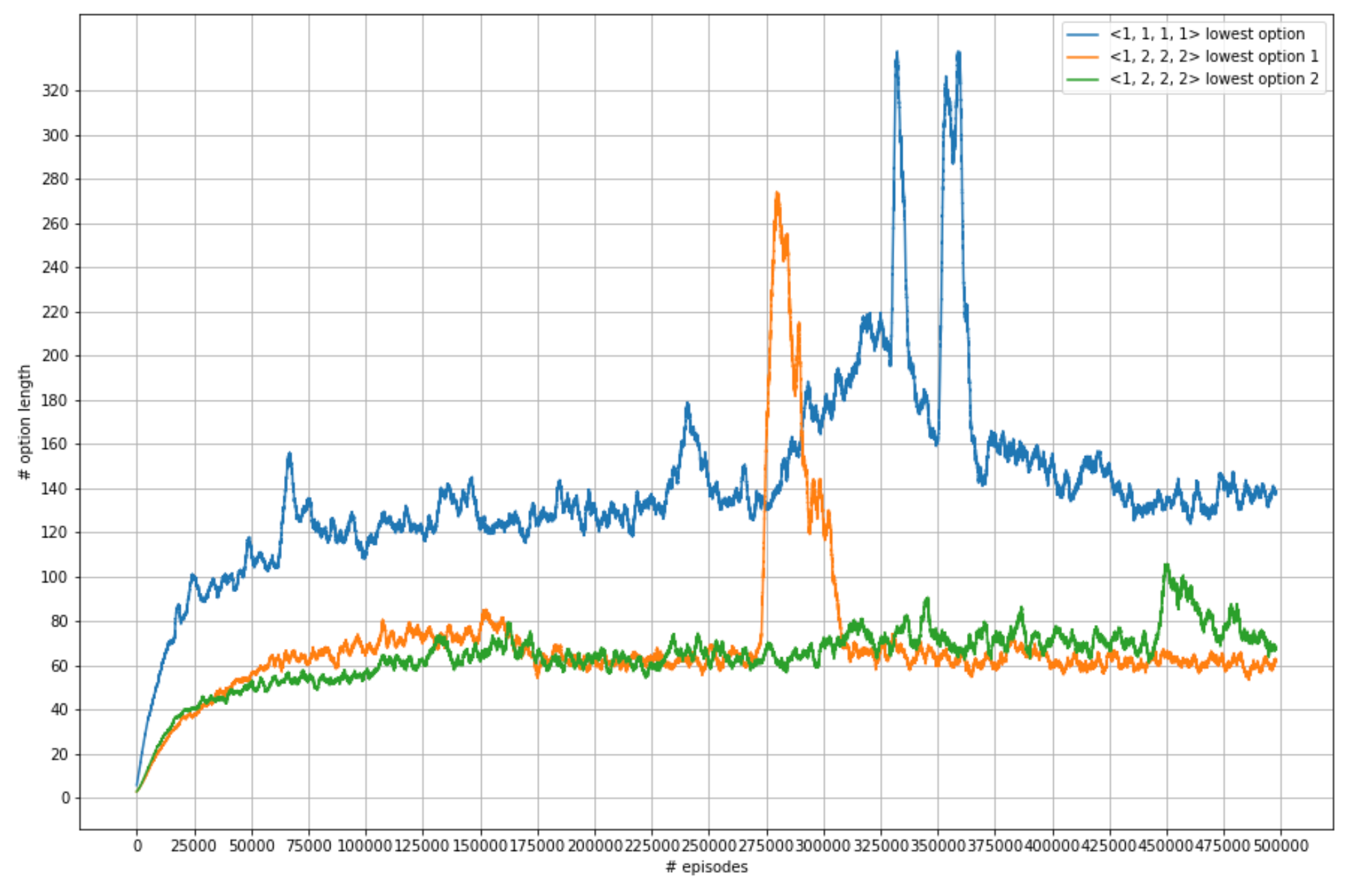}
    \includegraphics[width = \columnwidth]{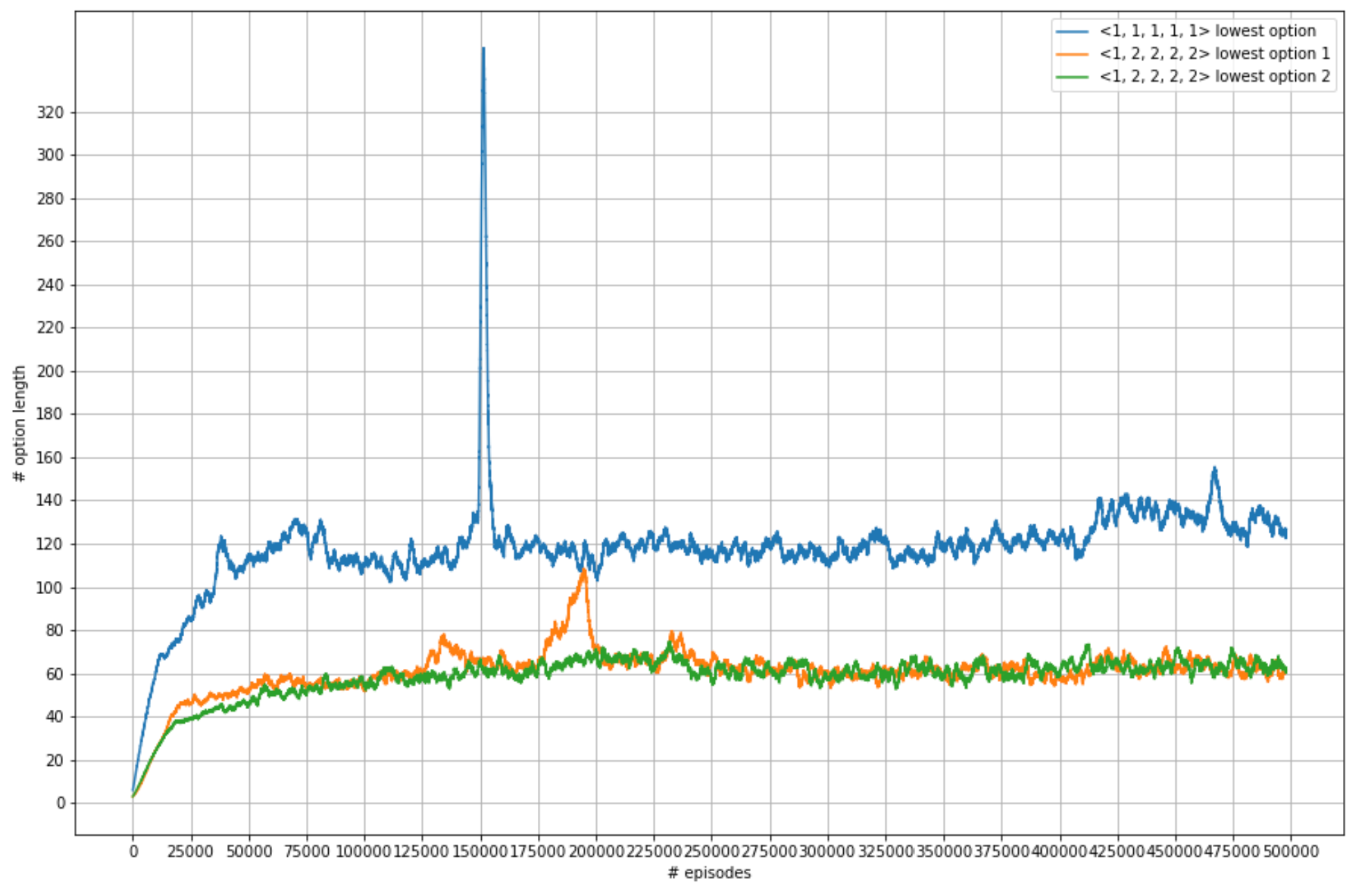}
    \includegraphics[width = \columnwidth]{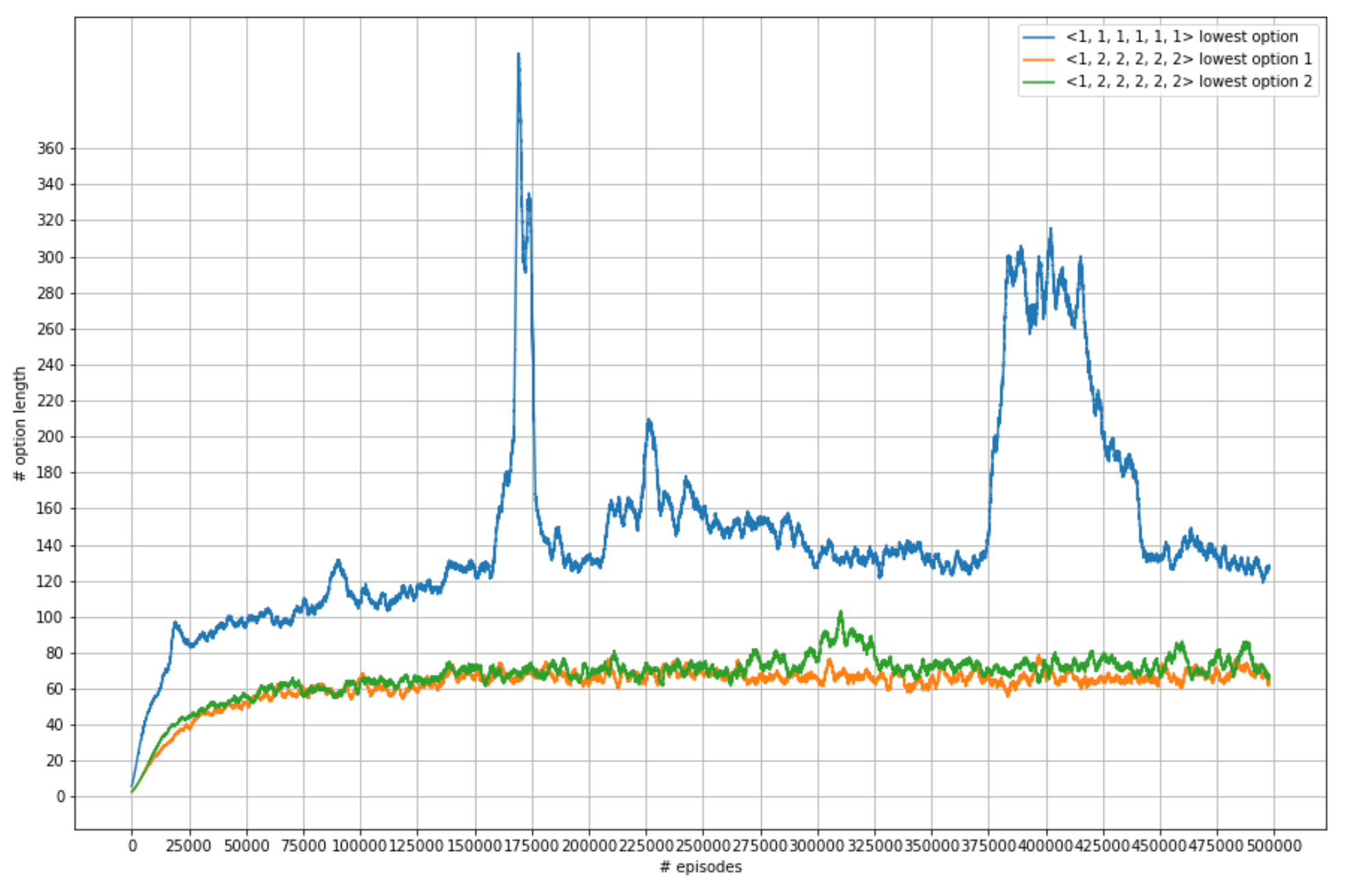}
    \includegraphics[width = \columnwidth]{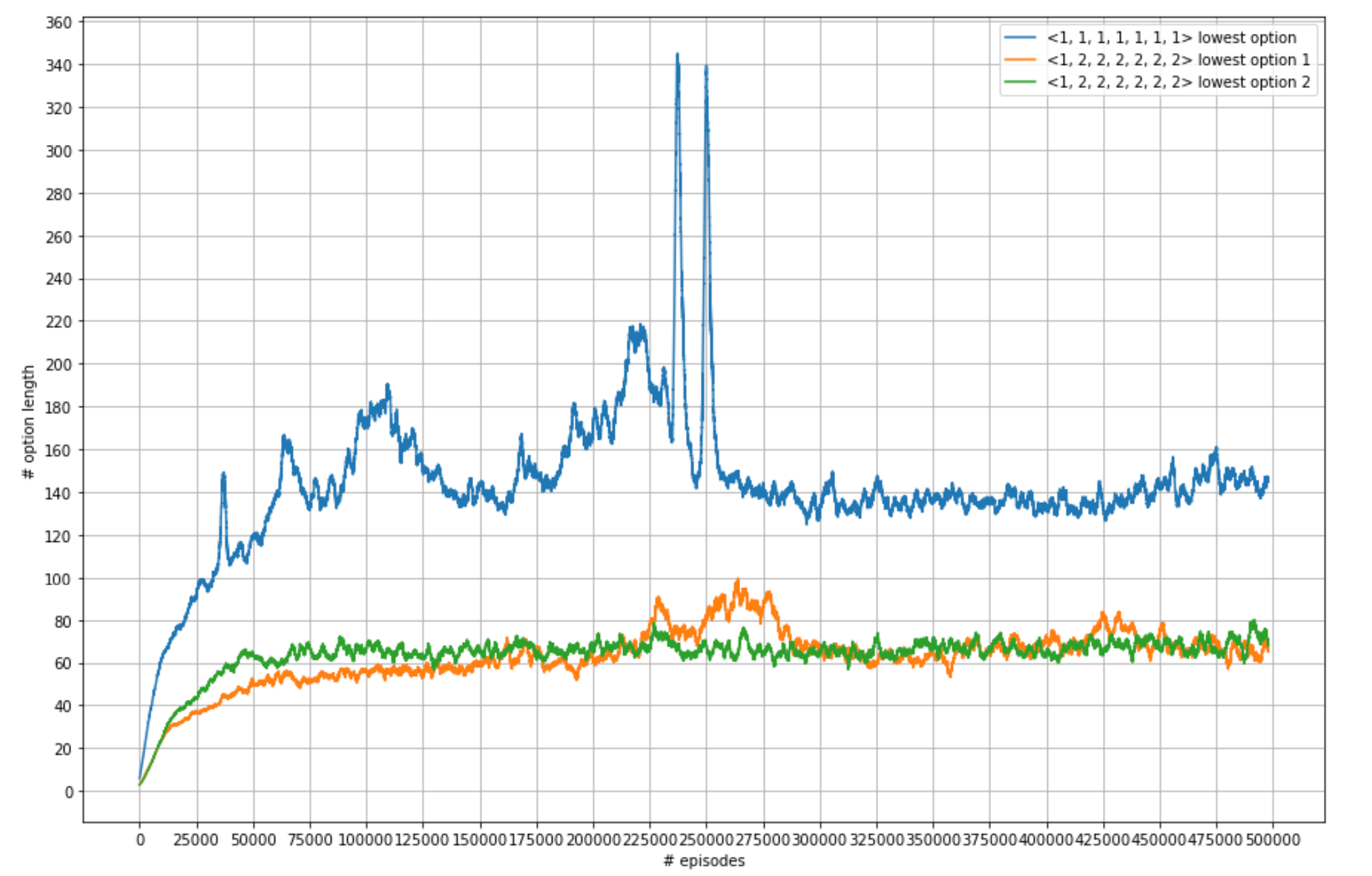}
    \caption{Lowest option(s) length comparison of $\left<1,\ldots,1\right>$ and $\left<1,2,\ldots,2\right>$ FON models with the same number of levels of hierarchy (two to seven). General stability increases as we switch from the former to the latter model. The lowest options of the latter model are enumerated by $1$ and $2$. We observe how both of the lowest options are almost equally used, with a substantial length, confirming the previous results in \cref{12and122optionlength} for even deeper models. Note the correlation between option length spikes and performance crashes in \cref{levlongrunstability}.}
    \label{levlongrunoptlength}
\end{figure*}

\end{document}